\newcommand{\diff}{\mathop{}\!\mathrm{d}}								
\newcommand{\eps}{{\varepsilon}}										
\newcommand{\ball}{{\mathcal B}}										
\newcommand{\diam}{{\text{diam}}}										
\newcommand{\image}{{\text{Image}}}										
\DeclareMathOperator*{\argmin}{\text{arg\,min}}							
\newcommand{\inv}{\ensuremath{^{-1}}}									
\newcommand{\nrm}[1]{\left\lVert#1\right\rVert}							
\newcommand{\abs}[1]{\left\lvert#1\right\rvert}							
\newcommand{\floor}[1]{\left\lfloor#1\right\rfloor}						
\newcommand{\cl}[1]{\text{cl}\left({#1}\right)}							
\newcommand{\E}[2][{}]{\mathbb E_{#1}\left[#2\right]}					
\newcommand{\PP}[1]{\mathbb P\left[#1\right]}							
\newcommand{\low}{{\text{low}}}											
\newcommand{\up}{{\text{up}}}											
\newcommand{\ra}{\rightarrow}											
\newcommand{\la}{\leftarrow}											
\newcommand{\ie}{\unskip, i.\,e.,\xspace}								
\newcommand{\eg}{\unskip, e.\,g.,\xspace}								
\newcommand{\sut}{\text{s.\,t.\,}}										
\newcommand{\wrt}{w.\,r.\,t. \xspace}									
\let\emptyset\varnothing
\newcommand{\N}{{\mathbb{N}}}											
\newcommand{\Z}{{\mathbb{Z}}}											
\newcommand{\R}{{\mathbb{R}}}											
\newcommand{\state}{s}													
\newcommand{\State}{S}													
\newcommand{\states}{\mathbb S}											
\newcommand{\action}{a}													
\newcommand{\Action}{A}													
\newcommand{\actions}{\mathbb A}										
\newcommand{\Traj}{Z}													
\newcommand{\policy}{\pi}												
\newcommand{\policies}{\Pi}												
\newcommand{\transit}{p}												
\newcommand{\reward}{r}													
\newcommand{\Value}{V}													
\newcommand{\Advan}{\mathcal A}											
\newcommand{\W}{\ensuremath{\mathbb{W}}}								
\newcommand{\act}{{\text{act}}}											
\newcommand{\crit}{{\text{crit}}}										
\newcommand{\G}{\ensuremath{\mathbb{G}}}								
\newcommand{\K}{\ensuremath{\mathcal{K}}\xspace}						
\newcommand{\KL}{\ensuremath{\mathcal{KL}}\xspace}						
\newcommand{\Kinf}{\ensuremath{\mathcal{K}_{\infty}}\xspace}			
\newcommand{\T}{\mathcal T}												
\newcommand{\normpdf}{\ensuremath{\mathcal N}}							
\newcommand{\loss}{\mathcal L}											
\newcommand{\replay}{\mathcal R}										
\newcommand{\spc}{{\,\,}}												
\newtheorem{dfn}{Definition}
\newtheorem{prp}{Proposition}
\newtheorem{thm}{Theorem}
\newtheorem{crl}{Corollary}
\newtheorem{rem}{Remark}
\newtheorem{exm}{Example}
\newenvironment{prf}[1][\unskip]{
\textbf{Proof #1.~}}
{\hfill$\blacksquare$}
\newtheorem*{dfn*}{Definition}
\newtheorem*{asm*}{Assumption}
\newtheorem*{prp*}{Proposition}
\newtheorem*{lem*}{Lemma}
\newtheorem*{thm*}{Theorem}
\newtheorem*{crl*}{Corollary}
\newtheorem*{rem*}{Remark}
\newcommand{\Treplay}{{T_\replay}}										
\newcommand{\lrcrit}{\alpha_\crit}										
\newcommand{\lract}{\alpha_\act}										
\newcommand{\eplen}{T}													
\newcommand{\Noise}{B}													
\newcommand{\noise}{b}													
\newcommand{\robj}{\reward}												
\newcommand{\goaldist}[1][{}]{d_{\G#1}}									
\newcommand{\relprob}{P_{\text{relax}}}									
\newcommand{\relfact}{\lambda_{\text{relax}}} 							
\renewcommand{\Traj}{S}
\definecolor{dgreen}{rgb}{0.0, 0.5, 0.0}
\definecolor{gray}{rgb}{0.5, 0.5, 0.5}
\newcommand{\alglinelabel}{%
  \addtocounter{ALC@line}{-1}
  \refstepcounter{ALC@line}
  \label
}
\newcommand{\pushright}[1]{\ifmeasuring@#1\else\omit\hfill$\displaystyle#1$\fi\ignorespaces}
\newcommand{\pushleft}[1]{\ifmeasuring@#1\else\omit$\displaystyle#1$\hfill\fi\ignorespaces}
\newcommand{\mquad}[1]{%
    \foreach \n in {1,...,#1} {%
        \qquad%
    }%
}
\newcommand{\citep}{\parencite}
\title{An agent design with goal reaching guarantees for enhancement of learning}
\author{
    Pavel Osinenko\textsuperscript{\rm 1},
    Grigory Yaremenko\textsuperscript{\rm 1},
    Georgiy Malaniya\textsuperscript{\rm 1}, \\
    Anton Bolychev\textsuperscript{\rm 1},
    Alexander Gepperth\textsuperscript{\rm 2} \\
	\\
    \textsuperscript{\rm 1}Skoltech, Moscow, Russia\\
    \textsuperscript{\rm 2}Fulda University of Applied Sciences, Germany\\
    Corresponding author: \\
    Pavel Osinenko, \\
    \texttt{e-mail: p.osinenko@yandex.ru}
}
\begin{document}

\maketitle

\begin{abstract}
Reinforcement learning is commonly concerned with problems of maximizing accumulated rewards in Markov decision processes.
Oftentimes, a certain goal state or a subset of the state space attain maximal reward.
In such a case, the environment may be considered solved when the goal is reached.
Whereas numerous techniques, learning or non-learning based, exist for solving environments, doing so optimally is the biggest challenge.
Say, one may choose a reward rate which penalizes the action effort.
Reinforcement learning is currently among the most actively developed frameworks for solving environments optimally by virtue of maximizing accumulated reward, in other words, returns.
Yet, tuning agents is a notoriously hard task as reported in a series of works.
Our aim here is to help the agent learn a near-optimal policy efficiently while ensuring a goal reaching property of some basis policy that merely solves the environment.
We suggest an algorithm, which is fairly flexible, and can be used to augment practically any agent as long as it comprises of a critic.
A formal proof of a goal reaching property is provided.
Comparative experiments on several problems under popular baseline agents provided an empirical evidence that the learning can indeed be boosted while ensuring goal reaching property.
\end{abstract}

\tableofcontents

\section{Background and problem statement}
\label{sec_problem}

Consider the following Markov decision process (MDP):
\begin{equation}
	\label{eqn_mdp}
	\left(\states, \actions, \transit, \robj \right),
\end{equation}
where:
\begin{enumerate}
\item $\states$ is the \textit{state space}, assumed as a finite-dimensional Banach space of all states of the given environment;
\item $\actions$ is the \textit{action space}, that is a set of all actions available to the agent, assumed to be a compact topological space;
\item $\transit : \states \times \actions \times \states \ \rightarrow \ \R$ is the \textit{transition probability density function} of the environment, that is such a function that $\transit(\bullet \mid \state_{t}, \action_{t})$ is the probability density of the state $s_{t + 1}$ at step $t+1$ conditioned on the current state $\state_{t}$ and current action $\action_{t}$;
\item $\robj : \states \times \actions \rightarrow \mathbb{R}$ is the reward function of the problem, that is a function that takes a state $\state_{t}$ and an action $\action_{t}$ and returns the immediate reward $\robj_{t}$ incurred upon the agent if it were to perform action $\action_{t}$ while in state $\state_{t}$.
\end{enumerate}

Let $(\Omega, \Sigma, \mathbb P)$ be a probability space underlying \eqref{eqn_mdp}, and $\mathbb E$ be the respective expected value operator.
The problem of reinforcement learning is to find a policy $\policy$ from some space of admissible policies $\policies$ that maximizes 
\begin{equation}
	\label{eqn_value}
	\Value^{\policy}(\state) := \E[\Action \sim \policy]{\sum_{t = 0}^{\infty}\gamma^{t}\robj(\State_{t}, \Action_{t}) \mid \State_0=\state}, \state \in \states	
\end{equation}
for some $\gamma \in (0, 1]$ called a discount factor.
The problem may be concerned with a designated initial state $\state$, a distribution thereof, or even the whole state space.
A policy may be taken as a probability density function or as an ordinary function (cf. Markov policy).
The policy $\policy^*$ that solves the stated problem is commonly referred to as the optimal policy.
An agent will be referred to as a finite routine that generates actions from the observed states.

In some problems, the reward function may have the property that the environment reach the maximal reward being in a certain state or a set thereof \eg the optimal robot routing where the maximal reward is reached at the target state.
For instance, one may assume the reward to be zero at the origin, say, $\state=0$ or some set $\G \subset \states$ and zero action, and strictly negative outside.
When $\G$ is reached by the agent, the environment may be considered \eg solved.
Actually, even if the maximal reward is not reached exactly, a sufficiently large value thereof may be considered acceptable by the user.
So if \eg the target is $\state=0$ and $\robj(0, 0) = 0$, $\robj<0$ otherwise, one may wish to consider environment solved when \eg $\State \in \G, \G \subset \states, 0 \in \G$ in some suitable statistical sense \eg in probability.
It should be noted that the user needs not to actually specify $\G$, just the reward function is sufficient to state the problem, whence a set $\G$ occurs naturally.
In general, one may drop some state components from the reward thus restricting goal reaching to a subset of state variables. 
We do not focus on these details here and assume, and, for simplicity, $\G$ to be a compact neighborhood of the origin of the subspace spanned by the state variables of interest.
Notice though, that the environment may not necessarily be solved optimally, while respecting the problem of maximizing the value $\Value$.
For instance, a robot may reach the target pose by some agent that produces suboptimal route and possibly unnecessary action effort.
Thus, the problem \eqref{eqn_mdp} may be considered as the problem of reaching (and not leaving) $\G$ optimally in the described context.
Just reaching $\G$ may be done by various classical techniques such as proportional-derivative-integral regulator \citep{Johnson2005PIDcontrol,Wang2020PIDcontrolsys}, sliding-mode regulator \citep{Perruquetti2002SlidingModeCo,vaidyanathan2017applications}, flatness-based regulators \citep{Zhu2006Flatnessbased}, energy-based regulators \citep{Spong1996Energybasedco}, funnel regulators \citep{Berger2021Funnelcontrol}, gain schedulers \cite{Rotondo2017Advancesgains} etc., but doing so optimally is a non-trivial task for which reinforcement learning is currently the most promising approach.
Now, let the goal $\G$ be a compact neighborhood of the origin, and the distance to it be denoted by $\goaldist(\state) := \inf\limits_{\state' \in \G} \nrm{\state - \state'}$.
Let $\policies_0 \subset \policies$ denote the set of policies which satisfy the following $\eta$-improbable goal reaching property, $\eta \in [0,1)$, for $\policy_0 \in \policies$:
\begin{equation}
	\label{eqn_introstab}
		\forall \state_0 \in \states \spc \PP{\goaldist(\State_t) \xrightarrow{t \ra \infty} 0 \mid \Action_t \sim \policy_0} \ge 1 - \eta.
\end{equation}
Here $\goaldist(\State_t) \xrightarrow{t \rightarrow \infty} 0$ denotes that $0$ is a limit point of $\goaldist(\State_t)$.
That is, for all starting states, $\policies_0$ ensures that the probability of failing to reach the goal is no greater than $\eta$. 
Different probabilistic conditions can also be considered (see \Cref{sec_thms}).
Notice $\policies_0$ is problem-specific.
Furthermore, under the properties of the reward function stated above \ie being zero on $\G$ under zero action and strictly negative otherwise, and if $\gamma=1$, the optimal policy $\policy^*$ is in $\policies_0$.
Hence, a generic $\policy_0 \in \policies_0$ is suboptimal by default and a reinforcement learning agent, once reached the performance of such a $\policy_0$ in terms of the value $\Value^{\policy_0}$, will actually seek to improve on it.
Our quest here is to find a way to boost the learning of the said agent while preserving the goal reaching property of $\policy_0$.


\subsection{Contribution}
\label{sub_contrib}

This work presents a method of reinforcement learning, that once achieved a goal-reaching property or, alternatively, provided a policy $\policy_0 \in \policies_0$, preserves the said property while learning, thus enhancing it.
Sample efficiency is also improved due to avoidance of trials that fail probabilistic goal reaching condition and hence fail to improve the value.
A rigorous mathematical analysis of the said property's preservation is provided.
Although not necessarily seen as a direct alternative to such popular baselines as deterministic deep policy gradient (DDPG), soft actor-critic (SAC), Twin Delayed Deep Deterministic (TD3) agent, proximal policy optimization (PPO), the demonstration of superiority in learning dynamics is given compared to the said agents, and also to REINFORCE and vanilla policy gradient (VPG) on six environments: inverted pendulum, pendulum, two-tank system, non-holonomic three-wheel robot, omnidirectional robot (omnibot) and lunar lander.
The details follow in the main text below.
The code for this work is available at \url{https://github.com/osinenkop/regelum-calf}.

\section{Related work}
\label{sec_relwork}

Common reinforcement learning approaches to the above-stated problem include various tabular dynamic programming, episodic Monte-Carlo and online methods (see \eg \cite{Sutton2018ReinforcementL,Bertsekas2019Reinforcementl,Lewis2013Reinforcementl,Vrabie2012OptimalAdaptiv,Bouzy2006MonteCarloGo,Lazaric2007Reinforcementl,Vodopivec2017montecarlotre} for overviews).
Applications range from robotics \citep{Kumar2016Optimalcontrol,Borno2013TrajectoryOpti,Tassa2012Synthesisstabi,Surmann2020DeepReinforcem,Akkaya2019Solvingrubiks} to games such as Go, chess, Shogi (also known as Japanese chess) \citep{Silver2016Masteringgame,Silver2018generalreinfor}, and even complex video games such as StarCraft II \citep{Vinyals2019Grandmasterlev}.

Some simple policy gradients \citep{Baxter2001Infinitehorizo,Sutton1999PolicyGradient,Williams1992Simplestatisti,Kakade2001naturalpolicy,Peters2006Policygradient}, like REINFORCE, are almost direct adaptations of the stochastic approximation theory, whereas advanced methods employ sophisticated step size adjustments, critic learning, batch sampling techniques etc.

Among the most profound foundational methods of reinforcement learning are DDPG \citep{Silver2014DeterministicP} and PPO \citep{Schulman2017ProximalPolicy}.
Implementation tweaks such as value, gradient, and reward clipping, as well as reward and layer scaling, have been thoroughly studied in the context of trust region policy optimization and PPO \citep{Engstrom2019Implementationa}.
It has been shown that without these tweaks, PPO's performance is similar to that of trust region policy optimization.
In the experiments of this work, the described tweaks were employed, particularly layer scaling for the actor neural network output.
Furthermore, generalized advantage estimation (GAE) in VPG and PPO was used to address common pitfalls of value loss clipping \citep{Schulman2018HighDimensiona, Andrychowicz2020Whatmatterspo, Huang2024PPOClipAttain}.
Large batch sizes have consistently been reported to produce better agent neural network convergence up until recently \citep{Nikulin2023QEnsembleOffl, Akimov2023LetOfflineRL, Nikulin2023AntiExploratio, You2020LargeBatchOpt, Ginsburg2018LargeBatchTra}.
In all studied baselines of this work, samples from the replay buffers were taken large (larger than 256), as recommended.
Besides GAE, various tuning recommendations were also employed \citep{Furuta2021CoAdaptationA, Engstrom2020Implementation, Paine2020Hyperparameter, Yang2020OfflinePolicy, Henderson2019DeepReinforcem}.
Policy iteration format critic learning, inspired by \cite{Song2019Vmpopolicy}, was reported to help stabilize policy gradients without the need for excessive entropy tuning.
This approach was employed in the studied baselines as well.
Building upon DDPG advancements, the TD3 \citep{Fujimoto2018AddressingFunc} policy gradient algorithm enhances DDPG by incorporating techniques like delayed critic updates and target policy smoothing to mitigate the overestimation bias inherent in actor-critic methods.
SAC \citep{Haarnoja2018SoftActorCritic} emerged as a robust variation of DDPG, integrating entropy maximization to boost exploration.

Exploration in reinforcement learning presents its own challenges, as highlighted by \cite{Wang2020Strivingsimpli}.
They show that squashed actions can lead to poor exploration.
In the current work, this was avoided by using a truncated normal distribution for action selection to ensure more natural exploration behaviors.
This decision is informed by the discussion in \citep{Fujita2018Clippedaction}, which suggested that without such a distribution, one might encounter problematic Q-function estimations.
The significance of $N$-step returns in enhancing learning with large replay buffers should also be noted \citep{Fedus2020Revisitingfund}.
Their research suggested that uncorrected $N$-step returns could be beneficial for replay strategy.

It is notoriously hard to tune reinforcement learning agents in general, whereas implementation details play a particular role.
The CleanRL \citep{Huang2022CleanRl} Python package offers reliable and well-recognized implementations of popular reinforcement learning algorithms.
In this work, pure PyTorch agents were implemented following the recipes stated above besides agents taken from the CleanRL package with the best ones included in the results section.
In particular, TD3 and SAC agents were selected from CleanRL.
Furthermore, REINFORCE and VPG were not present in CleanRL, whereas we studied them as well.
Up-to-date performance on some popular problems relevant to this study should be mentioned.
For instance, PPO required about 500k agent-environment interaction steps on lunar lander to converge as reported \eg by \cite{Huang2024PPOClipAttain,Park2024UnveilingSigni}. 
For the inverted pendulum, around 80--100k steps total were needed for SAC to converge as reported in \citep{He2022Representation}.
The studies in \Cref{sec_results} indicate good alignment with these numbers, with the newly suggested agent greatly outperforming the baselines in learning dynamics, while beating or performing equally on final values.

\section{Suggested approach}
\label{sec_approach}

Recall the problem setup of \Cref{sec_problem}.
Let, for the sake of motivation, the reward function be negative-definite \ie $\robj(0,0)=0, \robj<0$ otherwise.
In general, it holds, by the virtue of the Hamilton-Jacobi-Bellman (HJB) equation, that $\forall t \ge 0 \spc \Value^*_{t+1} - \Value^*_t = -\robj^*_t$, where the optimal value satisfies $\Value^*_t := \E[\Action_t \sim \policy^*(\bullet \mid \state_t), \State_{t+1} \sim \transit( \bullet \mid \state_t, \Action_t ) ]{\robj(\state_t, \Action_t) + \Value^{\policy^*} \left( \State_{t+1} \right)}$, and $\robj^*_t := \E[\Action_t \sim \policy^*]{\robj(\state_t, \Action_t)}$.
Here, it was assumed that the problem was undiscounted \ie $\gamma = 1$ for motivational purposes.
Essentially, the HJB dictates that any specified goal $\G \supset 0$ is reached, generally in a probabilistic sense, by the agent following the optimal policy because (1) $\Value^*$ is finite for any initial state, (2) $\Value^*_t$ is strictly increasing due to $r$ being negative-definite.
If a model \ie a critic, is employed \eg a $w$-weighted deep neural network $\hat \Value^w(\state)$, then due to imperfections of learning, the goal reaching property may be lost, yet it is desired.
It is tempting to ask on retaining the said property.
Unfortunately, no such guarantee can be given under learned critic and policy models in practice.
The hypothesis of this work is that if the agent is supported by a policy $\policy_0 \in \policies_0$ (see \Cref{sec_problem}), such a guarantee can be provided, which in turn improves the learning.
As discussed in \Cref{sec_problem}, just finding a policy $\policy_0 \in \policies_0$ is not a difficult task which can be accomplished by various techniques \citep{Johnson2005PIDcontrol,Perruquetti2002SlidingModeCo,vaidyanathan2017applications,Zhu2006Flatnessbased,Berger2021Funnelcontrol}.
The challenge is to find the optimal policy $\policy^*$, which is the aim of the agent.
The idea here is to build on top of a nominal agent, which could in turn be based on any reinforcement learning algorithm with a critic, and to prioritize those critic updates that exhibit the property of the kind $\hat \Value^w(\state_{t+1}) - \hat \Value^{w^\dagger}(\state_t^\dagger) > 0$, where the weight tensor $w^\dagger, \state_t^\dagger$ are yet to be determined.
Namely, the critic update is done so as to try to satisfy the said property.
Should this be the case, the optimized weights are assigned to $w^\dagger$ and the current state is assigned to $\state_t^\dagger$.
Subsequently, the action generated by the actor's policy $\policy$ is applied to the environment.
If the critic update did not satisfy the condition $\hat \Value^w(\state_{t+1}) - \hat \Value^{w^\dagger}(\state_t^\dagger) > 0$, then the action generated by $\policy_0$ is applied.
Notice that even if there are to policies $\policy_0, \policy'_0$ with a goal reaching property \ie $\policy_0, \policy'_0 \in \policies_0$, it is not the case that an arbitrary switching between $\policy_0$ and $\policy'_0$ has the goal reaching property.
Hence, combining policies is generally not trivial.
We overcome this difficulty in the herein presented approach by ``freezing'' the critic $\hat \Value^{w^\dagger}$ if it failed to satisfy $\hat \Value^w(\state_{t+1}) - \hat \Value^{w^\dagger}(\state_t) > 0$.
In \Cref{sub_contrib}, we stated that such an approach was not to be necessarily seen as a direct alternative to the existing baselines.
This is because, as long as the agent comprises a critic part, the current approach puts no restrictions on the choice of the actor and critic loss functions $\loss_\act$ and $\loss_\crit$, respectively; optimization routines; schedules to update (online or episodic Monte-Carlo); models; replay buffer accumulation and sampling algorithms etc.
Now, besides $\hat \Value^w(\state_{t+1}) - \hat \Value^{w^\dagger}(\state_t) > 0$, the critic should also satisfy the property $- \hat \kappa_\up (\nrm{\state_t}) \le \hat \Value^{w}(\state_t) \le - \hat \kappa_\low(\nrm{\state_t})$, where $\hat \kappa_\up, \hat \kappa_\low: \R_{\ge0} \ra \R_{\ge0}$ are two functions which are zero at zero, monotonically increasing and tending to infinity.
Their choice is fairly loose \eg
\begin{equation}
\begin{aligned}
	\label{eqn_calf_quadkapps}
	& \hat \kappa_\low(x) = C_\low x^{2}, \ \hat \kappa_\up(x) = C_\up x^{2}, \\
	& 0 < C_\low < C_\up \text{ arbitrary},
\end{aligned}
\end{equation}
is acceptable.
One may \eg square the output of the critic network and add a small regularization while putting any a priori fixed lower and upper bound on the weights \ie the weights lie in some compact $\W$.
The reason is that we do not want the critic to output values arbitrarily large in magnitude and, respectively, small for non-zero states (equivalently, states outside $\G$).
Finally, the condition $\hat \Value^w(\state_{t+1}) - \hat \Value^{w^\dagger}(\state_t^\dagger) > 0$ should be cast into a non-strict one with any $\bar \nu > 0$ in place of the zero.
In \Cref{alg_calfstate}, we present a fairly generic, online form of the suggested approach.

\clearpage

\begin{algorithm}
	\begin{algorithmic}[1]
		\STATE {\bfseries Setup:} MDP, nominal agent details \eg networks, actor loss function $\loss_\act$, critic loss function $\loss_\crit$, and ${\bar \nu} > 0, \hat \kappa_\low, \hat \kappa_\up, \policy_0 \in \policies_0, \text{relaxation factor }1 > \relfact \geq 0$
		\STATE \textbf{Initialize}: $\state_0, w_0 \in \W$ \sut
		$$
		-\hat \kappa_\up(\nrm{\state_0}) \leq \hat \Value^{w_0}(\state_0) \leq -\hat \kappa_\low(\nrm{\state_0})
		$$
		\STATE $w^\dagger \gets w_{0}, \state^\dagger \gets \state_0, \relprob \gets \relfact$ \alglinelabel{algline_calfinit}  
		\FOR {$t := 1, \dots \infty$}
			\STATE Take action sampled from $\policy_{t-1}(\bullet \mid \state_{t-1})$, get state $\state_t$
			\STATE Try critic update \alglinelabel{algline_calfcrit}
			\[
				\begin{array}{lrl}
					w^* \gets & \argmin \limits_{w \in \mathbb W} & \loss_\crit(w) \\
					          & \sut                              & \substack{\hat \Value^w(\state_t) - \hat \Value^{w^\dagger}(\state^\dagger) \ge \bar \nu \\ -\hat \kappa_\up(\nrm{\state_t}) \le \hat \Value^{w}(\state_t) \le -\hat \kappa_\low(\nrm{\state_t})} \\
				\end{array}
			\]
			\IF{ solution $w^*$ found} \alglinelabel{algline_calfcheck}
				\STATE $\state^\dagger \gets \state_t, w^\dagger \gets w^*$
			\ENDIF	
			
			\STATE $q \gets $ sampled uniformly from $[0, 1]$.
			\IF{ solution $w^*$ found or $q < \relprob$} 
				\STATE	Update policy: \alglinelabel{algline_calfact}
				\[
					\policy_{t}(\bullet \mid \state_{t}) \la \argmin\limits_{\policy \in \policies} \loss_\act(\policy)
				\]		
				If desired, $\policy_{t}$ may be taken to produce a random action with probability $\eps>0$		 
				\ELSE
				\STATE $\policy_{t}(\bullet \mid \state_{t}) \gets \policy_0(\bullet \mid \state_t)$	
			\ENDIF	
			\STATE $\relprob \gets \relfact \relprob$
		\ENDFOR
	\end{algorithmic}
	\caption{Suggested goal reaching agent (state-valued critic).}
	\label{alg_calfstate}
\end{algorithm}

\pagebreak

The main goal reaching result related to \Cref{alg_calfstate} is formulated in \Cref{thm_calfstabmean}.
Let $\State^{\policy}_t(\state_0)$ denote the state trajectory emanating from $\state_0$ under a policy $\policy \in \policies$ \ie $\State^{\policy}_0(\state_0) = \state_0$ and for $t > 0$ it holds that $\State^{\policy}_{t}(\state_0) \sim \transit(\bullet \ | \ \State^{\policy}_{t - 1}(\state_0), \Action_{t - 1}(\state_0))$, with $\Action_{t - 1}(\state_0) \sim \policy(\bullet \mid \State^{\policy}_{t - 1}(\state_0))$. 

\begin{thm}
	\label{thm_calfstabmean}
	Consider the problem \eqref{eqn_value} under the MDP \eqref{eqn_mdp}.
	Let $\policy_0 \in \policies_0$ have the following goal reaching property for $\G \subset \states$ \ie
	\begin{equation}
	\label{eqn_stabilization}
		\forall \state_0 \in \states \spc \PP{\goaldist(\State^{\policy_0}_t(\state_0)) \xrightarrow{t \ra \infty} 0} \ge 1 - \eta, \eta \in [0,1).
	\end{equation}
	Let $\policy_t$ be produced by Algorithm 1 for all $t \ge 0$.
	Then, a similar goal reaching property is preserved under $\policy_t$ \ie
	\begin{equation}
	\label{eqn_stabilizationcalf}
		\forall \state_0 \in \states \spc \PP{\goaldist(\State^{\policy_t}_t(\state_0)) \xrightarrow{t \ra \infty} 0} \ge 1 - \eta.
	\end{equation}	
\end{thm}

\begin{figure*}[t]
\centering
\centerline{\includegraphics[width=\textwidth]{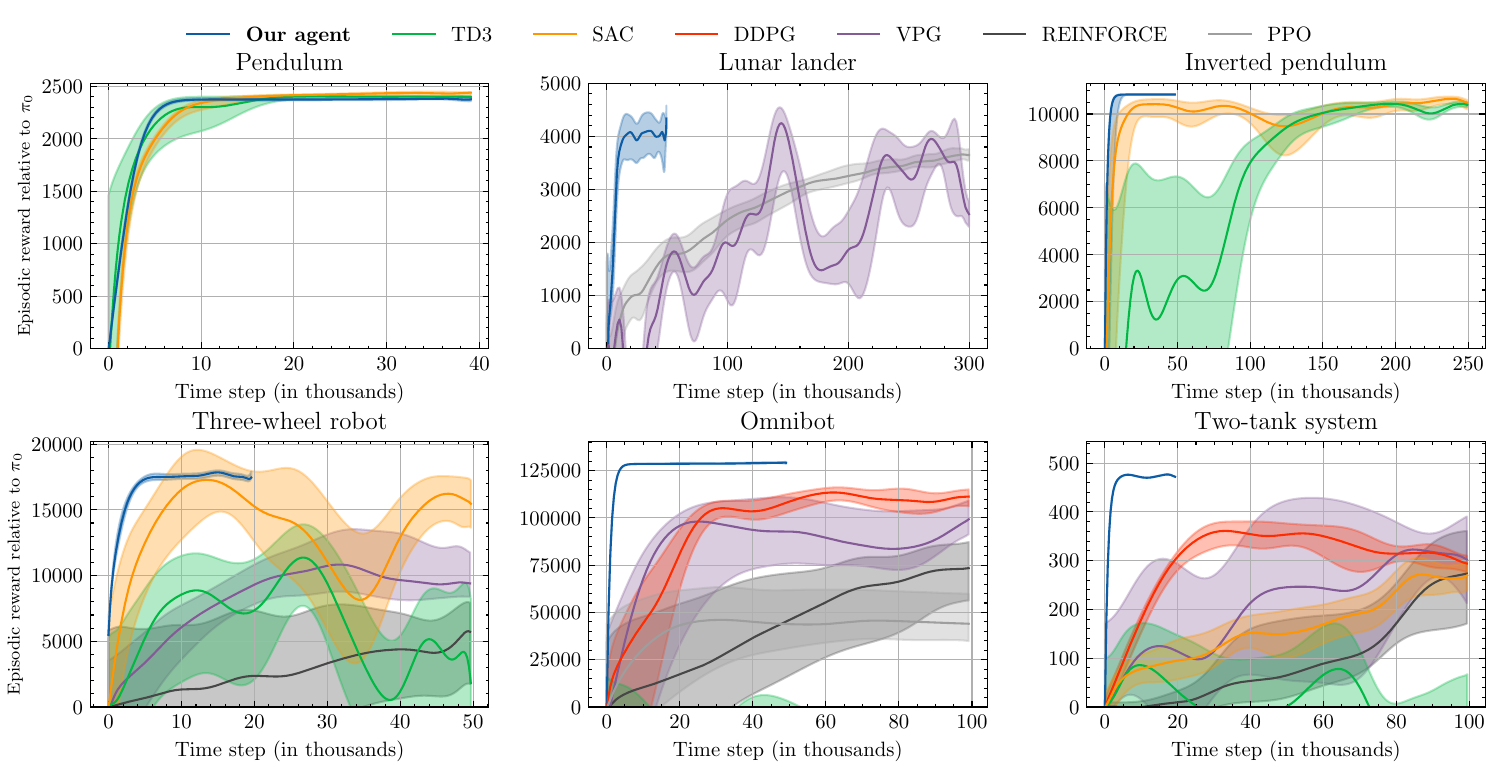}}
\caption{
	The plots show smoothed learning curves, representing accumulated episodic reward versus the number of environment
	steps. 
	The plots represent median performance over 10 random seeds relative to the baseline policy $\policy_0$, with the accumulated
	reward of $\policy_0$ subtracted for clarity. 
	Each plot is smoothed using a rolling median followed by Bezier interpolation. 
	Plots are truncated starting when policy gradient algorithms (PPO, DDPG, VPG, REINFORCE, SAC, TD3) reach $\policy_0$ performance in value, while for our agent, full learning plots are shown. 
	Full plots for all agents are located in \Cref{sec_rawresults}.
}
\label{fig_learncurves}
\end{figure*}

\begin{rem}
	\label{rem_Hoeffding}
	The policy $\policy_0$ may be either derived by techniques such as proportional-derivative-integral regulator \citep{Johnson2005PIDcontrol,Wang2020PIDcontrolsys}, sliding-mode regulator \citep{Perruquetti2002SlidingModeCo,vaidyanathan2017applications}, flatness-based regulators \citep{Zhu2006Flatnessbased}, energy-based regulators \citep{Spong1996Energybasedco}, funnel regulators \citep{Berger2021Funnelcontrol}, gain schedulers \citep{Rotondo2017Advancesgains} etc., or using environment runs over a distribution of start states $\state_0$ while employing statistical
	confidence bounds by \eg Hoeffding’s inequality in terms of the number of runs \citep{Hertneck2018Learningapprox}.
\end{rem}

\begin{rem}
	\label{rem_initrelprob}
	One may set $\relprob$ to some initial value in $[0, 1)$ instead of the assignment $\relprob \gets \relfact$ in line \ref{algline_calfinit} of \Cref{alg_calfstate}.
	Such an initial $\relprob$ and/or $\relfact$ may be considered episode-to-episode learnable parameters besides.
\end{rem}

\begin{rem}
	\label{rem_calfupdates}
	In \Cref{alg_calfstate}, the critic and actor updates (lines \ref{algline_calfcrit}, \ref{algline_calfact}) are fairly flexible.
	They may be varied to the user's choice without affecting the claim of \Cref{thm_calfstabmean}.
	In particular, if desired:
	\begin{enumerate}
		\item the constraints in line \ref{algline_calfcrit} may be shifted to the loss function $\loss_\crit$ as penalties;
		\item the constraints in line \ref{algline_calfcrit} may be omitted altogether and just checked in line \ref{algline_calfcheck} instead;
		\item the critic and policy updates (lines \ref{algline_calfcrit}, \ref{algline_calfact}) may be arbitrarily event triggered.
		In the given listing, the updates are done at every step $t$, but the they may also be triggered upon episode ends (cf. Monte-Carlo updates).
	\end{enumerate}
\end{rem}

\begin{rem}
	\label{rem_calfnogoal}
	Notice no direct specification of the goal $\G$ is required in \Cref{alg_calfstate}. 
\end{rem}

\begin{rem}
	\label{rem_calfrelaxprob}
	The variable $\relprob$ determines the probability of skipping an invocation of $\pi_0$ when it would otherwise have taken place.
	At each time step this probability decays with rate $\relfact < 1$, also referred to as ``the relaxation factor''.
	Larger values of $\relfact$ give the agent greater freedom during learning.
	While the choice of $\relfact$ does not in any way affect the goal reaching property, large values of $\relfact$ may cause large worst-case reaching times.
\end{rem}

\begin{rem}
	\label{rem_calfstabclaim}
	The claim of \Cref{thm_calfstabmean} can be rephrased in the following way: $\state_t$ will eventually get arbitrarily close to $\G$ with probability no less than that of $\policy_0$.
	This result is valuable, because it indicates that even a poorly trained reinforcement learning agent will still reach the goal with guarantee in all learning episodes.
	The hypothesis, which was supported by the experiments of this work, is that such a guarantee helps boost learning.
\end{rem}

\begin{rem}
	\label{rem_policymodel}
	In practice, the policy will usually be represented as some $\theta$-weighted model \eg softmax or Gaussian, in which case the policy update would actually mean the weight $\theta_t$ update.
	In case of a Markov policy, one directly updates the action $\action_t$.
\end{rem}

\section{Simulation experiments}
\label{sec_results}

Experiments were conducted on six environments -- pendulum, inverted pendulum, two-tank system, non-holonomic three-wheel robot, omnidirectional robot (omnibot) and lunar lander -- under seven agents, besides $\policy_0$: REINFORCE, VPG, DDPG, PPO, SAC, TD3 and the benchmarked agent by \Cref{alg_calfstate}.
We used dense quadratic rewards for each problem.
The MDPs for the tested environments were generated from discretized differential equations of dynamics. 
It was aimed to maximize the sum of rewards per episode. 
All results are presented in terms of the final sum of rewards when starting from a designated initial condition or a vicinity thereof. 
DDPG, VPG, REINFORCE, PPO, agent by \Cref{alg_calfstate} were implemented using PyTorch and CasADi \citep{Andersson2018CasADisoftware}, while SAC and TD3 were based on CleanRL \citep{Huang2022CleanRl}.
For the pendulum environment, we used the Gymnasium pendulum as a basis with modified parameters, in particular, mass and length, to simulate a pendulum close to a physical pendulum test stand by Quanser Inc. \citep{Inc2024QuanserRotary}.
It was observed that TD3 and SAC converged to a near-optimal policy in approximately same number of time steps as with the default Gymnasium environment.
The agent by \Cref{alg_calfstate} beat the baselines considerably in learning dynamics.
In experiments with the inverted pendulum, we removed episode termination upon pole fall as compared to the default Gymnasium environment thus rendering the problem slightly more challenging. 
The SAC algorithm required approximately 80,000 time steps to converge, whereas it took around 20,000 time steps to achieve convergence on the default Gymnasium environment \citep{Huang2022CleanRl}. 
Nonetheless, our agent achieved near-optimal performance in considerably fewer than 20,000 time steps.
A policy $\policy_0$ for each environment was designed by energy-based and proportional-derivative-integral regulators.
We aimed to pretrain every benchmarking agent (REINFORCE, VPG, DDPG, PPO, TD3, SAC) to achieve at least the value of $\policy_0$ in each experiment.
The learning curves of the agents, which succeeded in this task, are shown in \Cref{fig_learncurves}.
The accumulated rewards are shown relative to the one achieved upon using $\policy_0$.
Hence, the plots essentially show how much better than $\policy_0$ the agents performed.
Detailed descriptions of each problem, technical details of the agents' implementations and detailed evaluation data can be found in Technical Appendix, Sections C, D and E.
We observed learning convergence for lunar lander by PPO and VPG in about 500k agent-environment interaction steps -- similar results were reported \eg in \citep{Huang2024PPOClipAttain,Park2024UnveilingSigni}.
For the inverted pendulum environment, it was reported that SAC required less than 100k time steps to achieve convergence \citep{Huang2022CleanRl}.
Overall, our agent demonstrated superior or comparable final value across all problems, while greatly beating the baselines in learning dynamics, even when the baselines were pretrained to reproduce the value of $\policy_0$. 
\Cref{fig_learncurves} shows all learning interactions with the environment of our agent.

The code for this work is available at \href{https://github.com/osinenkop/regelum-calf}{https://github.com/osinenkop/regelum-calf}.



\subsection{Limitations}
\label{sub_limitations}

The reference to a policy $\policy_0$ is, strictly speaking, a limitation of \Cref{alg_calfstate}.
However, the are reasons to recommend \Cref{alg_calfstate}.
First, finding $\policy_0$ is commonly not difficult and can be done in a systemic way following one of the established techniques of \eg proportional-derivative-integral regulation, sliding-mode regulation, flatness-based regulation, energy-based regulation, funnel regulation, gain scheduling etc.
The biggest challenge is to optimally solve the environment, which is the target problem of reinforcement learning.
Second, it is generally not possible to guarantee goal reaching by an arbitrary agent in each episode hence many of those may be wasted.
It is asserted here that some insight is needed within the agent and a minimalistic one is the reference to $\policy_0$.
Notice that even just a combination of two goal reaching policies, say, $\policy_0, \policy'_0$, is not in general goal reaching as well.
\Cref{alg_calfstate} comprises of a non-trivial integration of $\policy_0$ into the agent by carefully saving the last successful critic update.
Third, our experiments on six problems showed that even pretrained agents, which achieved the value of the respective $\policy_0$, were mostly beaten by \Cref{alg_calfstate}.
It is thus claimed that at least for the problems, where a policy $\policy_0$ is accessible, agent design as per \Cref{alg_calfstate} may be seen beneficial.

\section{Conclusion}

This work presented an approach to improve agent learning via ensuring a goal reaching property for the stated return maximization problem.
While simply reaching a designated goal (or a goal that arises naturally from the stated reward rate) is not difficult and can be done via numerous techniques, including  proportional-derivative-integral regulator, sliding-mode regulator, flatness-based regulators, energy-based regulators, funnel regulators, gain schedulers etc., doing so optimally is the main challenge addressed by reinforcement learning.
The purpose of this work was to study how goal reaching guarantee could help the agent learning.
A fairly flexible algorithm for this sake was presented and benchmarked on six problems providing an empirical evidence for better learning under the said guarantee.
The algorithm should not necessarily be seen as a direct alternative to, say, proximal policy optimization, and can be built on top of a nominal agent.
Formal analysis of the algorithm was provided.

\setcounter{secnumdepth}{1} 

\begin{center}
{\LARGE
	\textbf{Appendix} to \\ ``An agent design \\ with goal reaching guarantees \\for enhancement of learning''
}

\vspace{1cm}

{\Large
    Pavel Osinenko\textsuperscript{\rm 1},
    Grigory Yaremenko\textsuperscript{\rm 1},
    Georgiy Malaniya\textsuperscript{\rm 1}, \\
    Anton Bolychev\textsuperscript{\rm 1},
    Alexander Gepperth\textsuperscript{\rm 2} \\
}

\vspace{2cm}

{\Large
    \textsuperscript{\rm 1}Skoltech, Moscow, Russia\\
    \textsuperscript{\rm 2}Fulda University of Applied Sciences, Germany\\
    Corresponding author: Pavel Osinenko, \texttt{e-mail: p.osinenko@yandex.ru}
}
\end{center}

\appendix


\part{Technical appendix}
\section{Formal analysis of the approach}
\label{sec_thms}

We will use Python-like array notation \eg $[0:T] = \{0, \dots, T-1\}$ or $\state_{0:T} = \{\state_0, \dots, \state_{T-1}\}$.
In particular, indexing as $0:\infty$ will refer to an infinite sequence starting at index zero.
The subscript $\ge 0$ in number set notation will indicate that only non-negative numbers are meant.
The notation ``$\cl{\bullet}$'', when referring to a set, will mean the closure.
For a set $\mathcal X$ in any normed space the number $\diam(\mathcal X)$ will mean the set diameter \ie $\sup\limits_{x, y \in \mathcal X} \nrm{x - y}$.
The notation $\ball_v(x)$ will mean a closed ball of radius $v$ around $x$ in a normed space.
When $x=0$, we will simply write $\ball_v$.
Let $\K, \Kinf: \R_{\ge0} \ra \R_{\ge0}$ denote the spaces of continuous, strictly increasing, zero-at-zero functions, and, additionally, unbounded in case of $\Kinf$.
We will also use the common notation of capital vs. small letters to distinguish between random variables and the values they attend when it is clear from context.
The notation ``$\bullet \sim \bullet$'' means the first argument is sampled from the distribution being the second argument.
To declare a variable, we will use the $:=$ sign and for a dynamic assignment \eg to an action in an algorithm, we will use the left arrow.
The notation ``$\floor{\bullet}$'' will mean taking the floor function \ie the largest integer not greater than the given number. 
The negation of an event will be denoted ``$\neg \bullet$''.

\newcommand{\Crefthmcalfstabmean}{Theorem 1}
\newcommand{\refalglinecalfcrit}{line 6:}
\newcommand{\Crefalgcalfstate}{Algorithm 1}
\newcommand{\Crefcrlcalfstabevent}{Corollary 1}

\newcommand{\reach}{\mathcal Z}											


\numberwithin{equation}{section}

\subsection{Recalls and definitions}
\label{sub_recalls}

Let us recall Algorithm 1.

\setcounter{algorithm}{0}

\begin{algorithm*}
	\begin{algorithmic}[1]
		\STATE {\bfseries Setup:} MDP, nominal agent details \eg networks, actor loss function $\loss_\act$, critic loss function $\loss_\crit$, and ${\bar \nu} > 0, \hat \kappa_\low, \hat \kappa_\up, \policy_0 \in \policies_0, \text{relaxation factor }1 > \relfact \geq 0$
		\STATE \textbf{Initialize}: $\state_0, w_0 \in \W$ \sut
		$$
		-\hat \kappa_\up(\nrm{\state_0}) \leq \hat \Value^{w_0}(\state_0) \leq -\hat \kappa_\low(\nrm{\state_0})
		$$
		\STATE $w^\dagger \gets w_{0}, \state^\dagger \gets \state_0, \relprob \gets \relfact$ \alglinelabel{algline_calfinit}  
		\FOR {$t := 1, \dots \infty$}
			\STATE Take action sampled from $\policy_{t-1}(\bullet \mid \state_{t-1})$, get state $\state_t$
			\STATE Try critic update \alglinelabel{algline_calfcrit}
			\[
				\begin{array}{lrl}
					w^* \gets & \argmin \limits_{w \in \mathbb W} & \loss_\crit(w) \\
					          & \sut                              & \substack{\hat \Value^w(\state_t) - \hat \Value^{w^\dagger}(\state^\dagger) \ge \bar \nu \\ -\hat \kappa_\up(\nrm{\state_t}) \le \hat \Value^{w}(\state_t) \le -\hat \kappa_\low(\nrm{\state_t})} \\
				\end{array}
			\]
			\IF{ solution $w^*$ found} \alglinelabel{algline_calfcheck}
				\STATE $\state^\dagger \gets \state_t, w^\dagger \gets w^*$
			\ENDIF	
			
			\STATE $q \gets $ sampled uniformly from $[0, 1]$.
			\IF{ solution $w^*$ found or $q < \relprob$} 
				\STATE	Update policy: \alglinelabel{algline_calfact}
				\[
					\policy_{t}(\bullet \mid \state_{t}) \la \argmin\limits_{\policy \in \policies} \loss_\act(\policy)
				\]		
				If desired, $\policy_{t}$ may be taken to produce a random action with probability $\eps>0$		 
				\ELSE
				\STATE $\policy_{t}(\bullet \mid \state_{t}) \gets \policy_0(\bullet \mid \state_t)$	
			\ENDIF	
			\STATE $\relprob \gets \relfact \relprob$
		\ENDFOR
	\end{algorithmic}
	\caption{Suggested goal reaching agent (state-valued critic).}
	\label{alg_calfstate}
\end{algorithm*}

\begin{rem}
	\label{rem_initrelprob}
	One may set $\relprob$ to some initial value in $[0, 1)$ instead of the assignment $\relprob \gets \relfact$ in line \ref{algline_calfinit} of \Cref{alg_calfstate}.
	Such an initial $\relprob$ and/or $\relfact$ may be considered learnable parameters.
\end{rem}

Our aim here is to prove Theorem 1.

First, let us make some recalls from the main text.

We consider the following Markov decision process (MDP):
\begin{equation}
	\label{eqn_mdp}
	\left(\states, \actions, \transit, \robj \right),
\end{equation}
where:
\begin{enumerate}
\item $\states$ is the \textit{state space}, assumed as a finite-dimensional Banach space of all states of the given environment;
\item $\actions$ is the \textit{action space}, that is a set of all actions available to the agent, assumed to be a compact topological space;
\item $\transit : \states \times \actions \times \states \ \ra \ \R$ is the \textit{transition probability density function} of the environment, that is such a function that $\transit(\bullet \mid \state_{t}, \action_{t})$ is the probability density of the state $s_{t + 1}$ at step $t+1$ conditioned on the current state $\state_{t}$ and current action $\action_{t}$;
\item $\robj : \states \times \actions \ra \mathbb{R}$ is the reward function of the problem, that is a function that takes a state $\state_{t}$ and an action $\action_{t}$ and returns the immediate reward $\robj_{t}$ incurred upon the agent if it were to perform action $\action_{t}$ while in state $\state_{t}$.
\end{enumerate}

Let $(\Omega, \Sigma, \mathbb P)$ be a probability space underlying \eqref{eqn_mdp}, and $\mathbb E$ be the respective expected value operator.
The problem of reinforcement learning is to find a policy $\policy$ from some space of admissible policies $\policies$ that maximizes 
\begin{equation}
	\label{eqn_value}
	\Value^{\policy}(\state) := \E[\Action \sim \policy]{\sum_{t = 0}^{\infty}\gamma^{t}\robj(\State_{t}, \Action_{t}) \mid \State_0=\state}, \state \in \states	
\end{equation}
for some $\gamma \in (0, 1]$ called a discount factor.

Let $\State^{\policy}_t(\state_0)$ denote the state trajectory emanating from $\state_0$ under a policy $\policy \in \policies$ \ie $\State^{\policy}_0(\state_0) = \state_0$ and for $t > 0$ it holds that $\State^{\policy}_{t}(\state_0) \sim \transit(\bullet \ | \ \State^{\policy}_{t - 1}(\state_0), \Action_{t - 1}(\state_0))$, with $\Action_{t - 1}(\state_0) \sim \policy(\bullet \mid \State^{\policy}_{t - 1}(\state_0))$.

\subsection{Proof of main theorem}
\label{sub_mainthmapdx}

\begin{thm}
	\label{thm_calfstabmean}
	Consider the problem \eqref{eqn_value} under the MDP \eqref{eqn_mdp}.
	Let $\policy_0 \in \policies_0$ have the following goal reaching property for $\G \subset \states$ \ie
	\begin{equation}
	\label{eqn_stabilization}
		\forall \state_0 \in \states \spc \PP{\goaldist(\State^{\policy_0}_t(\state_0)) \xrightarrow{t \ra \infty} 0} \ge 1 - \eta, \eta \in [0,1).
	\end{equation}
	Let $\policy_t$ be produced by Algorithm 1 for all $t \ge 0$.
	Then, a similar goal reaching property is preserved under $\policy_t$ \ie
	\begin{equation}
	\label{eqn_stabilizationcalf}
		\forall \state_0 \in \states \spc \PP{\goaldist(\State^{\policy_t}_t(\state_0)) \xrightarrow{t \ra \infty} 0} \ge 1 - \eta.
	\end{equation}	
\end{thm}

\begin{prf}

	Let $\xi_t$ be the indicator of $q < \relprob$ at step $t$ and $\xi_0 := 1$. 
	Notice that $\E{\sum_{i = 0}^{\infty}\xi_{i}} \leq \frac{1}{1 - \relfact}$, thus by Markov's inequality 
	\begin{equation}
		\forall C > 0 \spc \PP{\sum_{i = 0}^{\infty}\xi_{i} \geq C} \leq \frac{1}{C(1 - \relfact)},
	\end{equation}		
	which trivially implies
	\begin{equation}
		\PP{\sum_{i = 0}^{\infty}\xi_{i} = \infty} = 0.
	\end{equation}
    Now, let $\Omega^{\dagger} := \{ \omega \in \Omega \ | \ \sum_{i = 0}^{\infty}\xi_{i}[\omega] \neq \infty \}$, meaning the event of at most finitely many relaxed actions \ie the actions let through by the actor despite violation of the critic constraints.
	From this point on throughout this proof it is assumed that we are working in the probability space induced on $\Omega^{\dagger}$.    
    
    
	Recalling Algorithm \ref{alg_calfstate}, let us denote:
	\begin{equation}
	    \label{eqn_qdagger}
	    \begin{aligned}
	        & \hat \Lambda^\dagger := \hat \Lambda^{w^\dagger}(\state^\dagger).
	    \end{aligned}
	\end{equation}
	
	Next, we introduce:
	\begin{equation}
	    \label{calftimes}
	    \begin{aligned}
	        & \hat{\mathbb T}(\omega) := \{t \in \Z_{\ge 0} : \text{successful critic update} \}, \\
	        & \tilde{\mathbb T}(\omega) := \{t \in \Z_{\ge 0} : \xi_{t}[\omega] = 1 \}, \\
	    \end{aligned}
	\end{equation}
	The former set represents the time steps at which the critic succeeds and the corresponding induced action will fire at the next step.
	The latter set consists of the time steps after each of which the critic first failed, discarding the subsequent failures if any occurred.
	
	
	Now, let us define:
	\begin{equation*}
	    \begin{aligned}
	        & \hat \Lambda^\dagger_t :=  \begin{cases}
	                            \hat \Lambda_t, t \in \hat{\mathbb T},\\
	                            \hat \Lambda^\dagger_{t-1}, \text{ otherwise}.
	                        \end{cases}
	    \end{aligned}
	\end{equation*}
	
	Next, observe that there are at most
	\begin{equation}
	    \label{eqn_CALF2_critic_reaching}
	        \hat T := \max \left\{ \frac{ \hat \Lambda^\dagger_0 - \bar \nu}{\bar \nu}, 0 \right\}
	\end{equation}
	critic updates until the critic stops succeeding and hence only $\policy_0$ is invoked from that moment on. Hence $\hat{\mathbb T}(\omega)$ is a finite set.
	Furthermore, for $\omega \in \Omega^{\dagger}$ evidently $\tilde{\mathbb T}(\omega)$ is too a finite set.
	Notice $\hat T$ was independent of $\G'$ and in turn dependent on the initial value of the critic.

	
	Let $t^\dagger_{\text{final}}[\omega] = \sup \tilde{\mathbb T}[\omega] \cup \hat{\mathbb T}[\omega]$. 
	Now notice that since $\Action_{t^{\dagger}_{\text{final}} + t + 1} \sim \policy_0(\State_{t^{\dagger}_{\text{final}} + t})$, by assumptions of the theorem it holds that 
	\begin{equation}
	\forall \tau \geq 0 \spc \PP{\goaldist(\State_{t^{\dagger}_{\text{final}} + t}) \xrightarrow{t \ra \infty} 0 \spc | \spc t^{\dagger}_{\text{final}} = \tau} \geq 1 - \eta.	
	\end{equation}
	Now, evidently by the law of total probability
	\begin{multline}
		\PP{\goaldist(\State_{t^{\dagger}_{\text{final}} + t}) \xrightarrow{t \ra \infty} 0} = \\ \sum_{\tau=0}^{\infty}\PP{\goaldist(\State_{t^{\dagger}_{\text{final}} + t}) \xrightarrow{t \ra \infty} 0 \spc | \spc t^{\dagger}_{\text{final}} = \tau}\PP{t^{\dagger}_{\text{final}} = \tau} \geq \\ (1 - \eta)\sum_{\tau=0}^{\infty}\PP{t^{\dagger}_{\text{final}} = \tau} = 1 - \eta.	
	\end{multline}
	
	This concludes the proof.
\end{prf}

\begin{rem}
Since $\forall t\geq 0 \spc \goaldist(\State_t)$ is non-negative, $\goaldist(\State_t) \xrightarrow{t \ra \infty} 0$ is equivalent to $\liminf\limits_{t \ra \infty}\goaldist(\State_t)=0$.
\end{rem}

\begin{crl}
	If $\PP{\lim\limits_{t \ra \infty}\goaldist(\State_t) = 0 \spc \vert \spc \Action_{t} \sim \policy_0(\State_t)} \geq 1 - \eta$ and $\pi_0$ also ensures a uniform bound for $\goaldist(\State_t)$\ie
	\begin{equation}
		\exists \bar \state \spc : \spc \Action_{t} \sim \policy_0(\State_t) \implies \forall t\geq 0 \spc \goaldist(\State_t) \leq \bar s,
	\end{equation}
	then 
	\begin{equation}
		\Action_{t} \sim \policy_t(\State_t) \implies \limsup\limits_{t\ra\infty}\E{\goaldist(\State_t)} \leq \bar s\eta.
	\end{equation}
\end{crl}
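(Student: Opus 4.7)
The plan is to leverage the structure already established in the proof of \Cref{thm_calfstabmean}. On the probability-one event $\Omega^{\dagger}$ of finitely many relaxed actions, the bound $\hat T$ from \eqref{eqn_CALF2_critic_reaching} guarantees only finitely many successful critic updates, so there is an almost surely finite random time $t^{\dagger}_{\text{final}}$ beyond which the policy $\policy_t$ coincides with $\policy_0$ applied to the current state. The first step is to upgrade the ``limit point zero'' conclusion of \Cref{thm_calfstabmean} to the stronger ``limit equals zero'': conditioning on $t^{\dagger}_{\text{final}} = \tau$ and on $\State_{\tau+1}$, the stronger convergence hypothesis of the corollary yields
\begin{equation*}
	\PP{\lim_{t \ra \infty}\goaldist(\State_t) = 0 \mid t^{\dagger}_{\text{final}} = \tau, \State_{\tau+1}} \geq 1 - \eta,
\end{equation*}
and the law of total probability then delivers the unconditional bound $\PP{\lim_{t \ra \infty}\goaldist(\State_t) = 0} \geq 1 - \eta$ for the Algorithm 1 process.

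Next, I would exploit the uniform bound hypothesis. Since it is quantified over all starting states of the $\policy_0$-process, applying it to the $\policy_0$ trajectory started from the random state $\State_{t^{\dagger}_{\text{final}} + 1}$ gives $\goaldist(\State_t) \leq \bar s$ for every $t > t^{\dagger}_{\text{final}}$; moreover, evaluating the same hypothesis at $t = 0$ across all starting states forces every point of the reachable state space to satisfy $\goaldist \leq \bar s$, whence $\goaldist(\State_t) \leq \bar s$ almost surely along the entire Algorithm 1 trajectory. With both ingredients in hand, define $A := \{\lim_{t \ra \infty}\goaldist(\State_t) = 0\}$. Pointwise one has $\limsup_{t \ra \infty} \goaldist(\State_t) \leq \bar s \cdot \mathbb{1}_{A^c}$ almost surely. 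Since $\goaldist(\State_t)$ is dominated by the integrable constant $\bar s$, the reverse Fatou lemma applies and
\begin{equation*}
	\limsup_{t \ra \infty} \E{\goaldist(\State_t)} \leq \E{\limsup_{t \ra \infty} \goaldist(\State_t)} \leq \bar s \PP{A^c} \leq \bar s \eta,
\end{equation*}
which is the desired inequality.

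The hard part will be justifying that $\goaldist(\State_t) \leq \bar s$ really holds throughout the full Algorithm 1 trajectory, not merely on its $\policy_0$-tail. If one is unwilling to read the hypothesis as forcing a global bound on the reachable state space, then during the learning phase the iterates might in principle escape the $\bar s$-sublevel set, and reverse Fatou loses its dominating function. The remedy I would use is to split the expectation as $\E{\goaldist(\State_t) \mathbb{1}\{t^{\dagger}_{\text{final}} < T\}} + \E{\goaldist(\State_t)\mathbb{1}\{t^{\dagger}_{\text{final}} \geq T\}}$, bound the first term via the reverse Fatou argument above after conditioning on each $t^{\dagger}_{\text{final}} = \tau$ with $\tau < T$, and then send $T \ra \infty$ using $\PP{t^{\dagger}_{\text{final}} \geq T} \ra 0$ (which holds on $\Omega^{\dagger}$) together with uniform integrability of the learning-phase iterates, so that the residual term vanishes.
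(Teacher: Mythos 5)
Your proposal is correct and follows essentially the same route as the paper's proof: condition on the convergence event (which occurs with probability at least $1-\eta$ because the Algorithm~1 trajectory eventually coincides with a $\policy_0$ trajectory), use the uniform bound $\bar\state$ on the complement, and pass the $\limsup$ through the expectation — the paper does this via the law of total expectation, you via reverse Fatou, which is the same computation. Your extra care about whether $\goaldist(\State_t)\le\bar\state$ holds during the learning phase is a point the paper's own proof silently glosses over, and your resolution (the hypothesis, quantified over all start states, forces $\goaldist\le\bar\state$ on the whole state space at $t=0$) is the reading the paper implicitly relies on.
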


\begin{proof}
	Let $\Action_{t} \sim \policy_t(\State_t)$ and let $\Omega_0$ denote the convergence event \ie $\lim\limits_{t \ra \infty}\goaldist(\State_t) = 0$ under $\Action_{t} \sim \policy_0(\State_t), t \ge 0$ if $\Omega_0$ took place.
	By law of total expectation:
	\begin{multline}
		\limsup\limits_{t\ra\infty}\E{\goaldist(\State_t)} = \\ \limsup\limits_{t\ra\infty}\E{\goaldist(\State_t) \spc \vert \spc \Omega_0}\PP{\Omega_0} + \limsup\limits_{t\ra\infty}\E{\goaldist(\State_t) \spc \vert \spc \neg \Omega_0}\PP{\neg \Omega_0} = \\ \eta\limsup\limits_{t\ra\infty}\E{\goaldist(\State_t) \spc \vert \spc \neg \Omega_0} \leq \eta \bar \state.
	\end{multline}
\end{proof}

The above corollary is of interest for environments with a compact state-space or those for which a restriction to a bounded subset can be easily ensured.
For instance, this includes all dissipative systems \eg electrical, mechanical, hydraulic.

\begin{crl}
	If $\policy_0$ is known to reach the goal in mean \ie
	\begin{equation}
		\Action_{t} \sim \policy_0(\State_t) \implies \lim\limits_{t \ra \infty}\E{\goaldist(\State_t)} = 0,
	\end{equation}
	then $\policy_0$ also has the 0-improbable goal reaching property.
\end{crl}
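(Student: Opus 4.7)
The plan is to exploit the fact that the paper's notation $X_t \xrightarrow{t \ra \infty} 0$ is strictly weaker than ordinary sequential convergence: by the remark immediately preceding the corollary, for the non-negative sequence $X_t := \goaldist(\State_t)$ it is equivalent to $\liminf_{t \ra \infty} X_t = 0$. Under this reading, convergence in mean is in fact sufficient, because it yields an almost surely convergent subsequence, and any such subsequence forces the liminf to vanish.

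The argument proceeds in three standard steps. First, Markov's inequality combined with $X_t \ge 0$ and $\E{X_t} \ra 0$ yields $X_t \ra 0$ in probability: for every $\eps > 0$, $\PP{X_t \ge \eps} \le \E{X_t}/\eps \ra 0$. Second, the classical subsequence principle produces a deterministic subsequence $t_k \uparrow \infty$ along which $X_{t_k} \ra 0$ almost surely. Third, on that full-probability event one has
\begin{equation*}
	0 \le \liminf_{t \ra \infty} X_t \le \liminf_{k \ra \infty} X_{t_k} = 0,
\end{equation*}
so $\liminf_{t \ra \infty} X_t = 0$ almost surely. Invoking the preceding remark once more, this is exactly $\PP{\goaldist(\State_t) \xrightarrow{t \ra \infty} 0} = 1$, which is the $0$-improbable goal reaching property applied along trajectories generated by $\policy_0$.

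The main obstacle is conceptual rather than technical: one has to remember that the arrow in this paper means ``$0$ is a limit point,'' not genuine sequential convergence. Without this observation one would be tempted to target the stronger statement ``$X_t \ra 0$ almost surely,'' which is genuinely false from $L^1$-convergence alone (the standard moving-bump construction on $[0,1]$ converges in $L^1$ yet diverges pointwise everywhere). Once the weaker target is correctly identified via the remark, no machinery beyond Markov's inequality and subsequential almost sure convergence is required.
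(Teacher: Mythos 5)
Your proof is correct, but it takes a genuinely different route from the paper's. The paper argues by contradiction: writing $\Omega_0$ for the goal-reaching event, the law of total expectation gives
\begin{equation*}
	\lim\limits_{t\ra\infty}\E{\goaldist(\State_t)} \geq \liminf\limits_{t\ra\infty}\E{\goaldist(\State_t) \spc \vert \spc \neg \Omega_0}\PP{\neg \Omega_0},
\end{equation*}
and if $\PP{\neg\Omega_0}>0$ the right-hand side is strictly positive — implicitly via a Fatou-type step, since $\liminf_{t\ra\infty}\goaldist(\State_t)>0$ pointwise on $\neg\Omega_0$ and a strictly positive random variable has strictly positive conditional expectation on an event of positive probability — contradicting convergence in mean. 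You instead argue forward: Markov's inequality gives convergence in probability, the subsequence principle yields an almost surely convergent subsequence, and the liminf of the full non-negative sequence is therefore zero almost surely. Both arguments hinge on the same observation you single out, namely that the arrow denotes ``$0$ is a limit point,'' equivalently $\liminf_{t\ra\infty}\goaldist(\State_t)=0$, per the remark preceding the corollary; and both are sound. What each buys: your version is more explicit about the measure-theoretic machinery and constructs the witnessing subsequence, while the paper's is a two-line contradiction that leaves the positivity of the conditional liminf unstated. Your closing caveat — that genuine almost-sure convergence does not follow from $L^1$ convergence alone, by the moving-bump example — is a correct and worthwhile clarification of why the weak reading of the arrow is essential to the statement being true at all.
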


\begin{proof}
The law of total expectation yields a trivial proof by contradiction:
\begin{equation}
	\lim\limits_{t\ra\infty}\E{\goaldist(\State_t)} \geq \liminf\limits_{t\ra\infty}\E{\goaldist(\State_t) \spc \vert \spc  \neg \Omega_0}\PP{\neg \Omega_0},
\end{equation}
where $\neg \Omega_0$ is the event of not reaching the goal.
\end{proof}

For the next formal results, we need some definitions.

\begin{dfn}
	\label{dfn_kappaell}
	A continuous function $\beta : \R_{\ge0} \times \R_{\ge0} \ra \R_{\ge0}$ is said to belong to space $\KL$ if
	\begin{itemize}
		\item $\beta(v, \tau)$ is a $\K$ function with respect to $v$ for each fixed $\tau$. 
		\item $\beta(v, \tau)$ is a strictly decreasing function with respect to $\tau$ for each fixed $v$.
		\item For each fixed $v$ it holds that $\lim\limits_{\tau \ra \infty}\beta(v, \tau) = 0$.
	\end{itemize} 
\end{dfn}

If some of the state variables were dropped from consideration, one might restrict the state space to the one spanned by the state variables of interest.
The results derived herein are independent from it, so we simply assume $\G$ to be a compact neighborhood of the origin.
Let $\diam(\G)$ denote the number $\sup_{\state, \state' \in \G} \nrm{\state - \state'}$.

\begin{thm}
	Consider the problem \eqref{eqn_value} under the MDP \eqref{eqn_mdp}.
	Let $\policy_{0} \in \policies_0$ have the following strengthened goal reaching property for $\G \subset \states$ \ie
	\begin{equation}
		\exists \beta \in \KL \spc \forall \state_0 \in \states \spc \PP{\goaldist(\State^{\policy_0}_{t}(\state_0)) \leq \beta(\goaldist(\state_0), t)} \geq 1 - \eta.
	\end{equation}

Let $\bar \nu$ be such that $\hat\kappa_{\up}(\diam(\G)) < \frac{\bar \nu}{2}$. 	
	
	Let $\policy_t$ be produced by Algorithm 1 with the following modification:
	\begin{equation}
		\label{eqn_relprobdrop}
	 	\relprob \gets 0, \relfact \gets 0 \text{ when } \hat\kappa_\low(\nrm{\state_t}) \leq \bar \nu,
	\end{equation}
	Then,  $\policy_t$  ensures the goal reaching property \eqref{eqn_stabilizationcalf} and also uniform overshoot bound \ie
	\begin{multline}
	\forall \eps > 0  \spc \exists \delta > 0 \spc \forall \state \in \states \ \goaldist(\state) < \delta \implies \\ \spc \PP{\goaldist(\State_{t}) \xrightarrow{t \ra \infty} 0, \spc \sup\limits_{t} \goaldist(\State_{t}) \leq \eps} \geq 1 - \eta.
	\end{multline}
\end{thm}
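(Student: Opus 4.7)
The plan is to reduce the modified algorithm's trajectory, for sufficiently small $\delta$, to a pure $\policy_0$ execution almost surely, so that the strengthened $\KL$-style bound transfers directly. The goal reaching property on all of $\states$ is inherited from \Crefthmcalfstabmean{}, since the new modification only zeros $\relprob$ and $\relfact$ earlier than their natural geometric decay, leaving the Markov-inequality bound on the expected number of relaxed steps intact.

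Given $\eps > 0$, I would first choose $\delta > 0$ small enough so that two things hold simultaneously: (a) $\goaldist(\state_0) < \delta$ forces $\hat\kappa_\up(\nrm{\state_0}) < \bar\nu$, using the bound $\nrm{\state_0} \le \goaldist(\state_0) + \diam(\G)$ (valid since $0 \in \G$), the hypothesis $\hat\kappa_\up(\diam(\G)) < \bar\nu/2$, and continuity of $\hat\kappa_\up$; and (b) $\beta(\delta, 0) \leq \eps$, which is available since $\beta(\cdot, 0) \in \K$.

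The central observation is a ratchet argument. The anchor value $\hat\Value^{w^\dagger}(\state^\dagger)$ is monotonically non-decreasing across successful updates, and starts at $\hat\Value^{w_0}(\state_0) \ge -\hat\kappa_\up(\nrm{\state_0}) > -\bar\nu$ by (a). Any successful critic update at some later time $t$ would require $\hat\Value^{w^*}(\state_t) \geq \hat\Value^{w^\dagger}(\state^\dagger) + \bar\nu > 0$, contradicting the upper envelope constraint $\hat\Value^{w^*}(\state_t) \leq -\hat\kappa_\low(\nrm{\state_t}) \leq 0$. Thus no critic update can succeed at any step. Meanwhile, $\hat\kappa_\low(\nrm{\state_0}) \leq \hat\kappa_\up(\nrm{\state_0}) < \bar\nu$, so the modification \eqref{eqn_relprobdrop} immediately zeroes both $\relprob$ and $\relfact$, and they remain zero thereafter. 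Combining these two facts, the else-branch of \Crefalgcalfstate{} fires at every iteration and $\policy_t = \policy_0$ throughout.

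It follows that every action is drawn from $\policy_0$, so the trajectory coincides in distribution with $\State^{\policy_0}_t(\state_0)$. The strengthened property then yields, with probability at least $1-\eta$, $\goaldist(\State_t) \leq \beta(\goaldist(\state_0), t) \leq \beta(\delta, 0) \leq \eps$ uniformly in $t$, delivering both the overshoot bound and, via $\lim_{t \to \infty} \beta(v, t) = 0$, the convergence $\goaldist(\State_t) \to 0$. The main obstacle I anticipate is essentially bookkeeping: pinning down the precise loop point at which the modification's condition is evaluated, so that $\relprob$ is already zero when the first critic attempt occurs; once that is fixed, the ratchet observation makes the remainder routine.
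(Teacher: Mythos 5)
Your proposal is correct and follows essentially the same route as the paper's proof: reduce to the case $\tilde{\mathbb T} = \hat{\mathbb T} = \emptyset$ via the bound $\nrm{\state_0} \le \goaldist(\state_0) + \diam(\G)$ and the observation that the anchored critic value starting above $-\bar\nu$ forbids any successful update, so the trajectory is a pure $\policy_0$ rollout and the $\KL$ bound transfers. The only cosmetic difference is that you bound the overshoot directly by $\beta(\delta,0) \le \eps$ using that $\beta(\cdot,0) \in \K$, whereas the paper routes the same step through Sontag's factorization $\beta(\delta,t) \le \kappa_{\text{dist}}(\delta)\kappa_{\text{time}}(e^{-t})$; your version is slightly more elementary and equally valid.
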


\begin{prf}

	Let $\xi_t$ be the indicator of $q < \relprob$ at step $t$ and $\xi_0 := 1$. 
	Notice that $\E{\sum_{i = 0}^{\infty}\xi_{i}} \leq \frac{1}{1 - \relfact}$, thus by Markov's inequality 
	\begin{equation}
	\forall C > 0 \spc \PP{\sum_{i = 0}^{\infty}\xi_{i} \geq C} \leq \frac{1}{C(1 - \relfact)},
\end{equation}		
	which trivially implies
	\begin{equation}
	\PP{\sum_{i = 0}^{\infty}\xi_{i} = \infty} = 0.
	\end{equation}
    Now, let $\Omega^{\dagger} := \{ \omega \in \Omega \ | \ \sum_{i = 0}^{\infty}\xi_{i}[\omega] \neq \infty \}$, meaning the event of at most finitely many relaxed actions \ie the actions let through by the actor despite violation of the critic constraints.

	From this point on throughout this proof it is assumed that we are working in the probability space induced on $\Omega^{\dagger}$.    
	
	Recalling Algorithm \ref{alg_calfstate}, let us denote:
	\begin{equation}
	    \label{eqn_qdagger}
	    \begin{aligned}
	        & \hat \Lambda^\dagger := \hat \Lambda^{w^\dagger}(\state^\dagger).
	    \end{aligned}
	\end{equation}
	
	Next, we introduce:
	\begin{equation}
	    \label{calftimes}
	    \begin{aligned}
	        & \hat{\mathbb T}(\omega) := \{t \in \Z_{\ge 0} : \text{successful critic update} \}, \\
	        & \tilde{\mathbb T}(\omega) := \{t \in \Z_{\ge 0} : \xi_{t}[\omega] = 1 \}, \\
	    \end{aligned}
	\end{equation}
	The former set represents the time steps at which the critic succeeds and the corresponding induced action will fire at the next step.
	The latter set consists of the time steps after each of which the critic first failed, discarding the subsequent failures if any occurred.
	
	Now, let us define:
	\begin{equation*}
	    \begin{aligned}
	        & \hat \Lambda^\dagger_t :=  \begin{cases}
	                            \hat \Lambda_t, t \in \hat{\mathbb T},\\
	                            \hat \Lambda^\dagger_{t-1}, \text{ otherwise}.
	                        \end{cases}
	    \end{aligned}
	\end{equation*}
	
	Next, observe that there are at most
	\begin{equation}
	    \label{eqn_CALF2_critic_reaching}
	        \hat T := \max \left\{ \frac{ \hat \Lambda^\dagger_0 - \bar \nu}{\bar \nu}, 0 \right\}
	\end{equation}
	critic updates until the critic stops succeeding and hence only $\policy_0$ is invoked from that moment on. Hence $\hat{\mathbb T}(\omega)$ is a finite set.
	Furthermore, for $\omega \in \Omega^{\dagger}$ evidently $\tilde{\mathbb T}(\omega)$ is too a finite set.
	Notice $\hat T$ was independent of $\G'$ and in turn dependent on the initial value of the critic. 
	
	Let $t^\dagger_{\text{final}}[\omega] = \sup \tilde{\mathbb T}[\omega] \cup \hat{\mathbb T}[\omega]$. Now notice that since $\Action_{t^{\dagger}_{\text{final}} + t + 1} \sim \policy_0(\State_{t^{\dagger}_{\text{final}} + t})$.
	Therefore, by the assumptions of the theorem 
	\begin{equation}
		\forall \tau \geq 0 \spc \PP{\goaldist(\State_{t^{\dagger}_{\text{final}} + t}) \leq \beta(\goaldist(\State_{t^{\dagger}_{\text{final}}}), t) \spc | \spc t^{\dagger}_{\text{final}} = \tau} \geq 1 - \eta.	
	\end{equation}
	Now, evidently by the law of total probability
	\begin{multline}
		\PP{\goaldist(\State_{t^{\dagger}_{\text{final}} + t}) \xrightarrow{t \ra \infty} 0} = \\ \sum_{\tau=0}^{\infty}\PP{\goaldist(\State_{t^{\dagger}_{\text{final}} + t}) \xrightarrow{t \ra \infty} 0 \spc | \spc t^{\dagger}_{\text{final}} = \tau}\PP{t^{\dagger}_{\text{final}} = \tau} \geq \\ (1 - \eta)\sum_{\tau=0}^{\infty}\PP{t^{\dagger}_{\text{final}} = \tau} = 1 - \eta.	
	\end{multline}
	
	We now proceed to address the overshoot implication. 
	Let us assume without loss of generality that $\delta \leq \hat\kappa_{\up}^{-1}(\bar \nu) - \diam(\G) =: \bar \delta$.
	In that case 
	\begin{equation}
	\hat\kappa_{\low}(\nrm{\state_{0}}) \leq \hat\kappa_{\up}(\nrm{\state_{0}}) \leq \hat\kappa_{\up}(\goaldist(\state_{0}) + \diam(\G)) < \hat\kappa_{\up} \left(\delta + \diam(\G) \right) \leq \bar \nu,
	\end{equation}
	and hence $\nrm{\state_{0}} \leq \hat \kappa_{\up}^{-1}(\bar \nu)$. 
	In this case, \eqref{eqn_relprobdrop} implies that for a $\delta$ this small $\xi_{t}$ will always be equal to zero, hence $\tilde{\mathbb{T}} = \emptyset$. The very same expression also implies that the critic will be initialized to 
	\begin{equation}
		\hat \Lambda^\dagger_{0} \leq \hat \kappa_{\up}(\nrm{\state_{0}}) < \bar \nu,
	\end{equation}
	and therefore $\hat{\mathbb{T}} = \emptyset$. Since $\tilde{\mathbb{T}} \cup \hat{\mathbb{T}} = \emptyset$ it is true that $\forall t \geq 0 \spc \Action_{t}(\state_0) \sim \policy_0(\bullet \mid \State_t)$ and thus by the assumption of the theorem
	\begin{equation}
	\PP{\underbrace{\goaldist(\State_{t}(\state_0)) \leq \beta(\goaldist(\state_0), t)}\limits_{=:\Omega_0}} \geq 1 - \eta.
	\end{equation}
	The above already implies goal reaching.
	It remains to demonstrate that the overshoot is bounded.
	For simplicity in the following we assume that $\Omega_0$ is already known to have happened.
	
	It is known that any $\KL$ function can be bounded in the following way \citep[Lemma~8]{Sontag1998Commentsintegr}:
	\begin{equation}
	 \beta(\delta, t) \leq \kappa_{\text{dist}}(\delta)\kappa_{\text{time}}(e^{-t}), \text{where }\kappa_{\text{dist}}, \kappa_{\text{time}} \in \Kinf.
	\end{equation}
	Therefore $\goaldist(\State_{t}(\state_0)) \leq \beta(\delta, t) \leq \kappa_{\text{dist}}(\delta)\kappa_{\text{time}}(1)$, which in turn means that
	\begin{equation}
	\delta < \min\left(\kappa_{\text{dist}}^{-1}\left(\frac{\eps}{\kappa_{\text{time}}(1)}\right), \bar \delta \right) \implies \forall t \geq 0 \spc \goaldist(\State_{t}(\state_0)) < \eps.
	\end{equation}

\end{prf}

\begin{thm}
	\label{thm_calfuniform}
	Consider the problem \eqref{eqn_value} \ under the MDP \eqref{eqn_mdp}.
	
	Let $\policy_{0} \in \policies_0$ have the following strengthened goal reaching property for $\G \subset \states$ \ie
	\begin{equation}
	\label{eqn_strong_goal_reaching}
		\exists \beta \in \KL \spc \forall \state_0 \in \states \spc \PP{\goaldist(\State^{\policy_0}_{t}(\state_0)) \leq \beta(\goaldist(\state_0), t)} \geq 1 - \eta.
	\end{equation}

	Let $\transit (\bullet \ | \ \State_{t - 1}, \Action_{t - 1})$ be such that $\nrm{\State_{t}} \leq \rho(\State_{t - 1}, \Action_{t - 1})$ where $\rho \ : \ \states \times \actions \ra \mathbb{R}$ is some upper-semicontinuous function.
	
	Let $\bar \nu$ be such that $\hat\kappa_{\up}(\diam(\G)) < \bar \nu$. 
	
	Let $\policy_t$ be produced by Algorithm 1 with the following modifications:
	\begin{enumerate}
	\item $\relprob \gets 0, \relfact \gets 0, s^{\dagger} \gets 0$ when $\hat\kappa_\low(\nrm{\state_t}) \leq \bar \nu$,
	\item $\relprob \gets 0, \relfact \gets 0$ when $\nrm{\state_{t}} > \hat\kappa_{\low}^{-1}(\hat\kappa_{\up}(\nrm{\state_{0}}))$.
	\end{enumerate}
	
	Then,  $\Action_{t} \sim \policy_t$  ensures the following properties:
	\begin{enumerate}
	\item  \begin{multline}
	\label{eqn_uniform_reaching_time}
	\forall H > h > 0 \spc \exists \bar T^{\policy_0}(H, h) \spc \forall T_{\text{relax}} > 0\\
	 \PP{T^{\policy_t}(H, h) \ \leq T_{\text{relax}} + (\hat{T} + 1)\bar T^{\policy_0}(H, h)} \geq  (1 - \eta)(\relfact^{T_{\text{relax}}}; \spc \relfact)_{\infty},
	\end{multline} where $T^{\policy_t}(H, h) = \inf\{T_{\text{relax}} \in \N\cup\{0\} \ \vert \ \forall t \geq T_{\text{relax}} \ \goaldist(\State_t) \leq h\}$ and $(c; \spc q)_{n}$ denotes the $q$-Pochhammer symbol.
	Intuitively, $T_{\text{relax}}$ is an upper bound on the last time instant where a relax action was invoked.
	\item There exists $\delta(\bullet) \in \Kinf$, such that \begin{equation}
	\label{eqn_uniform_boundedness}
	\exists \delta_{0} \geq 0 \spc \PP{\sup\limits_{t\geq 0}\goaldist({\State_{t}}) \leq \delta^{-1}(\goaldist(\state_{0}) + \delta_{0})} \geq 1 - \eta.
	\end{equation}
	\end{enumerate}
\end{thm}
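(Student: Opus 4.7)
My plan reuses the notational setup from the previous two proofs (indicator $\xi_t$, sets $\hat{\mathbb T}$ and $\tilde{\mathbb T}$, value $\hat\Lambda^\dagger$, the telescoping bound $|\hat{\mathbb T}| \leq \hat T$, and the full-probability event $\Omega^\dagger$). A single structural observation will drive both claims: any successful critic update at time $t$ must satisfy $\hat\Value^w(s_t) \geq \hat\Lambda^\dagger + \bar\nu \geq -\hat\kappa_\up(\nrm{s_0}) + \bar\nu$ and $\hat\Value^w(s_t) \leq -\hat\kappa_\low(\nrm{s_t})$, which together force $\nrm{s_t} < M := \hat\kappa_\low^{-1}(\hat\kappa_\up(\nrm{s_0}))$. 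Consequently, whenever modification~2 is in force (i.e. $\nrm{s_t} > M$), the critic cannot succeed and relaxations are disabled, so $\policy_0$ is used at that step.

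For part~2, I combine this with $\nrm{S_{t+1}} \leq \rho(S_t, A_t)$ and upper semicontinuity of $\rho$ on the compact set $\ball_M \times \actions$ to get $M^* := \sup_{\nrm{s}\leq M,\,a\in\actions}\rho(s,a) < \infty$. Since the updated policy is fired only when the current state was in $\ball_M$, every post-updated-policy state has norm at most $M^*$. Every maximal $\policy_0$-stretch therefore begins from a state of norm at most $\max(\nrm{s_0}, M^*)$, and the strengthened reaching property \eqref{eqn_strong_goal_reaching} bounds $\goaldist(S_t) \leq \beta(\max(\nrm{s_0}, M^*), 0)$ throughout that stretch with probability $\geq 1-\eta$. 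Combining the three regimes, $\sup_{t\geq 0}\goaldist(S_t) \leq \max\{M,\, M^*,\, \beta(\max(\nrm{s_0}, M^*), 0)\}$ with probability $\geq 1-\eta$. Since $M$, $M^*$, and $\beta(\cdot,0)$ compose to a $\Kinf$ function of $\nrm{s_0}$, and $\nrm{s_0} \leq \goaldist(s_0) + \diam(\G)$, this envelope admits the form $\delta^{-1}(\goaldist(s_0) + \delta_0)$ for appropriately chosen $\delta \in \Kinf$ and $\delta_0 \geq \diam(\G)$.

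For part~1, independence of the $q$-samples and the schedule $\relprob = \relfact^t$ yield $\PP{\mathcal E_{T_{\text{relax}}}} \geq (\relfact^{T_{\text{relax}}};\relfact)_\infty$, where $\mathcal E_{T_{\text{relax}}}$ is the event of no relaxation occurring at any $t \geq T_{\text{relax}}$ (an earlier triggering of modifications only increases this probability). Define $\bar T^{\policy_0}(H,h) := \min\{t : \beta(H,t) \leq h\}$, finite since $\beta \in \KL$. On $\mathcal E_{T_{\text{relax}}}$, after $T_{\text{relax}}$ every step is either one of at most $\hat T$ critic successes or a $\policy_0$ invocation. Each maximal $\policy_0$-stretch begins with $\goaldist \leq H$ (via the bound from part~2) and, on the reaching event, reduces $\goaldist$ below $h$ within $\bar T^{\policy_0}(H,h)$ steps; with at most $\hat T$ critic-induced preemptions plus one final uninterrupted stretch, the overall reaching time is bounded by $T_{\text{relax}} + (\hat T + 1)\bar T^{\policy_0}(H,h)$. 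Intersecting $\mathcal E_{T_{\text{relax}}}$ with the $(1-\eta)$-probable reaching event and using independence of the $q$-samples from the dynamics delivers the claimed probability.

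The main obstacle will be in part~1: combining the reaching events across the $(\hat T+1)$ distinct $\policy_0$-stretches into one event of probability $\geq 1-\eta$. The cleanest route reads the strengthened reaching property as a single trajectory-level event that, by the strong Markov property together with monotonicity of $\beta$ in its first argument, transfers to every restart; naive union-bounding would instead yield the weaker $(1-(\hat T+1)\eta)$. A secondary technical step is the composition in part~2 to extract the specific $\Kinf$ form $\delta^{-1}(\goaldist(s_0)+\delta_0)$ from an envelope involving the maximum of three $\Kinf$-type expressions.
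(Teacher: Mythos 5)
Your overall architecture matches the paper's: the same $\xi_t$/$\Omega^\dagger$/$\hat{\mathbb T}$/$\tilde{\mathbb T}$/$\hat T$ scaffolding, the same observation that the critic sandwich $-\hat\kappa_\up \le \hat\Value^w \le -\hat\kappa_\low$ together with $\hat\Value^w(\state_t) \ge \hat\Lambda^\dagger + \bar\nu$ confines critic-success (and, via modification 2, relaxation) states to the ball of radius $\hat\kappa_\low^{-1}(\hat\kappa_\up(\nrm{\state_0}))$, the same extreme-value-theorem bound $\bar\rho$ on successor states, the same $q$-Pochhammer estimate for the no-relaxation tail event, and the same pigeonhole over the window $[T_{\text{relax}},\, T_{\text{relax}} + (\hat T+1)\bar T^{\policy_0}]$. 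Part 2 of your plan is essentially the paper's argument verbatim.

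The genuine gap is exactly the one you flag in part 1, and your proposed repair does not close it. Reading the strengthened reaching property as a "trajectory-level event that transfers to every restart" via the strong Markov property still costs a fresh failure probability $\eta$ at each restart: conditioning on the state at the start of each $\policy_0$-stretch gives $\PP{\text{success} \mid \text{start}} \ge 1-\eta$ per stretch, and intersecting over $\hat T + 1$ stretches degrades to $(1-\eta)^{\hat T+1}$ at best, or $1-(\hat T+1)\eta$ by union bound — neither is the claimed $1-\eta$. The paper's resolution is to invoke the reaching property \emph{exactly once}: the pigeonhole guarantees \emph{deterministically} (on the no-relaxation event) that at least one stretch of consecutive $\policy_0$-invocations has length at least $\bar T^{\policy_0}$, and the $(1-\eta)$-event is applied only to that single stretch, started at its (random) initial state $\State_{\mathcal T}$. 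Persistence after reaching $h$ — which your plan also leaves unaddressed, even though $T^{\policy_t}(H,h)$ demands $\goaldist(\State_t)\le h$ for \emph{all} subsequent $t$ — is then supplied by modification 1 together with the replacement $h \mapsto \min(h,\, \hat\kappa_\low^{-1}(\bar\nu) - \diam(\G))$: once $\goaldist(\State_t)\le \hat\kappa_\low^{-1}(\bar\nu)-\diam(\G)$ one has $\hat\kappa_\low(\nrm{\state_t})\le\bar\nu$, the reset $\state^\dagger \gets 0$ forces $\hat\Lambda^\dagger = 0$ so that the decay constraint $\hat\Value^w(\state_t) \ge \bar\nu > 0 \ge -\hat\kappa_\low(\nrm{\state_t})$ becomes infeasible, and $\relprob \gets 0,\ \relfact \gets 0$ kills all future relaxations; hence the agent is frozen on $\policy_0$ forever and the monotone decay of $\beta(\cdot, t)$ in $t$ keeps $\goaldist$ below $h$ without any further appeal to a stochastic reaching event. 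You would need to incorporate both of these mechanisms — single application of the reaching property to the pigeonhole-guaranteed long stretch, and the modification-1 lock-in — to make part 1 go through with the stated constant $1-\eta$.
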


\begin{prf}
	Let $\xi_t$ be the indicator of $q < \relprob$ at step $t$ and $\xi_0 := 1$. 
	Notice that $\E{\sum_{i = 0}^{\infty}\xi_{i}} \leq \frac{1}{1 - \relfact}$, thus by Markov's inequality 
	\begin{equation}
		\forall C > 0 \spc \PP{\sum_{i = 0}^{\infty}\xi_{i} \geq C} \leq \frac{1}{C(1 - \relfact)},
	\end{equation}		
	which trivially implies
	\begin{equation}
		\PP{\sum_{i = 0}^{\infty}\xi_{i} = \infty} = 0.
	\end{equation}
    Now, let $\Omega^{\dagger} := \{ \omega \in \Omega \ | \ \sum_{i = 0}^{\infty}\xi_{i}[\omega] \neq \infty \}$, meaning the event of at most finitely many relaxed actions \ie the actions let through by the actor despite violation of the critic constraints.
	From this point on throughout this proof it is assumed that we are working in the probability space induced on $\Omega^{\dagger}$.    
    
	Now, let us consider $\G'$ to be a closed superset of $\G$, where the Hausdorff distance between $\G$ and $\G'$ is non-zero.
	Let $h$ denote the said distance.
	
	Recalling Algorithm \ref{alg_calfstate}, let us denote:
	\begin{equation}
	    \label{eqn_qdagger}
	    \begin{aligned}
	        & \hat \Lambda^\dagger := \hat \Lambda^{w^\dagger}(\state^\dagger).
	    \end{aligned}
	\end{equation}
	
	Next, we introduce:
	\begin{equation}
	    \label{calftimes}
	    \begin{aligned}
	        & \hat{\mathbb T}(\omega) := \{t \in \Z_{\ge 0} : \text{successful critic update} \}, \\
	        & \tilde{\mathbb T}(\omega) := \{t \in \Z_{\ge 0} : \xi_{t}[\omega] = 1 \}.
	    \end{aligned}
	\end{equation}
	The former set represents the time steps at which the critic succeeds and the corresponding induced action will fire at the next step.
	The latter set consists of the time steps after each of which the critic first failed, discarding the subsequent failures if any occurred.
	
	Now, let us define:
	\begin{equation*}
	    \begin{aligned}
	        & \hat \Lambda^\dagger_t :=  \begin{cases}
	                            \hat \Lambda_t, t \in \hat{\mathbb T},\\
	                            \hat \Lambda^\dagger_{t-1}, \text{ otherwise}.
	                        \end{cases}
	    \end{aligned}
	\end{equation*}
	
	Next, observe that there are at most
	\begin{equation}
	    \label{eqn_CALF2_critic_reaching}
	        \hat T := \max \left\{ \frac{ \hat \Lambda^\dagger_0 - \bar \nu}{\bar \nu}, 0 \right\}
	\end{equation}
	critic updates until the critic stops succeeding and hence only $\policy_0$ is invoked from that moment on. Hence $\hat{\mathbb T}(\omega)$ is a finite set.
	Furthermore, for $\omega \in \Omega^{\dagger}$ evidently $\tilde{\mathbb T}(\omega)$ is too a finite set.
	Notice $\hat T$ was independent of $\G'$ and in turn dependent on the initial value of the critic.
	
	Let $t^\dagger_{\text{final}}[\omega] = \sup \tilde{\mathbb T}[\omega] \cup \hat{\mathbb T}[\omega] + 1$.	
	
	Let $\states_0 := \{\state \in \states \spc \vert \spc \nrm{\state} \leq \hat\kappa_{\low}^{-1}(\hat\kappa_{\up}(\nrm{\state_{0}})) \}$. Notice that by the assumptions of the theorem
	\begin{equation}
		t \in \hat{\mathbb T}\cup\tilde{\mathbb{T}} \implies \goaldist({\State_{t + 1}}) \leq \nrm{\State_{t + 1}} \leq \sup_{\state \in \states_0, \spc \action \in \actions}\rho(\state, \action) =: \bar \rho.
	\end{equation}
	
	Existence of $\bar \rho$ can be inferred from the extreme value theorem.
	
	It is evident that
	\begin{equation}
	\PP{\sup\limits_{t \geq 0}\goaldist(\State'_{t}(S_{t^{\dagger}_{\text{final}}}(\state_0))) \leq \beta(\bar \rho, 0)} \geq 1 - \eta,
	\end{equation}
	thus  setting
	\begin{equation}
	\delta^{-1}(\goaldist(\State_{t}) + \delta_{0}) := \beta\left(\sup\limits_{\action \in \actions, \spc \nrm{\state} \leq \hat\kappa_{\low}^{-1}(\hat\kappa_{\up}(\goaldist(\state_{0}) + \diam(\G)))}\rho(\state, \action), 0\right)
	\end{equation} concludes the proof of \eqref{eqn_uniform_boundedness}.
	
		Assumption \eqref{eqn_strong_goal_reaching} implies that for $\policy_0$ there exists a uniform reaching time $T^{\policy_0}(H, h): = \beta^{-1}(H, \bullet)$ that satisfies:
	
	 \begin{equation}
	 \forall H > h \ : \ \goaldist(\state_{0}) \leq H \implies \PP{\forall t \geq T^{\policy_0}(H, h) \spc \goaldist(\State'_{t}(\state_{0})) < h} \geq 1 - \eta.
	\end{equation}

	Notice that 
	\begin{equation}
	\goaldist(\nrm{\state_{t}}) \leq \hat \kappa_{\low}^{-1}(\bar \nu) - \diam(\G) \implies \hat \kappa_{\low}(\nrm{\state_{t}}) \leq \bar \nu.
	\end{equation}
	
	Let $\mathcal T^{\policy_0}(t, T)$ denote the beginning of the longest sequence of consecutive invocations of $\policy_0$ on $[t, T]$.
	
	Now note that if $\forall t \geq T_{\text{relax}} \ \xi_{t} = 0$ then $[T_{\text{relax}}, T_{\text{relax}} + (\hat{T} + 1)T^{\policy_0}(\goaldist(\state_{0}), h)]$ there will be at least one sequence of $T^{\policy_0}(\goaldist(\state_{0}), h)$ consecutive invocations of $\policy_0$. Therefore if $h \leq \kappa_{\low}^{-1}(\bar \nu) - \diam(\G)$, then 
	\begin{multline}
		\PP{\forall t \geq \mathcal T \ \goaldist(\State_{t}) \leq \beta(\goaldist(\State_{\mathcal T}), t)} \geq (1 - \eta)\prod_{i=T_{\text{relax}}}^{\infty}(1 - \relfact^{i}) \\
		= (1 - \eta)(\relfact^{T_{\text{relax}}}; \spc \relfact)_{\infty}, \\ 
		\text{ where } \mathcal T := \mathcal T^{\policy_0}(T_{\text{relax}}, T_{\text{relax}} + (\hat{T} + 1)T^{\policy_0}(\goaldist(\state_{0}), h)). 
	\end{multline}
	Note that $T^{\policy_t}(H, \min(h, \kappa_{\low}^{-1}(\bar \nu) - \diam(\G))) \geq T^{\policy_t}(H, h)$ and also that $\mathcal T \leq T_{\text{relax}} + \hat{T}T^{\policy_0}(\goaldist(\state_{0}), h)$. From this it follows that
	\begin{multline}
	\PP{T^{\policy_t}(H, h) \leq T_{\text{relax}} + (\hat{T} + 1)T^{\policy_0}(H, \min(h, \kappa_{\low}^{-1}(\bar \nu) - \diam(\G)))} \geq \\ (1 - \eta)\prod_{i=T_{\text{relax}}}^{\infty}(1 - \relfact^{i}) = (1 - \eta)(\relfact^{T_{\text{relax}}}; \spc \relfact)_{\infty},
	\end{multline}
	which allows us to set $\bar T^{\policy_0}(H, h):= T^{\policy_0}(H, \min(h, \kappa_{\low}^{-1}(\bar \nu) - \diam(\G)))$ and conclude the proof of the theorem.
\end{prf}

\begin{rem}
Notice that $\delta_{0} = 0$ if $\mathbb G$ is invariant under $\policy_0$.
\end{rem}

\begin{rem}
If $\relfact = 0$, then the reaching time bound becomes deterministic \\ $T^{\policy_t}(H, h) \leq (\hat{T} + 1)T^{\policy_0}(H, \min(h, \kappa_{\low}^{-1}(\bar \nu) - \diam(\G)))$. The latter of course occurs with probability $1 - \eta$.
\end{rem}

\subsection{On $\omega$-uniform convergence moduli}
\label{sub_kappaell}

The following technical result will be needed later on. 
It may be considered as a generalization of the classical results on existence of $\KL$ convergence certificates for uniformly globally stable systems by \cite{Lin1996smoothconverse}.

\begin{prp}
	\label{prp_lim2kappa}
	Consider a function $\Phi: \R^n \times \R_{\ge 0} \ra \R_{\ge 0}$ with the property that
	\begin{equation}
		\label{eqn_genericlim}
		\begin{aligned}
			& \forall \state \in \R^n \spc \lim_{\tau \ra \infty} \Phi(\state, \tau) = 0, \\
			& \forall \eps \ge 0 \spc \exists \delta \geq 0 \spc \nrm{\state} \le \delta \implies \forall \tau \ge 0 \spc \Phi(\state, \tau) \le \eps.
		\end{aligned}		
	\end{equation}
	where $\delta$ is unbounded as a function of $\eps$ and the convergence of the limit is assumed compact with respect to $\state$ \ie
	\begin{equation}
		\label{eqn_comapctconv}
			\forall \text{ compact } \states_0, \eps>0 \spc \exists \T>0 \spc\sut \forall \tau \ge \T, \state \in \states_0 \spc \Phi(\state, \tau) \le \eps.
	\end{equation}
	
	Then, there exists a $\KL$ function $\beta$ with the following property:
	\begin{equation}
		\label{eqn_exitskl}
		\forall \state \in \R^n, \tau \ge 0 \spc \Phi(\state, \tau) \le \beta(\nrm{\state}+C_0, \tau),
	\end{equation}
	where $C_0 \ge 0$ is a constant that equals zero if $\delta$ is positive outside zero as a function of $\eps$.
\end{prp}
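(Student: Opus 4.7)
The plan is to reduce the proposition to a classical converse $\KL$-majorization argument in the style of Lin, Sontag and Wang. The starting point is the worst-case envelope
\[
	g(r, \tau) := \sup_{\nrm{\state} \le r,\, t \ge \tau} \Phi(\state, t),
\]
which is well defined and finite for every $(r,\tau)$ by hypothesis \eqref{eqn_genericlim} (since $\delta(\eps)$ unbounded in $\eps$ gives a uniform bound on every ball), is non-decreasing in $r$ and non-increasing in $\tau$ by construction, and satisfies $g(r,\tau) \ra 0$ as $\tau \ra \infty$ for every fixed $r$ by the compact uniform convergence hypothesis \eqref{eqn_comapctconv}. Since trivially $\Phi(\state,\tau) \le g(\nrm{\state},\tau)$, it suffices to construct a $\beta \in \KL$ with $g(r,\tau) \le \beta(r+C_0,\tau)$.

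Next I would decouple the two variables. For the spatial argument I would take $\alpha(r) := g(r,0) + r + C_0$ and, after a monotone continuous regularization, view $\alpha$ as an element of $\Kinf$ shifted by $C_0$. The constant $C_0 > 0$ is needed precisely when $g(\cdot,0)$ does not vanish at the origin, which occurs only if some $\eps>0$ forces $\delta = 0$ in \eqref{eqn_genericlim}; otherwise $C_0 = 0$ works, recovering the second clause of the claim. For the time decay I would exploit the countable data $T_{n,k} := \inf\{\tau \ge 0 : g(n,\tau) \le 1/k\}$, all finite by \eqref{eqn_comapctconv}, and assemble them, via a standard diagonal monotone-envelope construction, into a continuous $\ell: \R_{\ge 0} \times \R_{\ge 0} \ra \R_{\ge 0}$ that is strictly increasing in $r$, strictly decreasing to zero in $\tau$, and dominates $g$ pointwise. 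Setting
\[
	\beta(r,\tau) := \ell(r,\tau) + \alpha(r)e^{-\tau}
\]
then yields $\beta \in \KL$ with $\beta(\nrm{\state}+C_0,\tau) \ge g(\nrm{\state},\tau) \ge \Phi(\state,\tau)$, which is the desired inequality \eqref{eqn_exitskl}.

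The main obstacle I anticipate is this last step: promoting the pointwise-in-$r$ time decay of $g(r,\cdot)$ to a genuine joint $\KL$ majorant. The envelope $g$ is only upper-semicontinuous in general, so forcing joint continuity while preserving monotonicity in both arguments and uniformity on each compact $r$-range requires exactly the gluing and regularization that form the technical heart of the Lin--Sontag--Wang construction. Assumption \eqref{eqn_comapctconv} is precisely what makes this feasible; without it the decay rate could deteriorate as $r$ grows and no single $\KL$ majorant would exist.
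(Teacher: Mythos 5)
Your proposal is correct in substance and rests on the same underlying mathematics as the paper's proof --- both are converse Lin--Sontag--Wang-type majorization arguments --- but it is organized along a genuinely different route. You first collapse all hypotheses into the monotone envelope $g(r,\tau) = \sup_{\nrm{\state}\le r,\, t\ge\tau}\Phi(\state,t)$ (finite by the unboundedness of $\delta$, non-decreasing in $r$, non-increasing in $\tau$, vanishing as $\tau\ra\infty$ for each $r$ by \eqref{eqn_comapctconv}) and then invoke the classical lemma that any such envelope admits a $\KL$ majorant. The paper never introduces $g$: it works directly with the certificates $\delta(\eps)$ and $\T(v,\eps)$, regularizes $\delta$ by Riemann averaging into a continuous strictly increasing function (this is where its $C_0=\delta^{-1}(0)$ comes from), builds per-integer-radius polygonal decay profiles $\beta'_i$ from the level sets of $\T$, and glues them together with the weights $2^i$ and a convex interpolation between consecutive integer shells; its additive term $C_1 v e^{-C_2\tau}$ plays exactly the role of your $\alpha(r)e^{-\tau}$. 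Your route is conceptually cleaner and connects directly to the literature, at the price of deferring the technical heart --- joint continuity plus strict monotonicity of the majorant, which you rightly flag --- to the cited standard construction that the paper instead carries out explicitly. Two small points to tighten, neither a real gap: as written, demanding $\ell\ge g$ pointwise is incompatible with $\ell(0,\tau)=0$ whenever $g$ does not vanish at $r=0$, so in the $C_0>0$ case the domination requirement should read $\ell(r+C_0,\tau)\ge g(r,\tau)$ (and similarly for the $\alpha$ term), matching the target inequality you state; and the resulting $\beta$ must still be extended to a genuine $\K$ function of $v$ on $[0,C_0]$, e.g.\ by interpolating down to zero, which is routine.
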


\begin{prf}
	First, we may assume that $\delta$ is a non-decreasing function of $\eps$.
	To see this, fix $0 < \eps_1 \le \eps_2$.
	Then, 
	\[
		\nrm{\state} \le \delta(\eps_1) \implies \forall t \ge 0 \spc \Phi(\state, \tau) \le \eps_1,
	\]
	from which it trivially follows that $\Phi(\state, \tau) \le \eps_2$ also.
	Hence, redefining $\delta(\eps) := \sup\limits_{\eps' \in [0, \eps]}\delta_\text{old}(\eps')$ is valid without loss of generality.	
	Second, $\T_{\states_0}$, for a fixed compact set $\states_0$ is a non-increasing function of $\eps$.
	Again, fix some $0 < \eps_1 \le \eps_2$ and observe that, for $\state$ in $\states_0$,
	\[
		\tau \ge \T_{\states_0}(\eps_1) \implies \Phi(\state, \tau) \le \eps_1
	\]
	implies trivially $\Phi(\state, \tau) \le \eps_2$ whenever $\tau \ge \T_{\states_0}(\eps_2)$.
	Now, let us consider $\T$ as a function of two arguments -- radius $v$ of the minimal ball containing the given compact set $\states_0$ and $\eps$ -- for convenience.	
	Then, it is a non-increasing function in the first argument.
	Yet again, fix any $0 < \eps_1 \le \eps_2$ and $v >0$.
	Take two compact sets $\states_0, \states_0'$ with $\states_0 \subseteq \states_0'$ and $\states_0' \subseteq \ball_v, v>0$.
	Then, for any $\eps>0$, we have
	\[
		\state \in \states_0', \tau \ge \T(v, \eps) \implies \Phi(\state, \tau) \le \eps
	\]
	and, simultaneously,
	\[
		\state \in \states_0, \tau \ge \T(v, \eps) \implies \Phi(\state, \tau) \le \eps.
	\]	
	Furthermore, we redefine $\delta$ to be continuous, strictly increasing and positive on $[v_0, \infty)$, where $v_0 := \sup \cl{\mathcal E^\delta_0}$ and $\mathcal E^\delta_0:= \{ \eps : \delta(\eps) = 0 \}$.

	To see this, observe that $\delta$ is positive everywhere outside zero, non-decreasing and bounded by one.
	Hence, $\delta$ is Riemann integrable on the unit interval precisely because the sets of discontinuities of non-decreasing functions have measure zero on compact domains.
	Consider a function $\hat \delta$ defined by
	\[
		\hat \delta(\eps) := \frac{1}{\eps + 1}\int_{0}^{\eps} \delta(\eps') \diff \eps'.
	\]
	Evidently, $\hat\delta$ is continuous. 
	Notice, that $\hat\delta(\eps)$ is zero whenever $\delta(\eps)$ is zero and for $\eps - v_{0} > h > 0$ it holds due to $\delta$ being non-decreasing that
	\begin{equation}
	\hat\delta(\eps) = \frac{1}{\eps + 1}\int_{0}^{\eps} \delta(\eps') \diff \eps' \geq \frac{1}{\eps + 1}\int_{\eps - h}^{\eps} \delta(\eps') \geq \frac{1}{\eps + 1}h\delta(\eps - h) > 0.
	\end{equation}
	Therefore $\hat\delta(\eps)$ is positive whenever $\delta(\eps)$ is positive except possibly at $v_0$. 
	Now, note that assuming $\eps > 0$ by mean value theorem we have
	\begin{equation}
	 \hat\delta(\eps) = \frac{1}{\eps + 1}\int_{0}^{\eps} \delta(\eps') \diff \eps' \leq \frac{1}{\eps}\int_{0}^{\eps} \delta(\eps') \diff \eps'= \delta(\hat\eps) \leq \delta(\eps), \text{ where }\hat\eps \in [0, \eps].
	\end{equation}
	Since $\delta(0) = \hat\delta(0)$, it is true that $\hat\delta$ bounds $\delta$ from below.
	Also, for $\eps \geq 1$ by mean value theorem
	\begin{equation}
	\hat\delta(\eps) = \frac{1}{\eps + 1}\int_{0}^{\eps} \delta(\eps') \geq \frac{1}{2\eps}\int_{0}^{\eps} \delta(\eps') \geq \frac{1}{4} \frac{1}{\frac{1}{2}\eps}\int_{\frac{\eps}{2}}^{\eps} \delta(\eps') \geq \frac{1}{4}\delta\left(\frac{\eps}{2}\right),
	\end{equation}
	therefore $\hat\delta$ is unbounded.
	Finally, for $h > 0$, $\eps \notin \mathcal E^\delta_0$
	\begin{align}
		& h\int_{0}^{\eps}\delta(\eps')\diff \eps'\leq h\eps\delta(\eps) < h(\eps + 1)\delta(\eps) \leq (\eps + 1)\int_{\eps}^{\eps + h}\delta(\eps') \diff \eps'\\
		& \implies (\eps + 1)\int_{0}^{\eps}\delta(\eps')d\eps' + h\int_{0}^{\eps}\delta(\eps')\diff \eps' < \\
		& \mquad{2} (\eps + 1)\int_{0}^{\eps}\delta(\eps')\diff \eps' + (\eps + 1)\int_{\eps}^{\eps + h}\delta(\eps') \diff \eps'\\
		& \implies (\eps + 1 + h)\int_{0}^{\eps}\delta(\eps')\diff \eps' < (\eps + 1)\int_{0}^{\eps + 1 + h}\delta(\eps')\diff \eps'\\
		& \implies (\eps + 1)(\eps + 1 + h)\hat\delta(\eps) < (\eps + 1)(\eps + 1 + h)\hat\delta(\eps + h)\\
		& \implies \hat\delta(\eps) < \hat\delta(\eps + h),
	\end{align}
	therefore $\hat\delta$ is strictly increasing outside of $\cl{\mathcal E^\delta_0}$.	
	
	Let us now fix an $\state \in \R^n, \state \ne 0$. 
	Then, by the compact convergence \eqref{eqn_comapctconv}, it holds that
	\[
		\forall \state' \in \ball_{\nrm{\state}}, \eps > 0, \tau \ge \T(\nrm{\state}, \eps) \spc \Phi(\state', \tau) \le \eps.
	\]
	Let $\eps_x := \delta\inv(\nrm{\state})$ which is possible since $\delta$ is now assumed continuous and strictly increasing outside $\cl{\mathcal E^\delta_0}$.
	Notice $\eps_x = v_0$ if $\state=0$ by the above redefinition of $\delta$.
	In particular, it is zero if the original $\delta$ was positive outside zero.
	From the overshoot bound condition, second line of \eqref{eqn_genericlim}, it holds that:
	\[
		\forall \state' \in \ball_{\nrm{\state}}, \tau \ge 0 \spc \Phi(\state', \tau) \le \eps_x.
	\]
	Therefore, it is safe to assume that $\T(\nrm{\state}, \eps)$, as a function of $\eps$, has a compact support since $\eps_x$ is never exceeded by $\Phi$ as shown above.
	
	Now, define inductively the numbers $\eps_j, j \in \N$ starting with $\eps_1 := \eps_x$:
	\[
		\eps_j := 
		\begin{cases}
			\sup_{\tau \in [j-1, j]} \cl{\{ \mathcal E^\T_{\state,j} \}}, \text{ if } \mathcal E^\T_{\state,j} := \{ \eps : \T(\nrm{\state}, \eps) \in [j-1,j] \} \ne \emptyset, \\
			\eps_{j-1}, \text{ otherwise}.
		\end{cases}
	\]	 
	By the convergence property \eqref{eqn_genericlim}, it holds that $\limsup\limits_{\eps \ra 0} \T(\nrm{\state}, \eps) = 0$ and $\T$ is defined for any $\eps$.
	Hence, $\T$ may not ``blow up'' into infinity at any $\eps>0$ and so the above construction of the numbers $\eps_j$ is valid.

	Now, let $\beta'_v, v = \nrm{\state}$ be a function, such that its graph is a polygonal chain specified by \\
	$(0, \eps_1), (1, \eps_1), (2, \eps_2), (3, \eps_3) \dots$ \ie
	\begin{equation}
	\beta'_v(\tau) = \eps_{1} + \sum_{i = 1}^{\infty}(\eps_{i + 1} - \eps_{i})\chi(\tau - i), \text{ where } \chi(\tau) = \frac{1}{2}(1 + \lvert \tau \rvert - \lvert \tau - 1 \rvert).
	\end{equation}
	Evidently, $\beta'_v$ is continuous.
	A setting $\beta'_v \mapsto \beta'_v + C_1 v e^{- C_2 \tau}$, with $C_1, C_2 >0$ arbitrary, makes it also strictly decreasing.
	Notice that any other function $\T'(\nrm{\state}, \eps)$, defined for all $\eps > 0$, with $\T' \ge \T$ and $\limsup\limits_{\eps \ra 0} \T'(\nrm{\state}, \eps) = 0$ is a valid certificate for the convergence property as per \eqref{eqn_comapctconv}.	
	We thus see that the inverse of $\beta'_v$, which exists, is such a valid certificate. Now, let

	\begin{align}
		& \beta(v, \tau) := (v - \floor{v})\beta_{\floor{v} + 2}(v, \tau) + (\floor{v} + 1 - v)\beta_{\floor{v} + 1}(v, \tau), \\
		& \text{where } \beta_i(v, \tau) := 2^i\max\limits_{j = 1}^i \beta^*_j(v, \tau), \\
		& \text{where } \beta^*_i(v, \tau) := \min(\beta'(i, \tau), \xi(v)), \\
		& \xi(v) := \left\{\begin{array}{ll}
		                  v, \text{ if }v < \delta^{-1}(0),\\
		                  \delta^{-1}(v - \delta^{-1}(0)), \text{ otherwise.}
		                \end{array}\right.
	\end{align}
	It is true that $\beta'(i, \bullet)$ provides a valid upper bound on $\Phi(\state, \tau)$ not only for $\nrm{\state} = i$, but also for $\nrm{\state} < i$. 
	However the largest overshoot is no greater than $\xi(\nrm{\state} + \delta^{-1}(0))$, thus $\min(\beta'(i, \tau), \xi(\nrm{\state} + \delta^{-1}(0)))$ sharpens the former bound. 
	Evidently $\beta^*_{i}$ are bounded above by $\beta_{i}$ and therefore $\nrm{\state} \leq i$ implies $\beta_{i}(\nrm{\state} + \delta^{-1}(0), \tau) \geq \Phi(\state, \tau)$. 
	Finally, $\beta(\nrm{\state} + \delta^{-1}(0), \bullet)$ is a convex combination of $\beta_{\lfloor \nrm{\state} \rfloor + 1}(\nrm{\state} + \delta^{-1}(0), \bullet)$ and $\beta_{\lfloor \nrm{\state} \rfloor + 2}(\nrm{\state} + \delta^{-1}(0), \bullet)$, which are both upper bounds for $\Phi(\state, \tau)$. 
	The latter thus also applies to $\beta(\nrm{\state} + \delta^{-1}(0), \bullet)$.
	
	Note that $\beta_i$ are composed of continuous functions and therefore are continuous themselves. 
	At the same time, when $\beta$ is restricted to an arbitrary segment $[l - 1, l + 1], \ l \in \mathbb{N},$ it can be represented as $\chi_3(v - l + 1, \beta_{l}(v,\tau), \beta_{l + 1}(v,\tau), \beta_{l + 2}(v, \tau))$ where $\chi_3(t, k_{1}, k_{2}, k_{3}) = k_{1} + (k_{2} - k_{1})\chi(t - 1) + (k_{3} - k_{2})\chi(t - 2)$.
	Since $\chi$ and $\beta_{i}$ are continuous, then $\beta$ is continuous on an arbitrary segment $[l - 1, l + 1], \ l \in \mathbb{N},$ and thus $\beta$ is continuous on the entirety of its domain.
	Notice, that $\beta^*_{i}$ are non-decreasing w.r.t $v$ and thus so are $\beta_{i}$.
	Now, let $0 < v < v' < \lfloor v\rfloor + 1$ and observe that
	\begin{multline}
		\beta(v', \tau) - \beta(v, \tau) \geq (v'- v)\beta_{\lfloor v \rfloor + 2}(v, \tau) + (v - v' )\beta_{\lfloor v\rfloor + 1}(v, \tau) = \\ 
		(v' - v)(\beta_{\lfloor v \rfloor + 2}(v, \tau) - \beta_{\lfloor v\rfloor + 1}(v, \tau)) \geq (v' - v)(2\beta_{\lfloor v \rfloor + 1}(v, \tau) - \beta_{\lfloor v\rfloor + 1}(v, \tau)) = \\ 
		(v' - v)\beta_{\lfloor v \rfloor + 1}(v, \tau) \geq (v' - v)\min(\beta'(\lfloor v \rfloor + 1, \tau), \delta^{-1}(v)) > 0.
	\end{multline}
	By continuity, this also extends to $v = 0$, therefore $\beta$ is increasing \wrt $v$.
	Also note that for $l \in \mathbb{N}$
	\begin{equation}
		\beta(l, \tau) = \beta_{l + 1}(l, \tau) \geq 2^{l + 1} \min(\beta'(2, \tau), \delta^{-1}(1)),
	\end{equation}
	thus $\beta$ is unbounded \wrt $v$ for every $\tau$.
	
	It is true that $v = 0$ if and only if $\beta(v, \tau) = 0$.
	Indeed,
	\begin{align}
		& \beta(0, \tau) = 2 \min(\beta'(1, \tau), \delta^{-1}(0)) = 0; \\ 
		& v > 0 \implies \beta(v, \tau) \geq \beta'(\lfloor v\rfloor + 1, \tau) > 0  \ \lor \ \beta(v, \tau) \geq \delta^{-1}(v) > 0.
	\end{align}
	Trivially for $\alpha > 0, v > 0$, function $\alpha\beta_{i}(v, \bullet)$ is decreasing, positive and tending to $0$. 
	For whole values of $v$, it is true that $\beta(v, \bullet) = \beta_{v + 1}(v, \bullet)$, otherwise $\beta(v, \bullet)$ can be represented as $\alpha_{1}\beta_i(v, \bullet) + \alpha_{2}\beta_j(v, \bullet)$, thus $\beta(v, \tau)$ is decreasing, positive and tending to $0$ \wrt $\tau$ for every fixed $v > 0$ since the same applies to the terms.
	
	Thus it has been established that $\beta$ is a $\Kinf$ function \wrt $v$ for each $\tau$, while also being decreasing, positive and tending to $0$ w.r.t. $\tau$, which proves that $\beta$ is indeed a $\KL$-function. 
	At the same time it has been demonstrated that $\Phi(\state, \tau) \leq \beta(\nrm{\state} + C_0, \tau)$, where $C_0 = \delta^{-1}(0)$.

\end{prf}

\begin{rem}
	\label{rem_brokenlinedelta}
	The continuous and strictly increasing redefinition of $\delta$ by means of Riemann integrals admits an alternative construction as follows.
	Define for each interval $[k, k+1], k \in \Z_{\ge 0}$, the numbers:
	\[
		d_{k+1} := \inf_{\eps \in [k, k+1]} \cl{\{ \delta(\eps) \}}.
	\]
	This setting is possible due to the fact that $\delta$ is bounded for all $\eps$ and $\lim\limits_{\eps \ra \infty} \delta(\eps) = \infty$.
	That $\delta$ is bounded for any $\eps$ and tends to infinity as $\eps$ does follows from the trivial observation that $\delta \le \eps$ always and $\delta$ was assumed unbounded.
	Notice that any other unbounded function $\delta'$ with $\delta' \le \delta$ is a valid certificate for the overshoot bound property as per the second line of \eqref{eqn_genericlim}.
	Hence, we safely redefine $\delta$ to be such that its graph is a polygonal chain specified by $(0, d_1), (1, d_1), (2, d_2), (3, d_3) \dots$\ie
	\begin{equation}
	\delta(\eps) := d_{1} + \sum_{i = 1}^{\infty}(d_{i + 1} - d_{i})\chi(t - i), \text{ where } \chi(\eps) := \frac{1}{2}(1 +  \abs{\eps}  -  \abs{\eps-1} ).
	\end{equation}
	Notice neither the original nor the thus redefined $\delta$ needs to be zero at zero and/or strictly increasing.
	However, we can make it strictly increasing on $[v_0, \infty)$ by defining $\delta$ to be
	\[
		\hat \delta(\eps) := \int_{0}^{\eps} \delta(\eps') \diff \eps'.
	\]
	similar to the construction in the proof, except for the factor, and defined only on the unit interval.
	To this end, let us assume, without loss of generality, that $\cl{\mathcal E^\delta_0}$ is well contained in the unit interval.
	Hence, unlike the polygonal chain construction above, we do not redefine $\delta$ to be the horizontal line segment on the unit interval.
	We take the line segment $\mathcal E^\delta_0$ instead and carry out the above Riemann integral construction.
	Evidently, if $v_0=0$, the so redefined $\delta$ is a $\Kinf$ function.
\end{rem}

\begin{rem}
	\label{rem_ifeps0}
	If the uniform overshoot condition in \eqref{eqn_genericlim} read
	\begin{equation*}
			\begin{aligned}
				& \forall \state \in \R^n \spc \lim_{\tau \ra \infty} \Phi(\state, \tau) = 0, \\
				& \exists \eps_0 > 0 \spc \forall \eps \ge \eps_0 \spc \exists \delta \geq 0 \spc \nrm{\state} \le \delta \implies \forall \tau \ge 0 \spc \Phi(\state, \tau) \le \eps,
			\end{aligned}		
	\end{equation*}
	then the $C_0$ would have to be set so as to account for the $\eps_0$ offset.
	In practice, if $\Phi$ were to refer to the distance to some goal under a some given policy, existence of an offset $\eps_0$ would mean that the said policy could not render the goal forward-invariant \ie some overshoot would exist even when starting inside the goal set. 
\end{rem}

For uniform goal reaching, it is crucial to have a uniform $\KL$ bound to estimate overshoot.
By \Cref{prp_lim2kappa}, existence of a uniform goal reaching policy $\policy_0$ (cf. strengthened goal reaching property as per \eqref{eqn_strong_goal_reaching}) would imply:

\begin{equation}
	\label{eqn_klreachprob}
	\forall \state_0 \in \states \spc \PP{ \exists \beta \in \KL \spc \goaldist(\State_t) \le \beta(\goaldist(\state_0)+C_0, t) \mid \Action_t \sim \policy_0(\bullet \mid \state_t)} \geq 1 - \eta,
\end{equation}
where a constant $C_0$ accounts for the $\eps_0$ offset above.

Notice each such $\beta$ depends on the sample variable $\omega$ in general.
This fact would prevent us from uniformly bounding an overshoot.
The reason is that general $\KL$ functions may be rather bizarre despite the common intuition of them being akin to, say, exponentials or something similarly behaved.
In fact, a $\KL$ may entail extremely slow convergence which cannot be described by any polynomial, elementary or even computable functions.
It may formally entail \eg a construction based on the busy beaver function.
We have to avoid such exotic cases.
Hence, we consider special classes of $\KL$ functions which we will refer to as tractable.

First, we need some definitions.

Let us denote a function $\beta$ that depends continuously on some $M \in \N$ parameters besides its main arguments, say, $v, \tau \in \R$ as $\beta\left[\vartheta^{0:M-1}\right](v, \tau)$.
Notice the upper index of the parameter variable $\vartheta$ refers to the respective parameter component, not an exponent. 

\begin{dfn}
	\label{dfn_kappaellfinitely}
	Let $\kappa, \xi$ be of space $\Kinf$ and $M \in \N$.
	A subspace $\KL[\kappa, \xi, M]$ of $\KL$ functions is called \textit{fixed-asymptotic tractable} if it is continuously finitely parametrizable \wrt $\kappa, \xi$ as follows: for any $\beta$ from $\KL[\kappa, \xi, M]$ there exist $M$ real numbers $\vartheta^{0:M-1}$ \sut
	\[
		\forall v, t \spc \beta(v, t) = \kappa\left[\vartheta^{0:M-2}\right](v) \xi\left( e^{-\vartheta^{M-1} t} \right),
	\]
	where $\kappa$ depends continuously on $\vartheta^{0:M-2}$.
\end{dfn}

\begin{exm}
	\label{exm_kappaellfinitely}
	Exponential functions $C v e^{ - \relfact t }$ with parameters $C \ge 0, \relfact \ge 0$ form a fixed-asymptotic tractable $\KL$ subspace.
\end{exm}

The next proposition allows to extract $\omega$-uniform $\KL$ bounds from policies that yield fixed-asymptotic tractable $\KL$ bounds.

\begin{prp}
	\label{prp_fixasymtractpolicy}
	Let $\KL[\kappa, \xi, M]$ be fixed-asymptotic tractable.
	Let $\policy_0$ satisfy
	\begin{equation}
		\label{eqn_fixasymtractstab}
		\begin{aligned}
			\forall \state_0 \in \states \spc \PP{ \exists \beta \in \KL[\kappa, \xi, M] \spc \goaldist(\State_t) \le \beta(\goaldist(\state_0)+C_0, t) \mid \Action_t \sim \policy_0(\bullet \mid \state_t)} \\
			\pushright{\geq 1 - \eta, \eta \in [0,1), C_0>0}.	
		\end{aligned}
	\end{equation}
	
	Then, for each $\eta'>0$, there exists a $\KL$ function $\beta'$ \sut
	\begin{equation}
			\forall \state_0 \in \states \spc \PP{\goaldist(\State_t) \le \beta'(\goaldist(\state_0)+C'_0, t) \mid \Action_t \sim \policy_0(\bullet \mid \state_t)} \geq 1 - \eta - \eta', C'_0>0. 
	\end{equation}
	Notice that the existence quantifier for $\beta'$ in the latter statement stays before the probability operator, whence uniformity over compacts of the sample space $\Omega$.
\end{prp}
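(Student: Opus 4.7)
\begin{prf}[Proposal]
The plan is to localize the random $\KL$ bound onto an event where the defining parameters lie in a compact set, and then collapse the resulting family of $\KL$ functions into a single dominating one. By \Cref{dfn_kappaellfinitely}, every $\beta \in \KL[\kappa, \xi, M]$ is specified by a parameter vector $\vartheta \in \R^{M-1} \times (0, \infty)$, so the random $\KL$ in \eqref{eqn_fixasymtractstab} can be identified with a random element of this finite-dimensional Polish space.

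\textbf{Step 1 (Measurable parameter selection).} First I would argue, via the Kuratowski--Ryll-Nardzewski measurable selection theorem, that there is a measurable map $\omega \mapsto \vartheta[\omega]$ defined on the event $\Omega_1 := \{\exists \beta \in \KL[\kappa,\xi,M] \text{ bounding the trajectory}\}$, such that the bound \eqref{eqn_fixasymtractstab} holds for $\beta[\vartheta[\omega]]$. This uses the continuous parametrization guaranteed by \Cref{dfn_kappaellfinitely} and the fact that the trajectory $\State_t$ is itself measurable in $\omega$.

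\textbf{Step 2 (Tightness).} Since the law of $\vartheta$ restricted to $\Omega_1$ is an inner-regular Borel measure on the Polish space $\R^{M-1}\times(0,\infty)$, for every $\eta' > 0$ there exists a compact set $K_{\eta'} \subset \R^{M-1}\times (0, \infty)$ satisfying $\PP{\vartheta[\omega] \in K_{\eta'},\ \omega \in \Omega_1} \ge 1-\eta-\eta'$. Crucially, compactness in $\R^{M-1}\times(0,\infty)$ means the last coordinate stays in some interval $[\underline{\relfact}, R]$ with $\underline{\relfact} > 0$. Let $\bar K$ be the projection of $K_{\eta'}$ onto the first $M-1$ coordinates, which is a compact subset of $\R^{M-1}$.

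\textbf{Step 3 (Worst-case dominating $\KL$).} I would then define
\[
\bar\kappa(v) \ := \ v \ + \ \sup_{\vartheta' \in \bar K} \kappa[\vartheta'](v), \qquad \beta'(v, t) := \bar\kappa(v)\,\xi\!\left(e^{-\underline{\relfact}\, t}\right),
\]
and take $C'_0 := C_0$. By joint continuity of $\kappa$ in $(v, \vartheta')$ and compactness of $\bar K$ (Berge's maximum theorem), $\bar\kappa$ is continuous in $v$; the supremum is non-decreasing, zero at zero and unbounded, and the added linear term $v$ makes it strictly increasing, hence $\bar\kappa \in \Kinf$. Consequently $\beta' \in \KL$ since $\xi \in \Kinf$ and $t \mapsto e^{-\underline{\relfact}\, t}$ is strictly decreasing to $0$.

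\textbf{Step 4 (Bound on the good event).} On $\{\vartheta[\omega] \in K_{\eta'}\} \cap \Omega_1$ we have $\vartheta^{M-1}[\omega] \ge \underline{\relfact}$, hence $\xi(e^{-\underline{\relfact} t}) \ge \xi(e^{-\vartheta^{M-1}[\omega] t})$ by monotonicity of $\xi$, and $\bar\kappa(v) \ge \kappa[\vartheta^{0:M-2}[\omega]](v)$ by construction. Combining:
\[
\goaldist(\State_t) \ \le \ \kappa[\vartheta^{0:M-2}[\omega]](\goaldist(\state_0)+C_0)\,\xi\!\left(e^{-\vartheta^{M-1}[\omega] t}\right) \ \le \ \beta'(\goaldist(\state_0)+C'_0, t),
\]
and this event has probability at least $1-\eta-\eta'$, which is the claim.

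\textbf{Main obstacle.} The technical heart is Step 1: ensuring a \emph{measurable} selector of witnessing parameters exists so that the pushforward law on parameter space is well defined, and Step 2: verifying that tightness delivers a compact set on which the decay rate coordinate is bounded away from zero. The other constructions (Berge continuity, strict monotonicity via the $+v$ trick, and the pointwise monotonic comparison) are routine once the localization to a compact $\bar K \times [\underline{\relfact}, R]$ is established.
\end{prf}
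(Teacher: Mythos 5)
Your argument is correct, but it takes a genuinely different route from the paper's. The paper fixes a countable dense sequence $\{\vartheta_k\}$ in parameter space, inflates each candidate bound by an additive offset ($\kappa[\vartheta_k^{0:M-2}](v)+1$, to absorb a $\nicefrac{\xi(1)}{2}$ sup-norm approximation error coming from density), forms the monotone partial sums $\bar\beta_k=\sum_{j\le k}\hat\beta_j$, and invokes continuity of measure along the increasing events $\bar\Omega_0^{(k)}$ to find a finite $k$ with $\PP{\bar\Omega_0^{(k)}}\ge 1-\eta-\eta'$. You instead push the randomness onto parameter space via a measurable selector, use Ulam tightness of the resulting law on the Polish space $\R^{M-1}\times(0,\infty)$ to localize the parameters to a compact $K_{\eta'}$ with the decay rate bounded below by $\underline{\relfact}>0$, and then dominate by the single sup-envelope $\bar\kappa(v)\,\xi(e^{-\underline{\relfact}t})$. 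What your route buys: the envelope dominates $\kappa[\vartheta'](v)$ pointwise for every $\vartheta'\in\bar K$, including near $v=0$, so no additive inflation is needed and you get $C_0'=C_0$ — this sidesteps exactly the obstruction the paper discusses in \Cref{rem_zerooffset}, where the dense-sequence argument forces a nonzero residual offset. What it costs: the measurable-selection step, which you correctly flag as the technical heart; note you can dispense with it entirely by exhausting the parameter space with compacts $K_n\uparrow\R^{M-1}\times(0,\infty)$, observing that the events $\Omega_{K_n}:=\{\exists\,\vartheta\in K_n \text{ witnessing the bound}\}$ increase to $\Omega_1$ (measurability of these projections follows since the $\vartheta$-sections are compact), and applying continuity from below to pick $n$ with $\PP{\Omega_{K_n}}\ge 1-\eta-\eta'$ — after which your Steps 3--4 go through verbatim with $\bar K$ the projection of $K_n$. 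Both your proof and the paper's share the same implicit assumption that the event in \eqref{eqn_fixasymtractstab} is measurable, and both leave the uniformity of the final $\beta'$ over the initial state $\state_0$ (which the statement formally requires, the probability bound being claimed for all $\state_0$ with a single $\beta'$) at the same level of informality.
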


\begin{prf}
	Recall $(\Omega, \Sigma, \mathbb P)$, the probability space underlying \eqref{eqn_mdp} and denote the state trajectory induced by the policy $\policy_0$ emanating from $\state_0$ as $\Traj_{0:\infty}^{\policy_0}(\state_0)$ with single elements thereof denoted $\Traj_t^{\policy_0}(\state_0)$.
	Let us explicitly specify the sample variable in the trajectory $\Traj_{0:\infty}^{\policy_0}(\state_0)$ emanating from $\state_0$ as follows: $\Traj_{0:\infty}^{\policy_0}(\state_0)[\omega], \omega \in \Omega$.
	Define, the subset $\Omega_0$ of elements $\omega \in \Omega$ to satisfy:
	\begin{equation}
		\label{eqn_omegastar}
		\Omega_0(\state_0) := \left\{ \exists \beta \in \KL[\kappa, \xi, M] \spc \goaldist\left( \Traj_t^{\policy_0}(\state_0)[\omega] \right) \le \beta(\state_0, t) \right\}.
	\end{equation}	
	We argue here assuming a fixed $\state_0$, so we omit the $\state_0$ argument in $\Omega_0$ for brevity.
	In other words, for every $\omega \in \Omega_0$, there exists a $\KL$ function $\beta^\omega$ from $\KL[\kappa, \xi, M]$ \sut 
	\begin{equation}
		\label{eqn_betaomega}
		\forall t \in \Z_{\ge 0} \spc \goaldist\left( \Traj_t^{\policy_0}(\state_0)[\omega] \right) \le \beta^\omega(\goaldist(\state_0), t).
	\end{equation}
	By the condition of the lemma, $\PP{\Omega_0} \ge 1 - \eta$.

	Now, let $\vartheta_{k=0:\infty}$ be a dense sequence in $\R^{M}$.
	
	Then, let $\{\beta_k\}_{k=0:\infty}$ be the sequence defined by \\ $\beta_k(v, t) := \kappa\left[\vartheta^{0:M-2}_k\right](v) \xi\left( e^{-\vartheta^{M-1}_k t}, \right)$ for all $v, t$.
	First, observe that $\{\beta_k\}_{k=0:\infty}$ is dense in $\image(\beta[ \bullet ])$ since the dependence of $\kappa\left[\vartheta^{0:M-2}_k\right](v) \xi\left( e^{-\vartheta^{M-1}_k t} \right)$ on the parameters $\vartheta^{0:M-1}$ is continuous and continuous functions map dense sets into dense sets.
	
	Define a sequence of functions $\{\hat \beta_k\}_{k=0:\infty}$ by \\
	$\hat \beta_k(v, t) := \left( \kappa\left[\vartheta^{0:M-2}_k\right](v) + 1 \right) \xi\left( e^{-\vartheta^{M-1}_k t} \right)$ for all $v, t$.
	These are $\KL$ functions with an offset along the $v$-dimension.

	The reason for such an offset will be revealed in \eqref{eqn_supbound}.
	
	
	Let $\hat \Omega_0^{(k)}, j \in \Z_{\ge 0}$ be defined via
	\[
	\forall \omega \in \hat \Omega_0^{(k)} \spc \forall \state_0 \in \states, t \ge 0 \spc \goaldist\left( \Traj_t^{\policy_0}(\state_0)[\omega] \right) \le \hat \beta_k(\goaldist(\state_0), t).
	\]
	Then, 
	\[
		\hat \Omega_0^{(\infty)} := \bigcap_{k=0:\infty} \hat \Omega_0^{(k)}
	\]
	is an event, since it is a countable intersection.
	Now, to any $\beta^\omega$ there corresponds some $\hat \beta_k$ \sut $\hat \beta_k \ge \beta^\omega$.
	To see this, consider
	\[
		\beta^\omega(v, t) = \kappa\left[\vartheta^{0:M-2}\right](v) \xi\left( e^{-\vartheta^{M-1} t}, \right)
	\] 
	as a function of $v, t$ with fixed $\vartheta^{0:M-1}$.
	By the density of $\{\beta_k\}_{k=0:\infty}$ in $\image(\beta[ \bullet ])$, there exists some $\beta_l, l \in \Z_{\ge 0}$ \sut 
	\begin{equation}
		\label{eqn_supbound}
		\sup_{v, t} \nrm{ \kappa\left[\vartheta^{0:M-2}\right](v) \xi\left( e^{-\vartheta^{M-1} t} \right) - \kappa\left[\vartheta^{0:M-2}_l\right](v) \xi\left( e^{-\vartheta^{M-1}_l t} \right) } \le \frac{\xi(1)} {2}.
	\end{equation}
	Now, take $\vartheta_k$ \sut $\vartheta^{0:M-2}_k := \vartheta^{0:M-2}_l$ and $\vartheta^{M-1}_k \le \vartheta^{M-1}$.
	From \eqref{eqn_supbound}, it is now clear how an offset by one in $\hat \beta$s along with the approximation bound $\frac{\xi(1)} {2}$ ensure $\hat \beta_k \ge \beta^\omega$.
	Namely, \eqref{eqn_supbound} implies
	\[
		\sup_{v} \nrm{ \kappa\left[\vartheta^{0:M-2}\right](v) - \kappa\left[\vartheta^{0:M-2}_l\right](v) } \le \frac{1}{2}.
	\]
	This in turn implies that
	\[
		\forall v \spc \kappa\left[\vartheta^{0:M-2}\right](v) \le \kappa\left[\vartheta^{0:M-2}_l\right](v) + 1.
	\]
	This together with the fact that
	\[
		\forall t \spc \xi\left( e^{-\vartheta^{M-1} t} \right) \le \xi\left( e^{-\vartheta_k^{M-1} t} \right)
	\]
	ensures $\hat \beta_k \ge \beta^\omega$.
	
	
	

	Then, it holds that $\Omega_0 \subseteq \hat \Omega_0^{(\infty)}$.
	
	Define a new sequence $\{\bar \beta_k\}_{k=0:\infty}$ as follows:
	\begin{equation}
		\label{eqn_barbetasqn}
		\begin{aligned}
			\forall v, t \spc \bar \beta_k(v, t) := \sum_{j \le k} \hat \beta_j(v, t),
		\end{aligned}
	\end{equation}	
	which is still a $\KL$ function with an offset along the $v$-dimension since it is a sum of finitely many $\KL$ functions with offsets along the $v$-dimension.
	
		
	Then it holds that $\bar \beta_{k+1} \ge \bar \beta_k$ meaning $\forall v \in \R_{\ge 0}, \forall t \ge 0 \spc \bar \beta_{k+1}(v, t) \ge \bar \beta_{k}(v, t)$.
	Then, evidently
	\begin{equation}
		\label{eqn_omegastarin}
		\Omega_0 \subseteq \bar \Omega_0^{(\infty)} := \bigcap_{k=0:\infty} \bar \Omega_0^{(k)},
	\end{equation}
	where $\bar \Omega_0^{(k)}$ is defined similarly to $\hat \Omega_0^{(k)}$ with $\bar \beta_k$ in place of $\hat \beta_k$.
	
	Too see this, fix an arbitrary $k \in \Z_{\ge 0}$ and observe that for any $\omega$
	\[
		\forall t \in \Z_{\ge 0}, \state_0 \in \states \spc \goaldist\left( \Traj_t^{\policy_0}(\state_0)[\omega] \right) \le \bar \beta_k(\goaldist(\state_0))
	\]
	implies
	\[
		\forall t \in \Z_{\ge 0}, \state_0 \in \states \spc \goaldist\left( \Traj_t^{\policy_0}(\state_0)[\omega] \right) \le \hat \beta_k(\goaldist(\state_0))
	\]
	hence an event $\bar \Omega_0^{(k)}$ implies an event $\hat \Omega_0^{(k)}$.
	
	By the construction \eqref{eqn_barbetasqn}, 
	\[
		\forall k \in \Z_{\ge 0} \spc \bar \Omega_0^{(k)}  \subseteq \bar \Omega_{k+1}.
	\]
	Furthermore, by \eqref{eqn_omegastarin}
	\begin{equation}
		\label{eqn_limbaromegaprob}
	   \PP{\bar \Omega_\infty} \ge \PP{\Omega_0}. 
	\end{equation}
	
	Notice $\bar \beta_k$ was constructed uniform on the respective $\bar \Omega_0^{(k)}$ in terms of $\omega$.
	Intuitively, $\bar \Omega_0^{(k)}$ are the sets of outcomes for which there is a uniform convergence modulus.
	
	For any $k \in \Z_{\ge 0}$ we have $\PP{\bar \Omega_0^{(k)}} \ge 1 - \eta + \eta_k$ where $\eta_k$ is a real number.

	The condition $\PP{\bar \Omega_0^{(k)}} \le \PP{\bar \Omega_{k+1}}$ implies that the sequence $\{\eta_k\}_{k=0:\infty}$ is a monotone increasing sequence which is bounded above.
	Hence, $\lim_{k \ra \infty} \eta_k$ exists.
	It also holds that $\lim_{k \ra \infty} \eta_k \le 0$ by \eqref{eqn_limbaromegaprob}.

	We thus argue that with probability not less than $1 - \eta + \eta_k$, the policy $\policy_0$ is a uniform stabilizer with an $\omega$-uniform $\KL$ bound.
	
	Fix any $k \in \Z_{\ge 0}$ and take the corresponding $\bar \beta_k$.
	It has the form
	\[
		\bar \beta_k(v, t) = \sum_{j \le k} \left( \kappa\left[\vartheta^{0:M-2}_j\right](v) + 1 \right) \xi\left( e^{-\vartheta^{M-1}_j t} \right).
	\]
	Let $\tilde \beta_k$ be defined as
	\[
		\tilde \beta_k(v, t) = \sum_{j \le k} \left( \kappa\left[\vartheta^{0:M-2}_j\right](v+C'_0) \right) \xi\left( e^{-\vartheta^{M-1}_j t} \right),
	\]
	where $C'_0$ is chosen \sut $\tilde \beta_k \equiv \bar \beta_k$.
	
	Finally, $\tilde \beta$ is a $\KL$ function sought to satisfy the claim of the proposition.
	
\end{prf}

\begin{rem}
	\label{rem_zerooffset}
	Let us assume $C_0=0$.
	The construction \\ $\hat \beta_k(v, t) := \left( \kappa\left[\vartheta^{0:M-2}_k\right](v) + 1 \right) \xi\left( e^{-\vartheta^{M-1}_k t} \right)$ for all $v, t$, necessarily leads to a non-zero offset $C_0'$ in the final $\KL$ function $\tilde \beta$, namely, it satisfies
	\[
		\sum_{j \le k} \kappa\left[\vartheta^{0:M-2}_j\right](C'_0) \xi\left( 1 \right) = k \xi(1).
	\]
	We cannot in general eliminate such an offset -- the density argument does not suffice here since for any fixed sequence of $\KL$ functions, there can always be one that grows faster than any selected one from the sequence in a vicinity of zero.
	However, the offset can be made arbitrarily small leading to an arbitrary small relaxation of the goal set to which stabilization is of interest.
	
	This can be done \eg by setting
	\[
		\hat \beta_k(v, t) := \left( \kappa\left[\vartheta^{0:M-2}_k\right](v) + \lambda_\kappa^{k+1} \right) \xi\left( e^{-\vartheta^{M-1}_k t} \right),
	\]
	where $0<\lambda_\kappa<1$ can be chosen arbitrarily small.
	Then, the offset will not exceed
	\[
		\frac{\lambda_\kappa}{1-\lambda_\kappa} \xi(1)
	\]
	leading to the respective arbitrarily small constant $C'_0$.
\end{rem}

Now, we address subspaces of $\KL$ functions called \textit{weakly tractable}.
First, let introduce the following relation on the space of two scalar argument functions:

\begin{dfn}
	\label{dfn_orderonkappas}
	A function of two arguments $\beta_2$ is said to precede another function of two arguments $\beta_1$, with the respective relation denoted as $\beta_1 \succ \beta_2$, if $\forall v, t \spc \beta_1(v,t) > \beta_2(v, t)$.
\end{dfn}

Next, we consider a special notion of continuity on the space of parametric maps into $\KL$ functions made handy afterwards.
Let $\mathcal P = \R^M, M \in \N$.

\begin{dfn}
	\label{dfn_klcontparammaps}
	A parametric map $\beta[\bullet]: \mathcal P \ra \KL$ is called $\KL$-continuous if for any $\KL$ function $\xi$ there exists $\delta>0$ \sut $\forall \vartheta_1, \vartheta_2 \in \mathcal P$ with $\nrm{\vartheta_1 - \vartheta_2} \le \delta$ it holds that $\abs{ \beta[\vartheta'] - \beta[\vartheta] } \prec \xi$.
\end{dfn}

Finally, we introduce proper $\KL$-continuous maps which avoid pathological cases where $\beta[\bullet]$ could get ``saturated'' with the parameter, in other words, have a finite limit superior in the $\KL$-continuity sense.

\begin{dfn}
	\label{dfn_klproperparammaps}
	A parametric map $\beta[\bullet]: \mathcal P \ra \KL$ is called proper if it is continuous and for any $\vartheta \in \mathcal P$ there exists a $\vartheta' \in \mathcal P$ \sut $\beta[\vartheta'] \succ \beta[\vartheta]$.
\end{dfn}

Notice that the condition $\beta[\vartheta'] \succ \beta[\vartheta]$ implies existence of a $\KL$ function $\xi$ \sut $\beta[\vartheta'] \succeq \beta[\vartheta]+\xi$.
Take \eg $\xi := \frac{\beta[\vartheta'] + \beta[\vartheta]}{2}$.

\begin{dfn}
	\label{dfn_weaktractkl}
	A subspace $\mathcal F$ of $\KL$ is called weakly tractable if there exists a proper parametric map $\beta[\bullet]: \mathcal P \ra \KL$ \sut $\image(\beta[\bullet]) = \mathcal F$.
\end{dfn}

\begin{exm}
	A fixed-asymptotic tractable $\KL$ subspace consisting of functions of the form
	\[
		\forall v, t \spc \beta(v, t) = \kappa\left[\vartheta^{0:M-2}\right](v) \xi\left( e^{-\vartheta^{M-1} t} \right),
	\]	
	where $\xi$ is unbounded \wrt $\vartheta^{M-1}$, and for any $\vartheta_1$ there exists $\vartheta_2$ \sut $\forall v \spc \kappa\left[\vartheta_1^{0:M-2}\right](v) < \kappa\left[\vartheta_2^{0:M-2}\right](v)$, is weakly tractable.
	However, a general fixed-asymptotic tractable $\KL$ subspace is not necessarily weakly tractable.
\end{exm}

Now, the following may be claimed.

\begin{prp}
	\label{prp_weaktractpolicy}
	Let $\mathcal F$ be weakly tractable.
	Let $\policy_0$ satisfy
	\begin{equation}
		\label{eqn_weaktractstab}
		\begin{aligned}
			& \forall \state_0 \in \states \spc \PP{ \exists \beta \in \mathcal F \spc \goaldist(\State_t) \le \beta(\goaldist(\state_0)+C_0, t) \mid \Action_t \sim \policy_0(\bullet \mid \state_t)} \geq 1 - \eta, \\
			& \pushright{\eta \in (0,1), C_0>0.	}
		\end{aligned}
	\end{equation}
	
	Then, for each $\eta'>0$, there exists a $\KL$ function $\beta'$ \sut
	\begin{equation}
			\forall \state_0 \in \states \spc \PP{\goaldist(\State_t) \le \beta'(\goaldist(\state_0)+C_0, t) \mid \Action_t \sim \policy_0(\bullet \mid \state_t)} \geq 1 - \eta - \eta'. 
	\end{equation}
	Notice that the existence quantifier for $\beta'$ in the latter statement stays before the probability operator, whence uniformity over compacts of the sample space $\Omega$.
\end{prp}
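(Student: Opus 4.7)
The plan is to mirror the structure of the proof of \Cref{prp_fixasymtractpolicy}, replacing the explicit ``$+1$'' buffer on the $v$-component, which relied on the rigid factorization $\beta = \kappa[\vartheta^{0:M-2}](v)\,\xi(e^{-\vartheta^{M-1}t})$ available in the fixed-asymptotic tractable case, by a direct appeal to properness of the parametric map $\beta[\bullet]$ that supplies an unconditional dominator without any explicit offset.

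First I would fix $\state_0$, denote by $\Omega_0$ the event that some random $\beta^\omega \in \mathcal F$ bounds $\goaldist(\Traj_t^{\policy_0}(\state_0)[\omega])$ uniformly in $t$, choose a dense sequence $\{\vartheta_k\}_{k=0:\infty}$ in $\mathcal P$, and, for each $k$, select via properness some $\hat\vartheta_k \in \mathcal P$ with $\beta[\hat\vartheta_k] \succ \beta[\vartheta_k]$. I would then set $\hat\beta_k := \beta[\hat\vartheta_k]$ and $\bar\beta_k := \max_{j \le k}\hat\beta_j$; the pointwise maximum of finitely many $\KL$ functions is again $\KL$ (continuity, strict monotonicity in each argument, and the $\tau \ra \infty$ limit all transfer routinely), and the sequence $\{\bar\beta_k\}$ is pointwise non-decreasing.

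The crux is the following density claim: for every $\vartheta \in \mathcal P$ some $\hat\beta_k$ satisfies $\hat\beta_k \succ \beta[\vartheta]$. I would establish it by a two-step properness/continuity argument. By properness and the remark following \Cref{dfn_klproperparammaps}, pick $\vartheta'$ and a $\KL$ function $\zeta$ with $\beta[\vartheta'] \succeq \beta[\vartheta] + \zeta$; by $\KL$-continuity at $\vartheta'$ (\Cref{dfn_klcontparammaps}) obtain $\delta>0$ such that $\nrm{\vartheta'' - \vartheta'}\le\delta$ implies $\abs{\beta[\vartheta''] - \beta[\vartheta']}\prec \zeta/2$; by density of $\{\vartheta_k\}$, find $\vartheta_k$ within $\delta$ of $\vartheta'$, which yields $\beta[\vartheta_k] \succ \beta[\vartheta'] - \zeta/2 \succeq \beta[\vartheta] + \zeta/2 \succ \beta[\vartheta]$ and hence $\hat\beta_k = \beta[\hat\vartheta_k] \succ \beta[\vartheta_k] \succ \beta[\vartheta]$. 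Consequently every $\omega \in \Omega_0$ falls in
$$\bar\Omega_0^{(k)} := \{\omega : \forall \state_0, \forall t, \ \goaldist(\Traj_t^{\policy_0}(\state_0)[\omega]) \le \bar\beta_k(\goaldist(\state_0)+C_0,t)\}$$
for some $k$, so $\Omega_0 \subseteq \bigcup_k \bar\Omega_0^{(k)}$; continuity of probability from below along this increasing family then provides a finite $K$ with $\PP{\bar\Omega_0^{(K)}} \ge 1 - \eta - \eta'$, and I would take $\beta' := \bar\beta_K$.

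The hardest part will be the careful sequencing of properness and $\KL$-continuity in the density claim: the $\delta$ furnished by $\KL$-continuity depends on the target $\KL$ function $\zeta/2$, so one must first generate $\zeta$ via properness and only then invoke $\KL$-continuity on a small-enough scale -- attempting the opposite order leaves no handle to guarantee strict domination. A secondary subtlety is obtaining a single $\beta'$ uniform in the initial state $\state_0$ rather than one $\beta'$ per $\state_0$; since the construction of $\bar\beta_k$ from $\{\vartheta_k\}$ is intrinsic to $\mathcal F$ and does not involve $\state_0$, this reduces to building the $\forall \state_0$ quantifier into the definition of $\bar\Omega_0^{(k)}$, exactly as in the fixed-asymptotic tractable proof.
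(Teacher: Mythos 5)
Your proposal is correct and follows essentially the same route as the paper: a dense parameter sequence, properness to obtain a dominator exceeding the given $\beta^\omega$ by a $\KL$ margin $\zeta$, $\KL$-continuity to replace that dominator by a $\zeta/2$-close member of the countable family, then monotone envelopes and continuity of probability from below — which is exactly the paper's (much terser) argument deferring to \Cref{prp_fixasymtractpolicy}. Your observations that the order properness-then-continuity is essential and that no extra offset $C_0'$ arises here (unlike the fixed-asymptotic case) both match the paper; the only cosmetic differences are your use of pointwise maxima instead of partial sums for the envelope and the strictly redundant extra domination step via $\hat\vartheta_k$, since your chain already gives $\beta[\vartheta_k] \succ \beta[\vartheta]$ directly.
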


\begin{prf}
	The proof follows the same ideas as the one for \Cref{prp_fixasymtractpolicy}, yet many constructions become simpler.
	We again choose a dense sequence $\{\vartheta_k\}_k$ in $\mathcal P$.
	Then, for a given $\beta^\omega$ from $\mathcal F$, we find a $\hat \beta \in \mathcal F$ \sut $\hat \beta \succeq \beta^\omega + \xi$ for some $\KL$ function $\xi$ using the fact that $\beta[\bullet]$ is proper. 
	Using $\KL$-continuity of $\beta[\bullet]$, take a $\xi/2$-close neighbor to $\hat \beta$ from $\{\beta[\vartheta_k]\}_k$.
	The rest of the proof is essentially the same as for \Cref{prp_fixasymtractpolicy}.	
\end{prf}

\begin{rem}
	\label{rem_zerooffsetweaktract}
	Notice that the final $\KL$ function possesses the same offset as in the condition \eqref{eqn_weaktractstab}, in contrast to the fixed-asymptotic case.
\end{rem}

Let us refer to goal reaching policies which yield fixed-asymptotic or weakly tractable $\KL$ bounds as per \eqref{eqn_fixasymtractstab}, \eqref{eqn_weaktractstab} fixed-asymptotic and, respectively, weakly tractable.

\begin{exm}
	\label{exm_expstab}
	For a controlled Markov chain 
	\begin{equation}
		\label{eqn_linmdp}
		\State_{t + 1} = F \State_t + G \Action_t + \Noise_t, F, G \text{ being matrices of proper dimension},
	\end{equation}
	where $\Noise_t$ is uniformly distributed on $[-\bar \noise, \bar \noise]^n, n \in \N, \bar \noise < \diam(\G)$, a condition
	\begin{equation}
		\label{eqn_expstab}
		\PP{\lim_{t \ra \infty} \goaldist(\State_t) = 0 \mid \Action_t \sim \policy_0(\bullet \mid \state_t)} \ge 1-\eta, \eta \in (0,1)
	\end{equation}	
	implies $\policy_0$ is a stabilizer with the respective exponential fixed-asymptotic tractable $\KL$ subspace as per \Cref{exm_kappaellfinitely}.
\end{exm}

Hence, if $\policy_0$ is fixed-asymptotic or weakly tractable goal reaching, then \Cref{thm_calfuniform} applies.

\begin{rem}
	To verify statistical goal reaching properties in practice, one may use confidence bounds by \eg Hoeffding’s Inequality in terms of the environment episodes \citep{Hertneck2018Learningapprox}.
\end{rem}

\begin{rem}
	It follows that \Crefalgcalfstate \ qualitatively retains the goal reaching property of $\policy_0$, although the exact reaching times may be different.
\end{rem}


\section{Miscellaneous variants of the approach}
\label{sec_miscalgos}

\begin{algorithm}
	\begin{algorithmic}[1]
		\STATE {\bfseries Setup:} MDP, nominal agent details \eg networks, actor loss function $\loss_\act$, critic loss function $\loss_\crit$, and ${\bar \nu} > 0, \hat \kappa_\low, \hat \kappa_\up, \policy_0 \in \policies_0, \text{relaxation factor }1 > \relfact \geq 0$
		\STATE \textbf{Initialize}: $\state_0, \action_0 \sim \policy_0(\bullet \mid \state_0), w_0 \in \W$ \sut
		$$
		-\hat \kappa_\up(\nrm{\state_0}) \leq \hat \Value^{w_0}(\state_0) \leq -\hat \kappa_\low(\nrm{\state_0})
		$$
		\STATE $w^\dagger \gets w_{0}, \state^\dagger \gets \state_0, \relprob \gets \relfact$
		\FOR {$t := 1, \dots \infty$}
			\STATE Take action $\action_{t-1}$
			\STATE	Update policy:
			\[
				\policy_{t}(\bullet \mid \state_{t}) \la \argmin\limits_{\policy \in \policies} \loss_\act(\policy)
			\]		
			If desired, $\policy_{t}$ may be taken to produce a random action with probability $\eps>0$ 
			\STATE Get new action $\action_t$ sampled from $\policy_{t}(\bullet \mid \state_{t})$		
			\STATE Try critic update
			\[
			\begin{array}{lll}
		 		w^*  \gets  &  \argmin\limits_{w \in \W} \loss^\crit(w) \\
					& \sut \; \hat Q^w(\state_t, \action_t) - \hat Q^{w^\dagger}(\state^\dagger, \action^\dagger) \le - {\bar \nu}, \\
					& \phantom{\sut \;} \hat \kappa_\low (\nrm{\state_t}) \le \hat Q^{w}(\state_t, \action_t) \le \hat \kappa_\up(\nrm{\state_t})
			\end{array}
			\]
			\IF{ solution $w^*$ found}	
			\STATE $\state^\dagger \gets \state_t, \action^\dagger \gets \action_t, w^\dagger \gets w^*$
			\ENDIF
			\STATE $q \gets $ sampled unformly from $[0, 1]$.
			\IF{ solution $w^*$ not found and $q \geq \relprob$}
			\STATE $\policy_{t}(\bullet \mid \state_{t}) \gets \policy_0(\bullet \mid \state_t)$	
			\ENDIF	
			\STATE $\relprob \gets \relfact \relprob$
		\ENDFOR		 
	\end{algorithmic}
	\caption{Suggested goal reaching agent (state-action-valued critic by model $\hat Q^w(\state, \action)$).}
	\label{alg_calfstateaction}
\end{algorithm}

\begin{rem}
	\label{rem_calfqstab}
	\Crefthmcalfstabmean \ with its corollaries apply for goal reaching preservation by \Cref{alg_calfstateaction} as well.
	The proofs are exactly the same upon setting $\hat \Lambda^\dagger_{t} := - \hat Q^{w^\dagger_t}(\state^\dagger_t, \action^\dagger_t)$, where $\action^\dagger_t$ is set analogously to the setting of $\state^\dagger_t$ (see \Cref{sec_thms}).
\end{rem}

\section{Environments}
\label{sec_systems}


\subsection{Inverted pendulum}
\label{sub_cartpole}

A pole is attached by an unactuated joint to a cart, which moves along a frictionless track.
The goal is to balance the pole by applying a positive or a negative force $F$ to the left side of the cart.
The duration of one episode is set to $15$ seconds, while the sampling rate of measurements is set to 100 Hz.


\begin{equation}
	\label{eqn_acrtpole}
	\begin{aligned}
		& \state_{t} = \begin{pmatrix} \vartheta_{t} \\ x_{t} \\ \omega_{t} \\ v_{t} \end{pmatrix}, \begin{array}{l} \vartheta \text{ -- pendulum angle,} \\ x \text{ -- cart horizontal coordinate,} \\ \omega \text{ -- pendulum angular velocity,} \\ v \text{ -- cart horizontal velocity,} \end{array} a_{t} = \begin{pmatrix}
		F_{t}
\end{pmatrix},	\begin{array}{l}
F \text{ -- horizontal force,}
\end{array}	\\
		& \reward(\state, \action) =  -20 (1 - \cos\vartheta) - 2\omega^2.
	\end{aligned}
\end{equation}

The environment dynamics are described by discretizing the following equations of motion:

\begin{equation}
    \label{eqn_state_dynamics}
    \diff \left(
        \begin{array}{c}
            \vartheta \\ x \\ \omega \\ v
        \end{array}
        \right) = \left(
        \begin{array}{c}
            \omega \\
            v \\
            \frac{g \sin{\vartheta}(m_c + m_p) - \cos{\vartheta}(F + m_p l \omega^2 \sin{\vartheta})}{\frac{4l}{3}(m_c + m_p) - lm_p \cos^2{\vartheta}} \\
            \frac{F + m_p l \omega ^2 \sin{\vartheta} - \frac{3}{8}m_p g\sin(2\vartheta)}{m_c + m_p - \frac{3}{4} m_p \cos ^ 2 \vartheta}
        \end{array}
        \right) \diff t,
\end{equation}
where
\begin{itemize}
    \item $m_c = 0.1$ is the mass of the cart [kg]
    \item $m_p = 2.0$ is the mass of the pole [kg]
    \item $l = 0.5$ is the length of the pole [m]
    \item $g = 9.81$ is the acceleration of gravity [m / $s^2$]
\end{itemize}
We define the goal set $\G$  as follows
\begin{equation}
    \G = \{|\vartheta| \leq  0.1\}.
\end{equation}
Initial condition $\state_0$ for the inverted pendulum environment is 
\begin{equation}
	\state_0 =  \left(
	\begin{array}{c}
		\pi / 7 \\ 2 \\ 0 \\ 0
	\end{array}
	\right).
\end{equation}
The range of allowable actions in the environment is:
\begin{equation}
    F \in [-50, 50].
\end{equation}


\begin{figure*}[ht]
\vskip 0.2in
\begin{center}
\centerline{\includegraphics[scale=0.5]{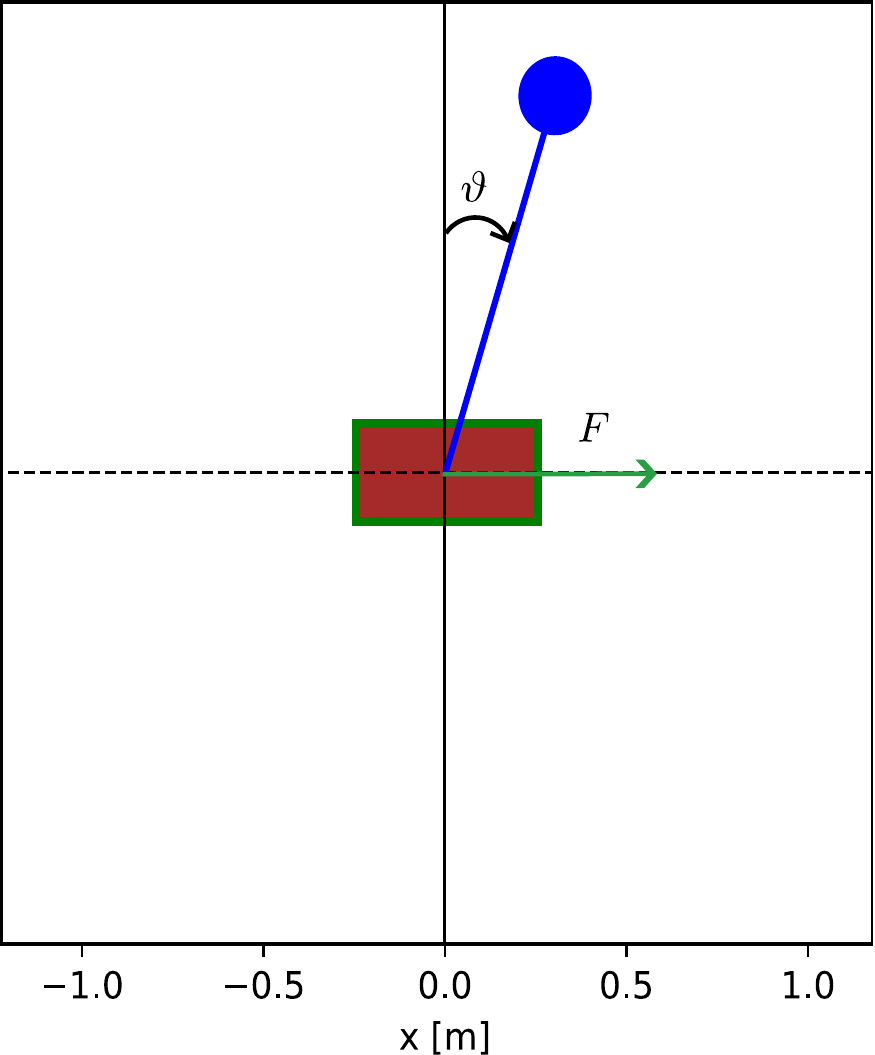}}
\caption{A diagram of the inverted pendulum environment.}
\end{center}
\vskip -0.2in
\end{figure*}

\newpage
\subsection{Pendulum}
\label{sub_invpendel}

The system consists of a pendulum attached at one end to a fixed point, and the other end being free.
The pendulum is initially pointed downwards and the goal is to apply torque $\tau$ to the joint to swing the pendulum into an upright position, with its center of gravity right above the fixed point.
The duration of one episode is set to $10$ seconds, while the sampling rate of measurements is set to 100 Hz.

\begin{equation}
	\label{eqn_invpendel}
	\begin{aligned}
		& \state_{t} = \begin{pmatrix}
		\vartheta_{t} \\
		\omega_{t}
		\end{pmatrix}, \begin{array}{l} \vartheta \text{ -- angle,} \\  \omega \text{ -- angular velocity,}\end{array}
		a_{t} = \begin{pmatrix}
		\tau_{t}
	\end{pmatrix},	\begin{array}{l}
	\tau \text{ -- torque,}
	\end{array}	\\
		& \reward(\state, \action) =  -\arccos ^ 2(\cos(\vartheta)) - 0.1 \omega^2 - 0.001 \tau ^ 2.
	\end{aligned}	
\end{equation}
The dynamics are described by the following differential equations (omitting the subscript \(t\) for simplicity):
\begin{equation}
\diff \left(
    \begin{array}{c}
        \vartheta \\
        \omega 
    \end{array}
    \right) = \left(
    \begin{array}{c}
        \omega \\
        \frac{3g}{2l}\sin(\vartheta) + \frac{3\tau}{m l^2}
    \end{array}
    \right) \diff t,
\end{equation}
where
\begin{itemize}
    \item $m = 0.127$ is the mass of the pendulum [kg]
    \item $l = 0.337$ is the length of the pendulum [m]
    \item $g = 9.81$ is the acceleration of gravity [m / $s ^ 2$]
\end{itemize}
We define the goal set $\G$  as follows
\begin{equation}
    \G = \{|\vartheta \text{ mod } 2 \pi - \pi | \leq  0.25, |\omega| \leq  0.25\}.
\end{equation}
Initial condition $\state_0$ for the pendulum environment is 
\begin{equation*}
\state_0 = \left(
\begin{array}{c}
	\pi \\
	1 
\end{array}.
\right) 
\end{equation*}
The range of allowable actions in the environment is:
\begin{equation}
    \tau \in [-0.1, 0.1].
\end{equation}
\begin{figure*}[ht]
\vskip 0.2in
\begin{center}
\centerline{\includegraphics[scale=0.5]{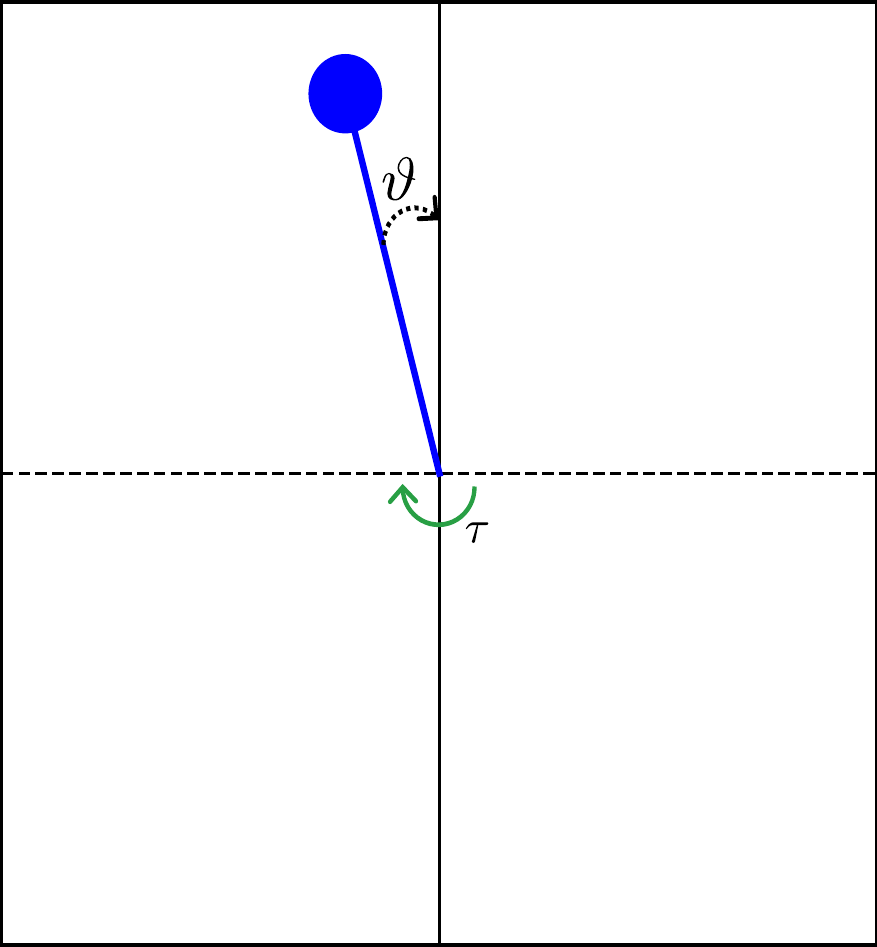}}
\caption{A diagram of the pendulum environment.}
\end{center}
\vskip -0.2in
\end{figure*}

\newpage
\subsection{Three-wheel robot}
\label{sub_3wrobot}

A pair of actuated wheels and a ball-wheel are attached to a platform that moves on a flat surface.
The wheels roll without slipping.
The pair of actions for the respective actuators is decomposed into forward velocity $v$ that is aligned with the direction in which the robot is facing and angular velocity $\omega$ applied to the center of mass of the robot and directed perpendicular to the platform.
The goal is to park the robot at the origin and facing the negative $x$ axis.
The duration of one episode is set to $5$ seconds, while the sampling rate of measurements is set to 100 Hz.

\begin{equation}
	\label{eqn_3wrobot}
	\begin{aligned}
		& \state_{t} = \begin{pmatrix}
			x_{t} \\ y_{t} \\ \vartheta_{t}
			\end{pmatrix}, \begin{array}{l}  x \text{ -- coordinate along x-axis,} \\ y \text{ -- coordinate along y-axis,} \\ \vartheta \text{ -- angle,} \end{array}
			a_{t} = \begin{pmatrix}
		v_{t} \\ \omega_{t}
\end{pmatrix},	\begin{array}{l}
v \text{ -- forward velocity,} \\ \omega \text{ -- angular velocity,}
\end{array}	 \\
		& \reward(\state, \action) =  -x^2 - 10 y^2 - \vartheta^2.
	\end{aligned}
\end{equation}
The dynamics are described by the following differential equations (omitting the subscript \(t\) for simplicity):
\begin{equation}
    \diff \left(
        \begin{array}{c}
            x \\
            y \\
            \vartheta 
        \end{array}
        \right) = \left(
        \begin{array}{c}
            x \cos \vartheta \\
            y \sin \vartheta \\  
            \omega
        \end{array}
        \right) \diff t.
    \end{equation}
We define the goal set $\G$  as follows
\begin{equation}
    \G = \{|x| \leq  1, |y| \leq  1, |\vartheta| \leq  0.7\}.
\end{equation}
Initial condition $\state_{0}$ is
$$
	\state_0 =  \left(
	\begin{array}{c}
		5 \\
		5 \\
		\frac{2 \pi }{3}
	\end{array}
	\right).
$$
The range of allowable actions in the environment is:
\begin{equation}
    (v, \omega) \in [-25, 25] \times [-5, 5].
\end{equation}
\begin{figure*}[ht]
\vskip 0.2in
\begin{center}
\centerline{\includegraphics[scale=0.5]{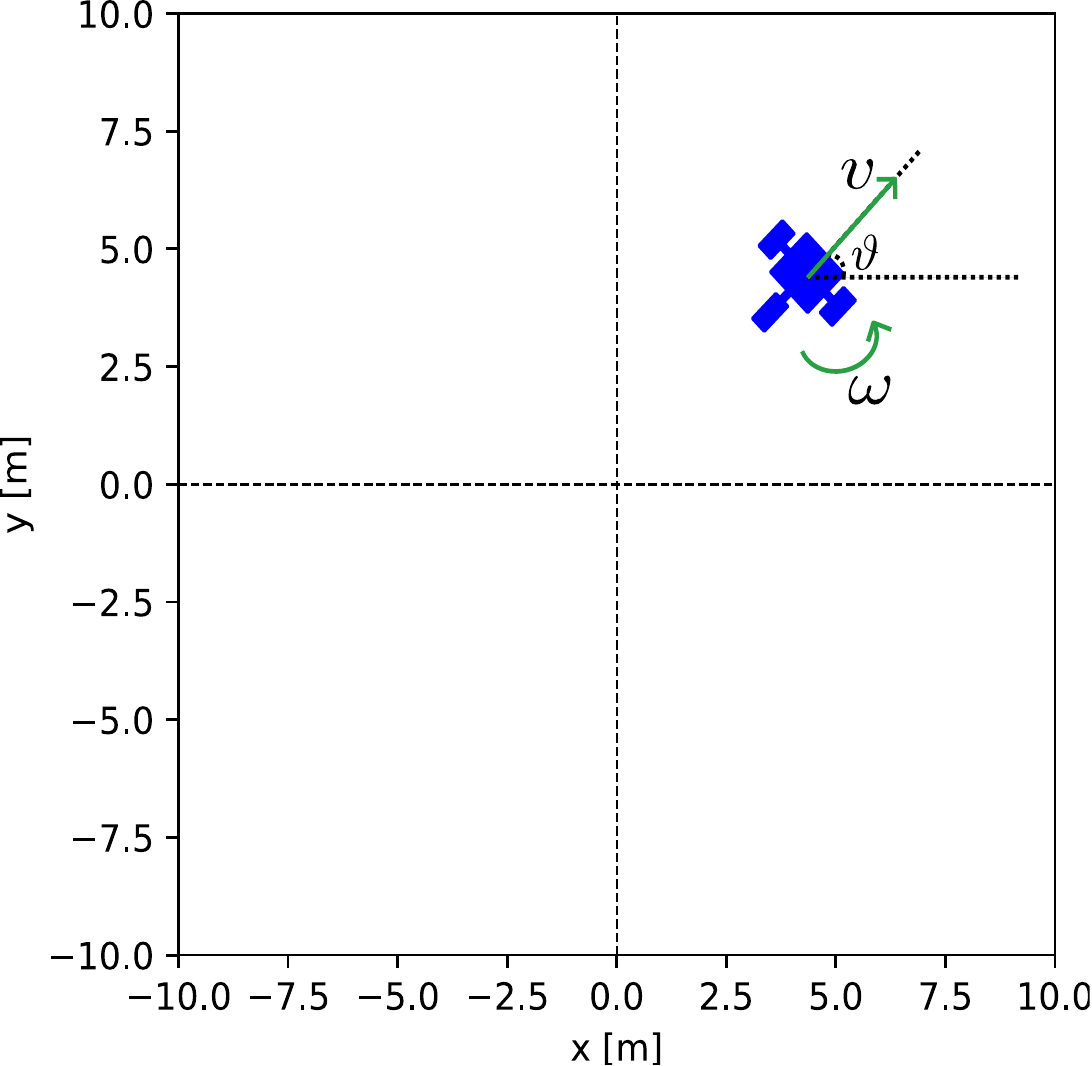}}
\caption{A diagram of the three-wheel robot environment.}
\end{center}
\vskip -0.2in
\end{figure*}

\newpage
\subsection{Two-tank system}
\label{sub_2tank}

The two-tank system consists of two tanks which are connected by an open valve.
Liquid inflow $i$ to the first tank is governed by a pump and there is an interconnection between the tanks.
In addition, there is a permanent leak from tank 2.
The goal is to keep both tanks exactly 40\% full.
The duration of one episode is set to $80$ seconds, while the sampling rate of measurements is set to 10 Hz.

\begin{equation}
	\label{eqn_2tank}
	\begin{aligned}
		& \state_{t} = \begin{pmatrix}
		h_{1, t} \\ h_{2, t}
		\end{pmatrix}, \begin{array}{l} h_{1} \text{ -- first tank fullness,} \\ h_{2} \text{ -- second tank fullness,} \end{array}
		a_{t} = \begin{pmatrix}
		i_{t}
\end{pmatrix},	\begin{array}{l}
i \text{ -- liquid inflow,}
\end{array}	 \\
		& \reward(\state, \action) =  -10 (h_1 - 0.4)^2 - 10 (h_2 - 0.4)^2.
	\end{aligned}
\end{equation}
The dynamics are described by the following differential equations (omitting the subscript \(t\) for simplicity):
\begin{equation}
    \diff \left(
    \begin{array}{c}
        h_1 \\
        h_2 
    \end{array}
    \right) = \left(
    \begin{array}{c}
        \frac{1}{\tau_1}(K_1 i - h_1) \\
        \frac{1}{\tau_2} (- h_2 + K_2 h_1 + K_3 h_2 ^ 2)
    \end{array}
    \right) \diff t,
\end{equation}
where
\begin{itemize}
\item $K_1 = 1.3$  represents the coefficient that scales the liquid inflow \( i \) to the first tank
\item $K_2 = 1.0$ represents the coefficient that scales the flow from the first tank \( h_1 \) to the second tank \( h_2 \)
\item $K_3 = 0.2$  represents the coefficient that scales the quadratic term of the second tank’s fullness
\item $\tau_1 = 18.4$ is the base area of the first tank
\item $\tau_2 = 24.4$ is the base area of the second tank
\end{itemize}
We define the goal set $\G$  as follows
\begin{equation}
    \G = \{|h_1 - 0.4| \leq  0.05, |h_2-0.4| \leq  0.05\}.
\end{equation}
Initial condition $\state_0$ is

\begin{equation}
	\state_0 = \left(
	\begin{array}{c}
		2 \\
		-2 
	\end{array}
	\right).
\end{equation}

The range of allowable actions in the environment is:
\begin{equation}
    i \in [0, 1].
\end{equation}

\begin{figure*}[ht]
\vskip 0.2in
\begin{center}
\centerline{\includegraphics[scale=0.5]{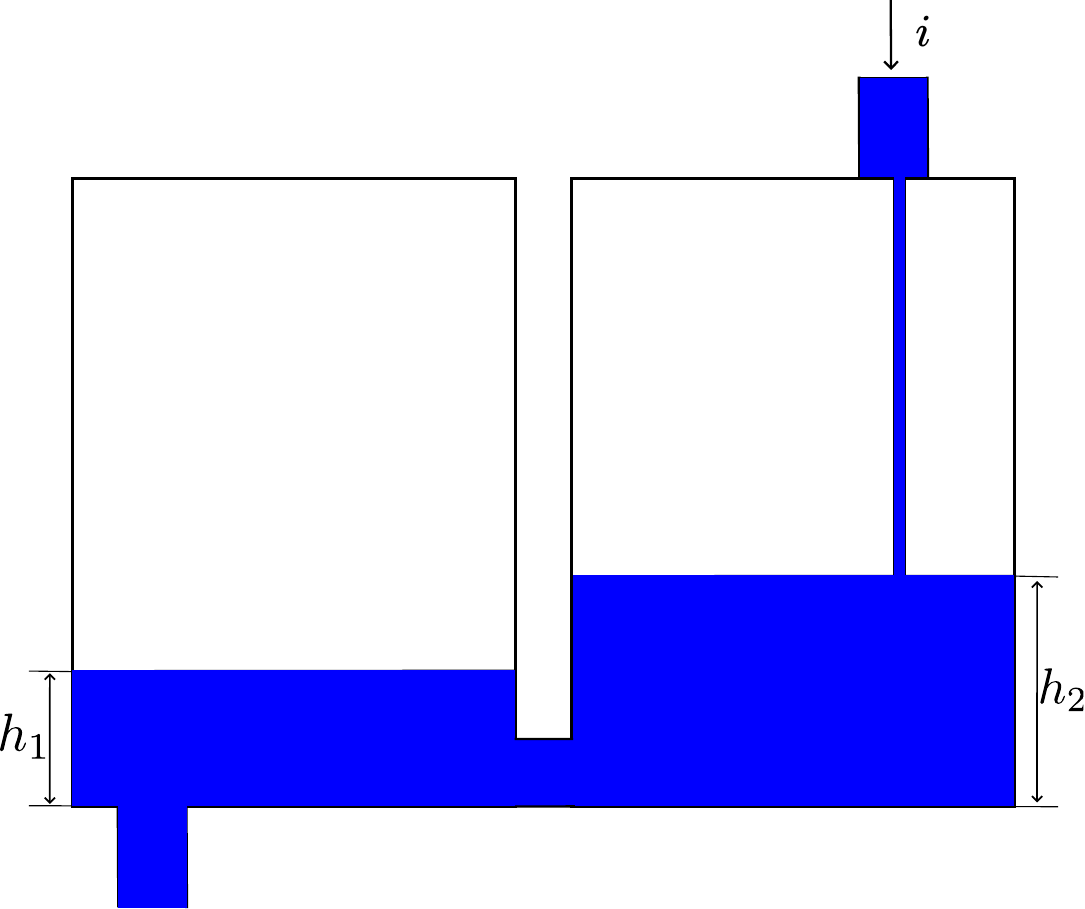}}
\caption{A diagram of the two-tank environment.}
\end{center}
\vskip -0.2in
\end{figure*}

\newpage
\subsection{Omnibot (kinematic point)}
\label{sub_kinpt}

A massless point moves on a plane with velocity $\left({v_{x}} \atop {v_{y}}\right)$.
The goal is to drive the point to the origin.
The duration of one episode is set to $10$ seconds, while the sampling rate of measurements is set to 100 Hz.

\begin{equation}
	\label{eqn_omnibot}
	\begin{aligned}
		& \state_{t} = \begin{pmatrix}
		x_{t} \\ y_{t}
		\end{pmatrix}, \begin{array}{l} x \text{ -- coordinate along x-axis,} \\ y \text{ -- coordinate along y-axis,} \end{array} a_{t} = \begin{pmatrix}
		v_{x, t} \\ v_{y, t}
\end{pmatrix},	\begin{array}{l}
v_{x} \text{ -- velocity along x-axis,} \\ v_{y} \text{ -- velocity along y-axis,}
\end{array}	 \\
		& \reward(\state, \action) =  -10 x^2 - 10 y^2.
	\end{aligned}
\end{equation}
The dynamics are described by the following differential equations (omitting the subscript \(t\) for simplicity):
\begin{equation}
    \diff \left(
    \begin{array}{c}
        x \\
        y
    \end{array}
    \right) = \left(
        \begin{array}{c}
            v_x \\ 
            v_y
        \end{array}
    \right) \diff t.        
\end{equation}
We define the goal set $\G$ as follows:
\begin{equation}
    \G = \{|x| \leq  0.5, |y| \leq  0.5\}.
\end{equation}
Initial condition $\state_0$ is 
\begin{equation}
	\state_0 = \left(
	\begin{array}{c}
		-10 \\
		-10 
	\end{array}
	\right).
\end{equation}
The range of allowable actions in the environment is:
\begin{equation}
(v_x, v_y) \in [-10, 10] \times [-10, 10].
\end{equation}


\begin{figure*}[ht]
\vskip 0.2in
\begin{center}
\centerline{\includegraphics[scale=0.5]{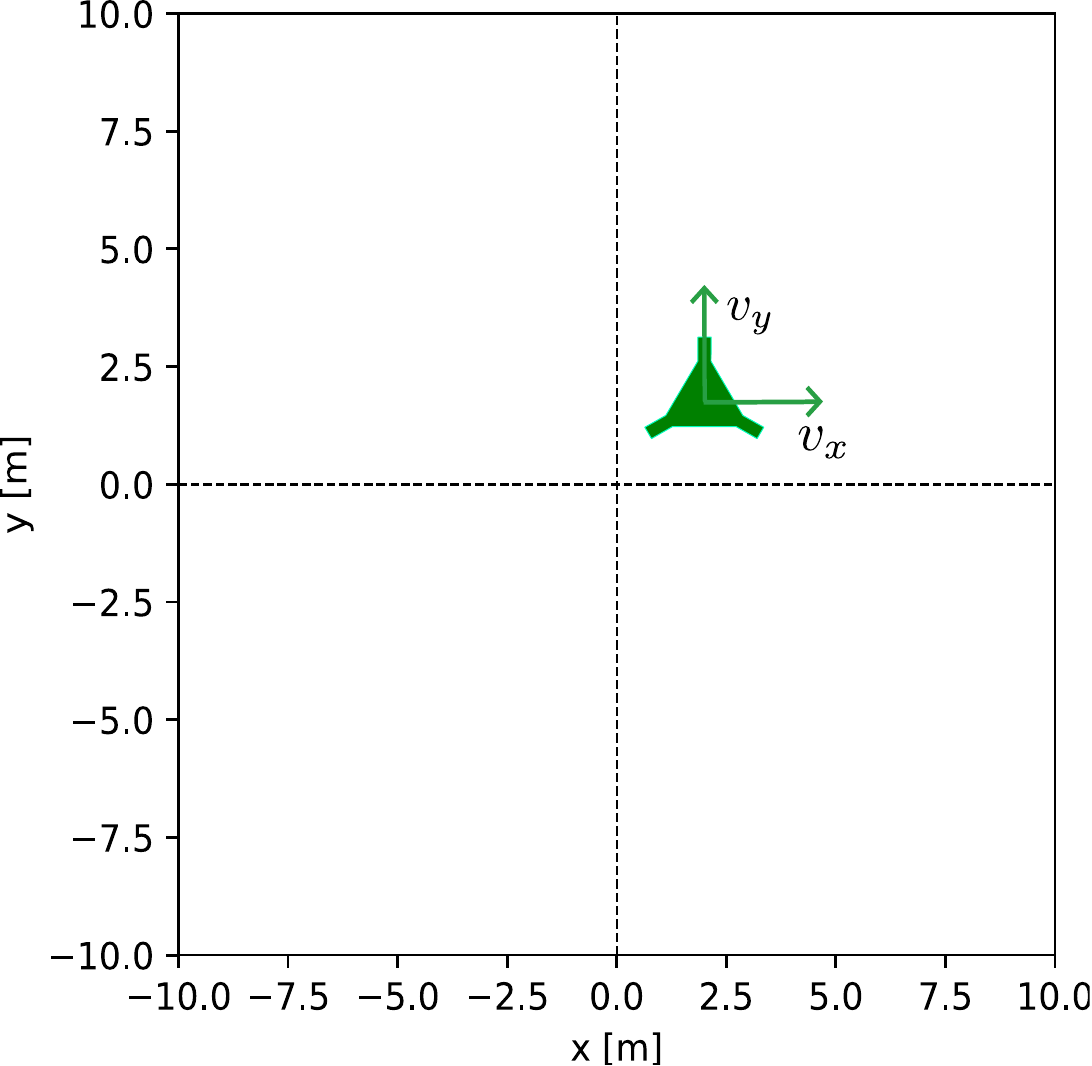}}
\caption{A diagram of the omnibot environment.}
\end{center}
\vskip -0.2in
\end{figure*}

\newpage
\subsection{Lunar lander}
\label{sub_lunlander}

A jet-powered spacecraft is approaching the surface of the moon.
It can activate its two engines and thus accelerate itself in the direction opposite to which the activated engine is facing.
The goal is to land at the desired location at the appropriate speed and angle.
The duration of one episode is set to $20$ seconds, while the sampling rate of measurements is set to 100 Hz.

\begin{equation}
	\label{eqn_lunarlander}
	\begin{aligned}
		& \state_{t} = \begin{pmatrix}
		x_{t} \\ y_{t} \\ \vartheta_{t} \\ v_{x, t} \\ v_{y, t} \\ \omega_{t} \end{pmatrix}, \begin{array}{l}  x \text{ -- horizontal coordinate,} \\ y \text{ -- vertical coordinate,} \\ \vartheta \text{ -- angle,} \\ v_{x} \text{ -- horizontal velocity,} \\ v_{y} \text{ -- vertical velocity,} \\ \omega \text{ -- angular velocity,} \end{array} a_{t} = \begin{pmatrix}
            F_{\text{side}, t} \\ F_{\text{vert},t}
\end{pmatrix},	\begin{array}{l}
F_{\text{side}} \text{ -- lateral thrust,} \\ F_{\text{vert}} \text{ -- upward thrust,}
\end{array}	 \\
		& \reward(\state, \action) =  -x^2 - 0.1 (y - 1)^2 - 10 \vartheta^2 - 0.1 v_{x}^{2} - 0.1 v_{y}^{2} - 0.1 \omega^{2}.
	\end{aligned}
\end{equation}
The dynamics are described by the following differential equations (omitting the subscript \(t\) for simplicity):
\begin{equation}
    \diff \left(
        \begin{array}{c}
           x \\
           y \\
           \vartheta \\
           v_x \\
           v_y \\
           \omega
        \end{array}
        \right) = \left(
        \begin{array}{c}
            v_x \\ v_y \\ \omega \\ \frac{1}{m}\left(F_{\text{side}} \cos \vartheta-F_{\text{vert}} \sin \vartheta\right) \\ \frac{1}{m}\left(F_{\text{side}} \sin \vartheta+F_{\text{vert}} \cos \vartheta\right)-g \\ \frac{F_{\text{side}}}{J}\
        \end{array}
        \right) \diff t,
\end{equation}
where
\begin{itemize}
    \item $m=10$ is the mass of the spaceship
    \item $J=3$ is the moment of inertia of the spaceship with respect to its axis of rotation
    \item $g=1.625$ is the acceleration of gravity
\end{itemize}
We define the goal set $\G$ as follows:
\begin{equation}
    \G = \{|y- 1| \leq  0.05, |\vartheta| \leq  0.05\}.
\end{equation}
Initial condition
\begin{equation}
	\state_0 =  \left(
	\begin{array}{c}
		3 \\
		5 \\
		\frac{2 \pi }{3} \\
		0 \\
		0 \\
		0
	\end{array}
	\right).
\end{equation}
The range of allowable actions in the environment is:
\begin{equation}
(F_{\text{side}}, F_{\text{vert}}) \in [-100, 100] \times [-50, 50].
\end{equation}
\begin{figure*}[ht]
\vskip 0.2in
\begin{center}
\centerline{\includegraphics[scale=0.5]{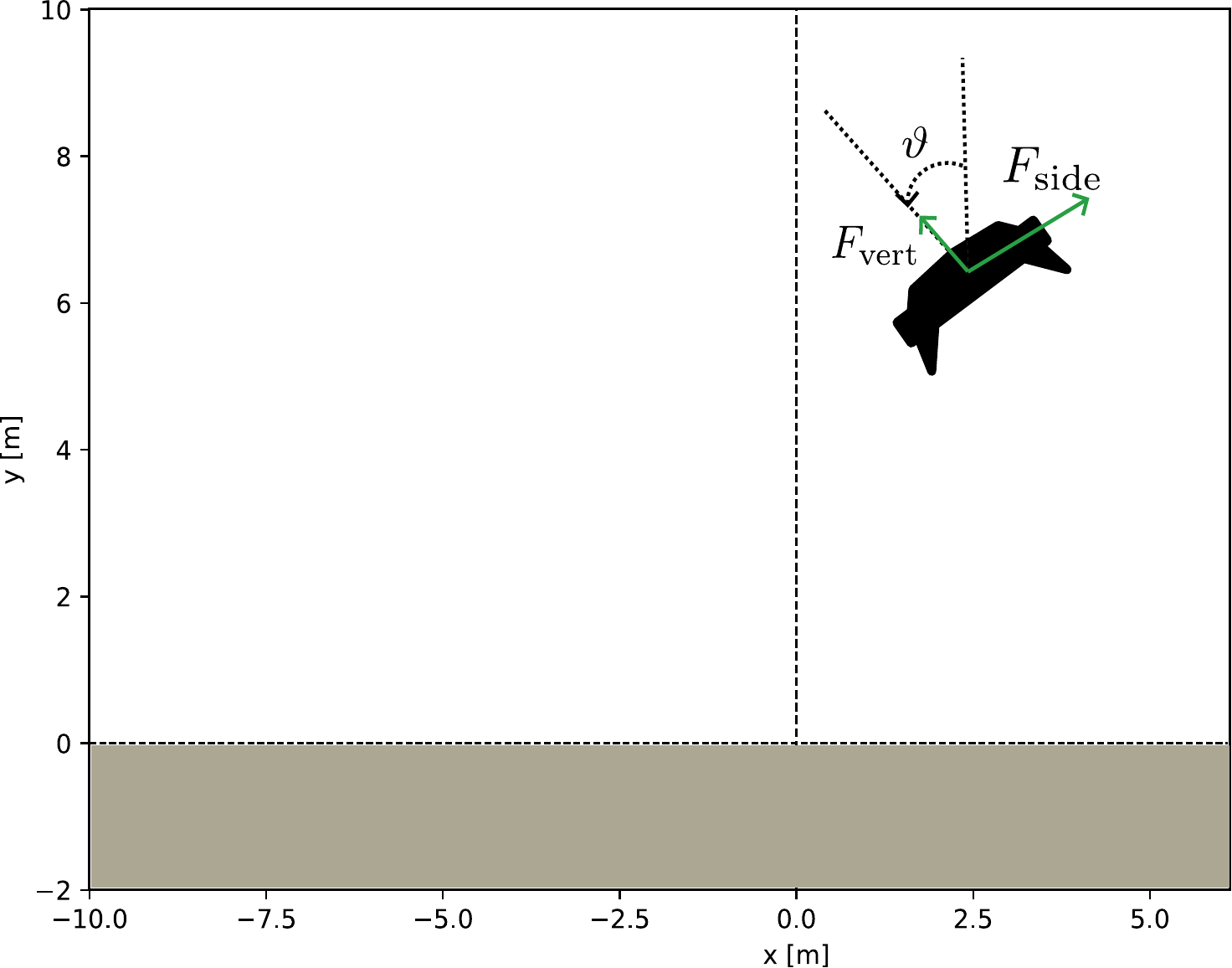}}
\caption{A diagram of the lunar lander environment.}
\end{center}
\vskip -0.2in
\end{figure*}

\section{Technical details of experiments}

\label{sec_experimdetails}

For evaluation, we used total rewards starting from initial conditions specified in \Cref{sec_systems}. 
This approach ensures that the performance of the policy is evaluated consistently from the same initial conditions, providing a clear baseline for comparison. 
Each experiment was conducted over 10 runs, with seeds set from 1 to 10.
All experiments are pre-configured in the respective \texttt{.yaml} files in \texttt{presets} folder of the associated code bundle found at \href{https://github.com/osinenkop/regelum-calf}{https://github.com/osinenkop/regelum-calf}, kindly see the code structure in \Cref{part_codeappendix}, \Cref{sec_codestruct}. 
The structure of this code, expected results and instructions on how to launch the code are contained in Code Appendix, \Cref{part_codeappendix}.
In our experiments, hyperparameters for SAC, PPO, and VPG algorithms were carefully tuned through a comprehensive grid search to optimize performance for each specific environment.
For SAC, although we began with the CleanRL's well-tuned defaults, we chose ranges for some hyperparameters based on preliminary tests indicating effective learning rates, thus exploring policy learning rates in the range of 0.0001 to 0.001. 
This led us to values like 0.00079 for Pendulum and 0.001 for Lunar Lander. 
For PPO, we tested policy learning rates from 0.0001 to 0.01, determining 0.0005 for the 3-wheeled robot and 0.003 for inverted pendulum, while also fine-tuning the critic TD order and hidden sizes based on environment complexity. 
Similarly, VPG's hyperparameters were optimized with policy learning rates set within 0.0001 to 0.01, choosing 0.0005 for the 3-wheeled robot. 
Each algorithm was finely tuned to leverage the properties of its environment, thereby enhancing overall performance.

\pagebreak
\subsection{Soft Actor Critic} 
Input for SAC algorithm:
\begin{itemize}
	\item $\theta$: Initial policy parameters
	\item $\phi_1, \phi_2$: Initial Q-function parameters
	\item $\mathcal{D}$: Empty replay buffer
\end{itemize}
\begin{algorithm}[H]
	\caption{Soft Actor-Critic}
	\begin{algorithmic}[1]
		\STATE $\phi_{\text{targ}, 1}, \phi_{\text{targ}, 2} \leftarrow \phi_1, \phi_2$ // Initialize target parameters
		\REPEAT
		\STATE Observe state $s$ and select action $a \sim \pi_\theta(\cdot \mid s)$
		\STATE Execute $a$ in the environment
		\STATE Observe next state $s'$, reward $r$, and done signal $d$
		\STATE Store $(s, a, r, s', d)$ in replay buffer $\mathcal{D}$
		\IF {$s'$ is terminal}
		\STATE Reset environment state
		\ENDIF
		\IF {it's time to update}
		\FOR {update step $j$ in range(however many updates)}
		\STATE Randomly sample a batch of transitions $B = \{(s, a, r, s', d)\}$ from $\mathcal{D}$
		\STATE Compute targets for the Q-functions:
		$$
		y(r, s', d) = r + \gamma(1 - d)\left(\min_{i=1,2} Q_{\phi_{\text{targ}, i}}(s', \tilde{a}') - \alpha \log \pi_\theta(\tilde{a}' \mid s')\right)
		$$
		$$
		\quad \tilde{a}' \sim \pi_\theta(\cdot \mid s')
		$$
		\STATE Update Q-functions by one step of gradient descent:
		$$
		\nabla_{\phi_i} \frac{1}{|B|} \sum_{(s, a, r, s', d) \in B}\left(Q_{\phi_i}(s, a) - y(r, s', d)\right)^2
		\quad \forall i \in \{1, 2\}
		$$
		\STATE Update policy by one step of gradient ascent:
		$$
		\nabla_\theta \frac{1}{|B|} \sum_{s \in B}\left(\min_{i=1,2} Q_{\phi_i}(s, \tilde{a}_\theta(s)) - \alpha
		\log \pi_\theta(\tilde{a}_\theta(s) \mid s)\right)
		$$
		where $\tilde{a}_\theta(s) \sim \pi_\theta(\cdot \mid s)$ is differentiable w.r.t. $\theta$ via the reparametrization trick.
		\STATE Update target networks:
		$$
		\phi_{\text{targ}, i} \leftarrow \rho \phi_{\text{targ}, i} + (1 - \rho) \phi_i \quad \forall i \in \{1,
		2\}
		$$
		\ENDFOR
		\ENDIF
		\UNTIL {convergence}
	\end{algorithmic}
\end{algorithm}

\begin{table}[H]
	\caption{Hyperparameters for SAC algorithm}
	\begin{adjustbox}{max width=\textwidth}
		\begin{tabular}{lllllll}
			\toprule
			& Pendulum & 2-tank & 3-wheeled & Cartpole & Lunar & Omnibot \\
			&  & system & robot &  & lander &  \\
			\midrule
			Total timesteps & 1,000,000 & 1,000,000 & 1,000,000 & 1,000,000 & 1,000,000 & 1,000,000 \\ 
			Buffer size & 1,000,000 & 1,000,000 & 1,000,000 & 1,000,000 & 1,000,000 & 1,000,000 \\ 
			Gamma ($\gamma$) & 0.99 & 0.99 & 0.99 & 0.99 & 0.993 & 0.99 \\ 
			Tau ($\tau$) & 0.005 & 0.005 & 0.005 & 0.005 & 0.005 & 0.005 \\ 
			Batch size & 256 & 256 & 256 & 256 & 256 & 256 \\ 
			Learning starts & 5,000 & 5,000 & 5,000 & 5,000 & 5,000 & 5,000 \\ 
			Policy learning rate ($\text{lr}_{\text{policy}}$) & 0.00079 & 3.0E-4 & 3.0E-4 & 3.0E-4 & 0.001 & 3.0E-4 \\ 
			Q-function learning rate ($\text{lr}_{\text{q}}$) & 0.00025 & 1.0E-3 & 1.0E-3 & 1.0E-3 & 1.0E-3 & 1.0E-3 \\ 
			Policy frequency & 2 & 2 & 2 & 2 & 2 & 2 \\ 
			Target network frequency & 1 & 1 & 1 & 1 & 1 & 1 \\ 
			Alpha ($\alpha$) & 0.0085 & 0.2 & 0.2 & 0.2 & 1.0E-3 & 0.2 \\ 
			Alpha learnable & False & True & True & True & False & True \\ 
			\bottomrule 
		\end{tabular}
	\end{adjustbox}
\end{table}

\subsection{Twin Delayed DDPG (TD3)}
Input for TD3 algorithm:
\begin{itemize}
	\item $\theta$: Initial policy parameters
	\item $\phi_1, \phi_2$: Initial Q-function parameters
	\item $\mathcal{D}$: Empty replay buffer
\end{itemize}

\begin{algorithm}[H]
	\caption{Twin Delayed DDPG (TD3)}
	\begin{algorithmic}[1]
		\STATE $\theta_{\text{targ}} \leftarrow \theta$, $\phi_{\text{targ}, 1} \leftarrow \phi_1$, $\phi_{\text{targ}, 2} \leftarrow \phi_2$ // Initialize target parameters
		\REPEAT
		\STATE Observe state $s$ and select action $a = \operatorname{clip}(\mu_\theta(s) + \epsilon, a_{\text{Low}}, a_{\text{High}})$, where $\epsilon \sim \mathcal{N}$
		\STATE Execute $a$ in the environment
		\STATE Observe next state $s'$, reward $r$, and done signal $d$
		\STATE Store $(s, a, r, s', d)$ in replay buffer $\mathcal{D}$
		\IF {$s'$ is terminal}
		\STATE Reset environment state
		\ENDIF
		\IF {it's time to update}
		\FOR {update step $j$ in range(however many updates)}
		\STATE Randomly sample a batch of transitions $B = \{(s, a, r, s', d)\}$ from $\mathcal{D}$
		\STATE Compute target actions:
		$$
		a'(s') = \operatorname{clip}(\mu_{\theta_{\text{targ}}}(s') + \operatorname{clip}(\epsilon, -c, c), a_{\text{Low}}, a_{\text{High}}), \quad \epsilon \sim \mathcal{N}(0, \sigma)
		$$
		\STATE Compute targets:
		$$
		y(r, s', d) = r + \gamma(1 - d)\min_{i=1,2} Q_{\phi_{\text{targ}, i}}(s', a'(s'))
		$$
		\STATE Update Q-functions by one step of gradient descent:
		$$
		\nabla_{\phi_i} \frac{1}{|B|} \sum_{(s, a, r, s', d) \in B} \left(Q_{\phi_i}(s, a) - y(r, s', d)\right)^2 \quad \forall i \in \{1, 2\}
		$$
		\IF {$j \bmod \texttt{policy\_delay} = 0$}
		\STATE Update policy by one step of gradient ascent:
		$$
		\nabla_\theta \frac{1}{|B|} \sum_{s \in B} Q_{\phi_1}(s, \mu_\theta(s))
		$$
		\STATE Update target networks:
		$$
		\begin{aligned}
			\phi_{\text{targ}, i} & \leftarrow \rho \phi_{\text{targ}, i} + (1 - \rho) \phi_i \quad \forall i \in \{1, 2\} \\
			\theta_{\text{targ}} & \leftarrow \rho \theta_{\text{targ}} + (1 - \rho) \theta
		\end{aligned}
		$$
		\ENDIF
		\ENDFOR
		\ENDIF
		\UNTIL {convergence}
	\end{algorithmic}
\end{algorithm}

\begin{table}[H]
	\caption{Hyperparameters for TD3 algorithm}
	\begin{adjustbox}{max width=\textwidth}
		\begin{tabular}{lllllll}
			\toprule
			& Pendulum & 2-tank & 3-wheeled & Cartpole & Lunar & Omnibot \\
			&  & system & robot &  & lander &  \\
			\midrule
			Total timesteps & 1,000,000 & 1,000,000 & 1,000,000 & 1,000,000 & 1,000,000 & 1,000,000 \\ 
			Learning rate ($\text{lr}$) & 3.0E-4 & 3.0E-4 & 3.0E-4 & 3.0E-4 & 3.0E-4 & 3.0E-4 \\ 
			Buffer size & 1,000,000 & 1,000,000 & 1,000,000 & 1,000,000 & 1,000,000 & 1,000,000 \\ 
			Gamma ($\gamma$) & 0.99 & 0.99 & 0.99 & 0.99 & 0.99 & 0.99 \\ 
			Tau ($\tau$) & 0.005 & 0.005 & 0.005 & 0.005 & 0.005 & 0.005 \\ 
			Batch size & 256 & 256 & 256 & 256 & 256 & 256 \\ 
			Policy noise & 0.2 & 0.2 & 0.2 & 0.2 & 0.2 & 0.2 \\ 
			Exploration noise & 0.1 & 0.1 & 0.1 & 0.1 & 0.1 & 0.1 \\ 
			Learning starts & 25,000 & 25,000 & 25,000 & 25,000 & 25,000 & 25,000 \\ 
			Policy frequency & 2 & 2 & 2 & 2 & 2 & 2 \\ 
			Noise clip & 0.5 & 0.5 & 0.5 & 0.5 & 0.5 & 0.5 \\ 
			\bottomrule 
		\end{tabular}
	\end{adjustbox}
\end{table}

\subsection{Proximal Policy Optimization (PPO)}
Input for PPO algorithm:
\begin{itemize}
	\item $\theta_1$ is the initial policy weights
	\item The policy $\policy^{\theta}(\bullet \mid \State)$ is represented by a truncated normal distribution\\ $\operatorname{Truncated}[\normpdf(\mu^{\theta}(\State), \sigma^2I)$], where $\sigma$ is a hyperparameter, $\mu^{\theta}$ is a neural network with weights $\theta$, and $I$ is the identity matrix.
	\item $\hat\Value^w(\State)$ is the model of the value function, represented by a neural network with weights $w$
	\item $M$ is the number of episodes
	\item $\mathcal{I}$ is the number of iterations
	\item $\eplen$ is the number of steps per episode
	\item $\gamma$ is the discount factor
	\item $\lract$ is the policy learning rate for Adam optimizer
	\item $\lrcrit$ is the critic learning rate for Adam optimizer
	\item $N^{\crit}_{\operatorname{epochs}}$ is the number of epochs for the critic
	\item $N^{\act}_{\operatorname{epochs}}$ is the number of epochs for the policy
	\item $\lambda$ is the GAE parameter
\end{itemize}

\begin{algorithm}[H]
	\caption{PPO}
	\begin{algorithmic}[1]
		\FOR {learning iteration $i := 1 \dots \mathcal I$}
		\FOR {episode $j := 1 \dots M$}
		\STATE obtain initial state $\State_0^{j}$
		\FOR {step $t := 0 \dots \eplen - 1$}
		\STATE sample action $\Action_t^j \sim \policy^{\theta}(\bullet \mid \State_t^{j})$
		\STATE transition to next state $t + 1$ from transition function $\State_{t+1}^j \sim \transit(\bullet \mid \State_t^j, \Action_t^j)$
		\ENDFOR
		\ENDFOR
		
\STATE  Optimize critic $\hat\Value^w$ with Adam optimizer:
\FOR {$1 \dots N^{\crit}_{\operatorname{epochs}}$}
\STATE Perform a gradient descent step:
$$w^{new} \la w^{old} - \lrcrit\nabla_w \loss_{\crit}(w) \big|_{w = w^{old}},$$
where $\loss_{\crit}(w)$ is a temporal difference loss:
\begin{equation}
	\label{eqn_td_loss}
	\sum_{j=1}^M\sum_{t=0}^{T-1 - N_{\text{TD}}}\left(\hat\Value^w(\State_t^j) - \sum_{t'=t}^{t + N_{\text{TD}} - 1} \gamma^{t'-t}\robj(\State_{t'}^j, \Action_{t'}^j) - \gamma^{N_{\text{TD}}}\hat\Value^w(\State_{t + N_{\text{TD}}}^j)\right)^2.
\end{equation}
\ENDFOR
\STATE Estimate GAE-Advantages: 
$$
\hat\Advan^{\policy ^{\theta _i}}(\State_t^j, \Action_t^j) := \sum_{t' = t}^{T-1} (\gamma \lambda) ^ {t'} \left(\robj(\State_{t'}^j, \Action_{t'}^j) + \gamma \hat\Value^w(\State_{t' + 1}^j) - \hat\Value^w(\State_{t'}^j)\right)
$$
\STATE Optimize policy $\policy^{\theta}$ with Adam optimizer:
\FOR {$1 \dots N^{\act}_{\operatorname{epochs}}$}
\STATE Perform a gradient ascent step:
$$\theta^{new} \la \theta^{old} + \lract\nabla_{\theta} \loss_{\act}(\theta) \big|_{\theta = \theta^{old}},$$
where $\loss_{\act}(\theta)$ is defined as follows:
\begin{equation}
	\hspace{-1cm} \begin{aligned}
		\frac {1}{M} \sum _{j = 1}^M \sum _{t=0}^{T-1} \gamma ^ t \min \Bigg( \hat\Advan ^{\policy ^{\theta _i}}(\State _t^j, \Action _t^j) \frac {\policy ^{\theta }(\Action _t^j \mid \State _t^j)}{\policy ^{\theta _i}(\Action _t^j \mid \State _t^j)}, 
		\hat\Advan ^{\policy ^{\theta _i}}(\State _t^j, \Action _t^j) \operatorname {clip}_{1 - \varepsilon }^{1 + \varepsilon }\left (\frac {\policy ^{\theta }(\Action _t^j \mid \State _t^j)}{\policy ^{\theta _i}(\Action _t^j \mid \State _t^j)}\right ) \Bigg).
		\end{aligned}
	\end{equation}
\ENDFOR
\STATE Denote $\theta^{i+1}$ as the latest value of $\theta^{new}$ in the \textbf{for} loop above.
\ENDFOR
\end{algorithmic}
\end{algorithm}

\begin{table}[H]
    \caption{Hyperparameters for PPO algorithm}
\begin{adjustbox}{max width=\textwidth}
\begin{tabular}{lllllll}
    \toprule
    & Inverted & 2-tank & 3-wheeled & Cartpole & Lunar & Omnibot \\
    & pendulum & system & robot &  & lander &  \\
    \midrule
    Sampling rate in Hz & 100 & 10 & 100 & 100 & 100 & 100\\ 
    Steps per episode ($\eplen$) & 1000 & 800 & 500 & 1500 & 1000 & 1000\\ 
    Number of episodes & 2 & 2 & 10 & 5 & 10 & 2\\ 
    Discount factor & 0.9964 & 0.9895 & 0.9964 & 0.9989 & 0.9964 & 0.9964\\ 
    Critic TD order ($N_{\text{TD}}$) & 1 & 70 & 1 & 1 & 1 & 1\\ 
    Critic hidden sizes & [100,100,100,100] & [100,50,10] & [15,15] & [100,50] & [15,15] & [100,50,10]\\ 
    Critic learning rate ($\lrcrit$) & 0.001 & 0.001 & 0.1 & 0.01 & 0.1 & 0.1\\ 
    Critic epochs ($N^{\crit}_{\text{epochs}}$) & 50 & 50 & 30 & 50 & 30 & 50\\ 
    Policy hidden sizes & [4] & [4] & [15, 15] & [32, 32] & [15, 15] & [4]\\ 
    Policy learning rate ($\lract$) & 0.01 & 0.01 & 0.0005 & 0.003 & 0.0005 & 0.005\\ 
    Policy epochs ($N^{\act}_{\text{epochs}}$) & 50 & 50 & 30 & 50 & 30 & 50 \\ 
    \bottomrule 
\end{tabular}
\end{adjustbox}
\end{table}

\newpage
\subsection{Vanilla Policy Gradient (VPG)}
Input for VPG algorithm:
\begin{itemize}
	\item $\theta_1$ is the initial policy weights
	\item The policy $\policy^{\theta}(\bullet \mid \State)$ is represented by a truncated normal distribution\\ $\operatorname{Truncated}[\normpdf(\mu^{\theta}(\State), \sigma^2I)$], where $\sigma$ is a hyperparameter, $\mu^{\theta}$ is a neural network with weights $\theta$, and $I$ is the identity matrix.
	\item $\hat\Value^w(\State)$ is the model of the value function, represented by a neural network with weights $w$
	\item $M$ is the number of episodes
	\item $\mathcal{I}$ is the number of iterations
	\item $\eplen$ is the number of steps per episode
	\item $\gamma$ is the discount factor
	\item $\lract$ is the policy learning rate for Adam optimizer
	\item $\lrcrit$ is the critic learning rate for Adam optimizer
	\item $N^{\crit}_{\operatorname{epochs}}$ is the number of epochs for the critic
	\item $\lambda$ is the GAE parameter
\end{itemize}

\begin{algorithm}[H]
    \caption{VPG}
    \begin{algorithmic}[1]
    \FOR {learning iteration $i := 1 \dots \mathcal I$}
        \FOR {episode $j := 1 \dots M$}
            \STATE obtain initial state $\State_0^{j}$
            \FOR {step $t := 0 \dots \eplen - 1$}
                \STATE sample action $\Action_t^j \sim \policy^{\theta}(\bullet \mid \State_t^{j})$
                \STATE transition to next state $t + 1$ from transition function $\State_{t+1}^j \sim \transit(\bullet \mid \State_t^j, \Action_t^j)$
            \ENDFOR
        \ENDFOR
        \STATE  Optimize critic $\hat\Value^w$ with Adam optimizer:
        \FOR {$1 \dots N^{\crit}_{\operatorname{epochs}}$}
            \STATE Perform a gradient decsent step:
            $$w^{new} \la w^{old} - \lrcrit\nabla_w \loss_{\crit}(w) \big|_{w = w^{old}},$$
            where $\loss_{\crit}(w)$ is a temporal difference loss:
            \begin{equation}
                \label{eqn_td_loss}
                \sum_{j=1}^M\sum_{t=0}^{T-1 - N_{\text{TD}}}\left(\hat\Value^w(\State_t^j) - \sum_{t'=t}^{t + N_{\text{TD}} - 1} \gamma^{t'-t}\robj(\State_{t'}^j, \Action_{t'}^j) - \gamma^{N_{\text{TD}}}\hat\Value^w(\State_{t + N_{\text{TD}}}^j)\right)^2.
            \end{equation}
        \ENDFOR
        \STATE Estimate GAE-Advantages: \\
        $
        \hat\Advan^{\policy ^{\theta _i}}(\State_t^j, \Action_t^j) := \sum_{t' = t}^{T-1} (\gamma \lambda) ^ {t'} \left(\robj(\State_{t'}^j, \Action_{t'}^j) + \gamma \hat\Value^w(\State_{t' + 1}^j) - \hat\Value^w(\State_{t'}^j)\right)
        $
        \STATE Perform a gradient ascent step:
        \begin{equation} 
            \theta_{i + 1} \la \theta_i + \lract \frac {1}{M} \sum _{j = 1}^M \sum _{t=0}^{T-1} \gamma ^ t \hat \Advan ^{\policy ^{\theta }}(\State _t^j, \Action _t^j) \nabla _{\theta } \log \policy ^{\theta }(\Action _t^j \mid \State _t^j)\big|_{\theta = \theta_i}.
        \end{equation}
    \ENDFOR
    \end{algorithmic}
\end{algorithm}

\begin{table}[H]
    \caption{Hyperparameters for VPG algorithm}
\begin{adjustbox}{max width=\textwidth}
\begin{tabular}{lllllll}
    \toprule
    & Inverted & 2-tank & 3-wheeled & Cartpole & Lunar & Omnibot \\
    & pendulum & system & robot &  & lander &  \\
    \midrule
    Sampling rate in Hz & 100 & 10 & 100 & 100 & 100 & 100\\ 
    Steps per episode ($\eplen$) & 1000 & 800 & 500 & 1500 & 1000 & 1000\\ 
    Number of episodes & 2 & 2 & 10 & 5 & 10 & 2\\ 
    Discount factor & 0.9964 & 0.9895 & 0.9964 & 0.9989 & 0.9964 & 0.9964\\ 
    Critic TD order ($N_{\text{TD}}$) & 1 & 70 & 1 & 1 & 1 & 1\\ 
    Critic hidden sizes & [100,100,100,100] & [100,50,10] & [15,15] & [100,50] & [15,15] & [100,50,10]\\ 
    Critic learning rate ($\lrcrit$) & 0.001 & 0.001 & 0.1 & 0.01 & 0.1 & 0.1\\ 
    Critic epochs ($N^{\crit}_{\text{epochs}}$) & 50 & 50 & 30 & 50 & 30 & 50\\ 
    Policy hidden sizes & [4] & [4] & [15, 15] & [32, 32] & [15, 15] & [4]\\ 
    Policy learning rate ($\lract$) & 0.01 & 0.01 & 0.0005 & 0.003 & 0.0005 & 0.005\\ 
    Policy epochs ($N^{\act}_{\text{epochs}}$) & 50 & 50 & 30 & 50 & 30 & 50 \\ 
    \bottomrule 
\end{tabular}
\end{adjustbox}
\end{table}

\newpage
\subsection{Deep Deterministic Policy Gradient (DDPG)}
\begin{algorithm}
    \caption{DDPG}
    \begin{algorithmic}[1]
    \STATE {\bfseries Input:} 
    \begin{itemize}
        \item $\theta_1$ is the initial policy weights
        \item The policy $\policy^{\theta}(\State)$ is represented by a neural network with weights $\theta$
        \item $M$ is the number of episodes
        \item $\mathcal{I}$ is the number of iterations
        \item $\eplen$ is the number of steps per episode
        \item $\gamma$ is the discount factor
        \item $\lract$ is the policy learning rate for Adam optimizer
    \end{itemize}
    \FOR {learning iteration $i := 1 \dots \mathcal I$}
        \FOR {episode $j := 1 \dots M$}
            \STATE obtain initial state $\State_0^{j}$
            \FOR {step $t := 0 \dots \eplen - 1$}
                \STATE sample action $\Action_t^j = \policy^{\theta}(\State_t^j)$
                \STATE (optional) add exploration noise $\Action_t^j := \Action_t^j + \varepsilon$, where $\varepsilon \sim \normpdf(0, \sigma_{\operatorname{expl}}^2)$
                \STATE transition to next state $t + 1$ by transition function $\State_{t+1}^j \sim \transit(\bullet \mid \State_t^j, \Action_t^j)$
            \ENDFOR
        \ENDFOR
        \STATE  Perform a gradient ascent step with Adam optimizer:
        $$
        \theta _{i+1} \la \theta _i + \frac {1}{M} \sum _{j = 1}^M \sum _{t=0}^{T-1} \gamma ^ t \nabla _{\theta }\policy ^{\theta }(\State _t^j)\nabla _{\action } \hat Q^{w}(\State _t, \action )\big \rvert _{\action = \Action _t^j},
        $$
    \ENDFOR
    \end{algorithmic}
\end{algorithm}

\pagebreak

\begin{table}[H]
 \caption{Hyperparameters for DDPG algorithm}   
\begin{adjustbox}{max width=\textwidth}
\begin{tabular}{lllllll}
    \toprule
    & Inverted & 2-tank & 3-wheeled & Cartpole & Lunar & Omnibot \\
    & pendulum & system & robot &  & lander &  \\
    \midrule
    Sampling rate in Hz & 100 & 10 & 100 & 100 & 100 & 100\\ 
    Steps per episode ($\eplen$) & 1000 & 800 & 500 & 1500 & 1000 & 1000\\ 
    Number of episodes & 2 & 2 & 10 & 5 & 10 & 2\\ 
    Discount factor & 0.9964 & 0.9895 & 0.9964 & 0.9989 & 0.9964 & 0.9964\\ 
    Critic TD order ($N_{\text{TD}}$) & 1 & 70 & 1 & 1 & 1 & 1\\ 
    Critic hidden sizes & [100, 100, 100, 100] & [100, 50, 10] & [15, 15] & [100, 50] & [15, 15] & [100, 50, 10]\\ 
    Critic learning rate ($\lrcrit$) & 0.001 & 0.001 & 0.1 & 0.01 & 0.1 & 0.1\\ 
    Critic epochs ($N^{\crit}_{\text{epochs}}$) & 50 & 50 & 30 & 50 & 30 & 50\\ 
    Policy hidden sizes & [4] & [4] & [15, 15] & [32, 32] & [15, 15] & [4]\\ 
    Policy learning rate ($\lract$) & 0.01 & 0.01 & 0.0005 & 0.003 & 0.0005 & 0.005\\ 
    Policy epochs ($N^{\act}_{\text{epochs}}$) & 50 & 50 & 30 & 50 & 30 & 50 \\ 
    \bottomrule 
\end{tabular}
\end{adjustbox}
\end{table}

\newpage 

\subsection{REINFORCE}

\begin{algorithm}
    \caption{REINFORCE}
    \begin{algorithmic}[1]
    \STATE {\bfseries Input:} 
    \begin{itemize}
        \item $\theta_1$ is the initial policy weights
        \item $B_t^1 := 0$ is the initial baseline for the 1st iteration
        \item The policy $\policy^{\theta}(\bullet \mid \State)$ is represented by a truncated normal distribution $\operatorname{Truncated}[\normpdf(\mu_{\theta}(\State), \sigma^2I)$], where $\sigma$ is a hyperparameter, $\mu^{\theta}$ is a neural network, and $I$ is the identity matrix.
        \item $M$ is the number of episodes
        \item $\mathcal{I}$ is the number of iterations
        \item $\eplen$ is the number of steps per episode
        \item $\gamma$ is the discount factor
        \item $\lract$ is the policy learning rate for Adam optimizer
    \end{itemize}
    \FOR {learning iteration $i := 1 \dots \mathcal I$}
        \FOR {episode $j := 1 \dots M$}
            \STATE obtain initial state $\State_0^{j}$
            \FOR {step $t := 0 \dots \eplen - 1$}
                \STATE sample action $\Action_t^j \sim \policy^{\theta}(\bullet \mid \State_t^{j})$
                \STATE transition to next state $t + 1$ by transition function $\State_{t+1}^j \sim \transit(\bullet \mid \State_t^j, \Action_t^j)$
            \ENDFOR
        \ENDFOR
        \STATE  Perform a gradient ascent step with Adam optimizer:
        $$
            \begin{aligned}\theta_{i+1} \la & \theta_i + \\ & \lract \frac{1}{M}\sum_{j = 1}^M \sum_{t = 0}^{\eplen-1}\sum_{t'=t}^{\eplen-1}\left( \gamma^{t'} \robj(\State_{t'}^j, \Action_{t'}^j) - B_{t}^i\right) \nabla_{\theta}\log \policy^{\theta}(A_t^j \mid \State_t^j)\rvert_{\theta = \theta_i}, \end{aligned}
        $$
        \STATE Update baselines for the next iteration: $B_t^{i + 1} = \frac{1}{M}\sum_{j = 1}^M \sum_{t' = t}^{\eplen} \gamma^{t'} \robj(\State_{t'}^j, \Action_{t'}^j)$
    \ENDFOR
    \end{algorithmic}
\end{algorithm}

\begin{table}[H]
    \caption{Hyperparameters for REINFORCE algorithm}
	\begin{adjustbox}{max width=\textwidth}
    \begin{tabular}{lllllll}
        \toprule
        & Inverted & 2-tank & 3-wheeled & Cartpole & Lunar & Omnibot \\
        & pendulum & system & robot &  & lander &  \\
        \midrule
        Sampling rate in Hz & 100 & 10 & 100 & 100 & 100 & 100\\ 
        Steps per episode ($\eplen$) & 1000 & 800 & 500 & 1500 & 1000 & 1000\\ 
        Number of episodes & 4 & 4 & 4 & 3 & 6 & 4\\ 
        Discount factor & 1.0 & 1.0 & 1.0 & 0.9989 & 1.0 & 1.0\\ 
        Policy hidden sizes & [4] & [4, 4] & [15, 15] & [32, 32] & [4] & [4, 4]\\ 
        Policy learning rate ($\lract$) & 0.1 & 0.1 & 0.01 & 0.05 & 0.1 & 0.1\\ 
        Policy epochs ($N^{\act}_{\text{epochs}}$) & 1 & 1 & 1 & 1 & 1 & 1 \\ 
        \bottomrule 
    \end{tabular}
    \end{adjustbox}
\end{table}

\subsection{Agent by \Crefalgcalfstate}

The algorithm is detailed in the listing \Cref{alg_calfstate}. 
We employed temporal difference loss for the critic and greedy policy optimizing on each step.
We considered both the initial ``daggered'' critic weights and the initial $\relprob$ as episode-to-episode learnable parameters.
The update of the former was triggered by the hyperparameter \texttt{is\_propagate\_safe\_weights}.
The latter \ie the initial $\relprob$ was updated via fixed increments episode-to-episode.
In other words, it ensures that $w^{\dagger}$ is carried forward from one episode to the next.
The \texttt{is\_nominal\_first} parameter, when set to \texttt{True}, triggered $\policy_0$ in the first episode.
This way one may achieve a learning curve starting at the value of $\policy_0$.
The key impact on performance was primarily driven by two main hyperparameters: the range of the initial $\relprob$ defined by \texttt{relax\_probability\_min} and \texttt{relax\_probability\_max}, as well as \texttt{is\_propagate\_safe\_weights}. 
These hyperparameters were easy to tune to achieve fast learning. 
CasADi was employed to implement the critic model.


\begin{table}[H]
    \caption{Hyperparameters for agent by \Crefalgcalfstate}
\begin{adjustbox}{max width=\textwidth}
\begin{tabular}{lllllll}
    \toprule
    & Pendulum & 2-tank & 3-wheeled & Inverted & Lunar & Omnibot \\
    &  & system & robot & pendulum & lander &  \\
    \midrule
    Sampling rate in Hz & 100 & 10 & 100 & 100 & 100 & 100\\ 
    Steps per episode ($\eplen$) & 1000 & 800 & 500 & 1500 & 1000 & 1000\\  
    Discount factor & 1.0 & 1.0 & 1.0 & 1.0 & 1.0 & 1.0\\ 
    Critic TD order ($N_{\text{TD}}$) & 1 & 1 & 2 & 2 & 1 & 1\\ 
    Critic batch size ($\Treplay$) & 10 & 3 & 32 & 3 & 3 & 2 \\ 
    relax\_probability\_min & 0.01 & 0. & 0. & 0.5 & 0. & 0.999 \\
    relax\_probability\_max & 0.999 & 0.8 & 0.49 & 0. & 0.46 & 0.75 \\
    is\_propagate\_safe\_weights & False & False & False & False & True & False \\
    is\_nominal\_first & False & True & False & True & False & True \\
    \bottomrule 
\end{tabular}
\end{adjustbox}
\end{table}

\section{Miscellaneous results}
\label{sec_rawresults}

\begin{figure}[H]
\begin{center}
\centerline{\includegraphics[width=\textwidth]{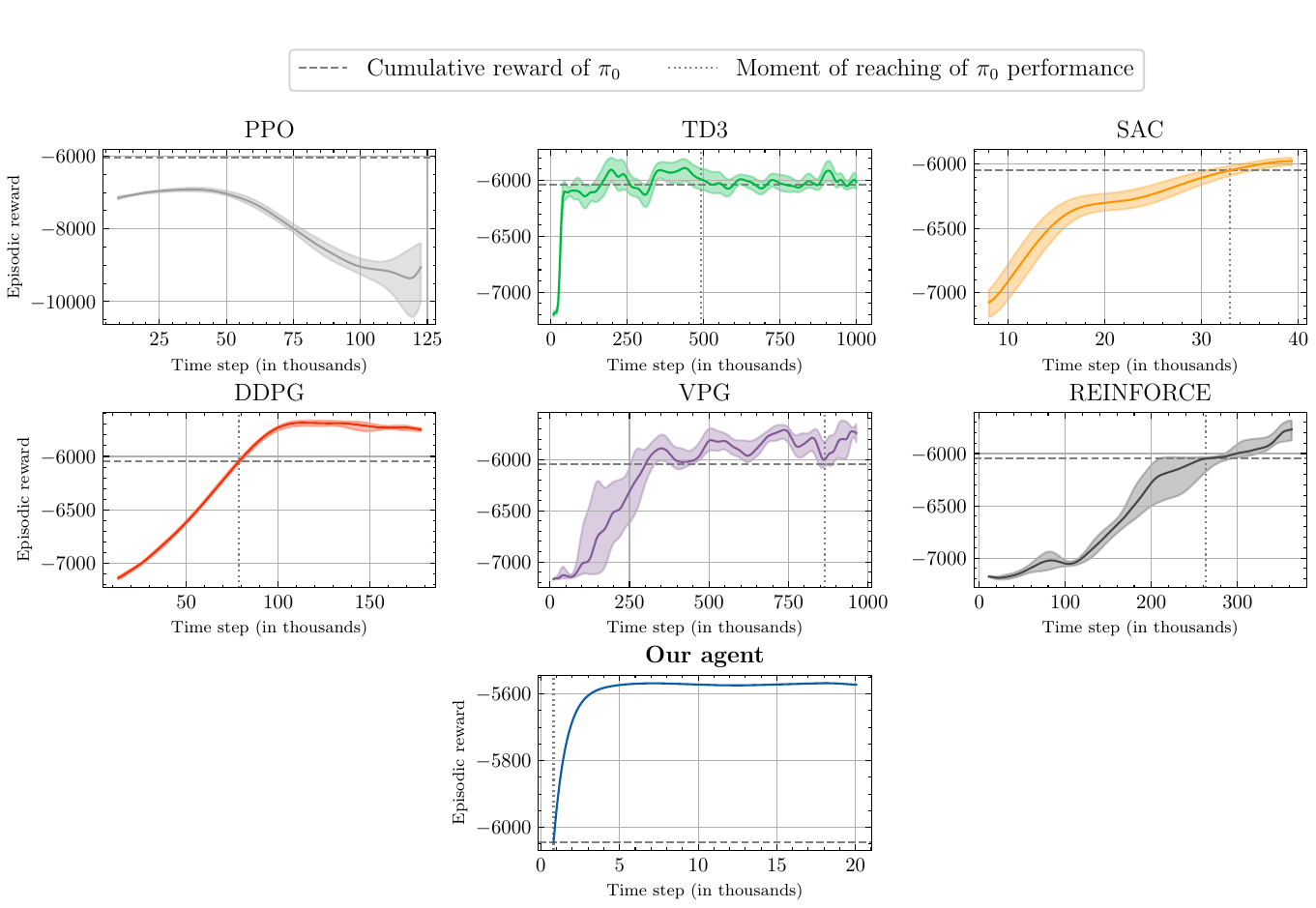}}
\caption{Smoothed learning curves for two-tank system. Each plot is smoothed using a rolling median followed by Bezier interpolation.}
\label{fig_rawlearncurves}
\end{center}
\end{figure}

\begin{figure}[H]
    \begin{center}
    \centerline{\includegraphics[width=\textwidth]{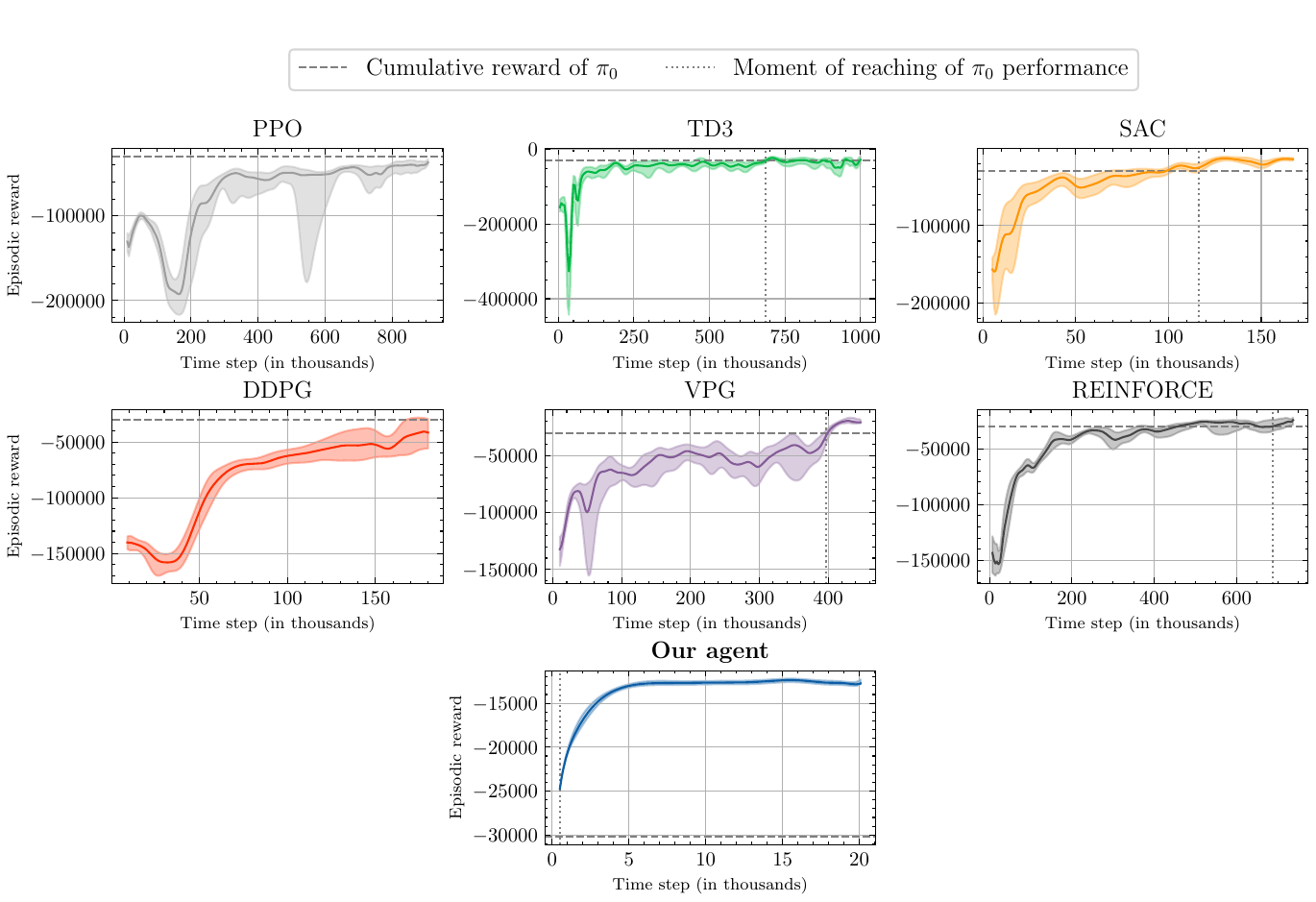}}
    \caption{Smoothed learning curves for three-wheel robot. Each plot is smoothed using a rolling median followed by Bezier interpolation.}
    \label{fig_rawlearncurves}
    \end{center}
\end{figure}

\begin{figure}[H]
    \begin{center}
    \centerline{\includegraphics[width=\textwidth]{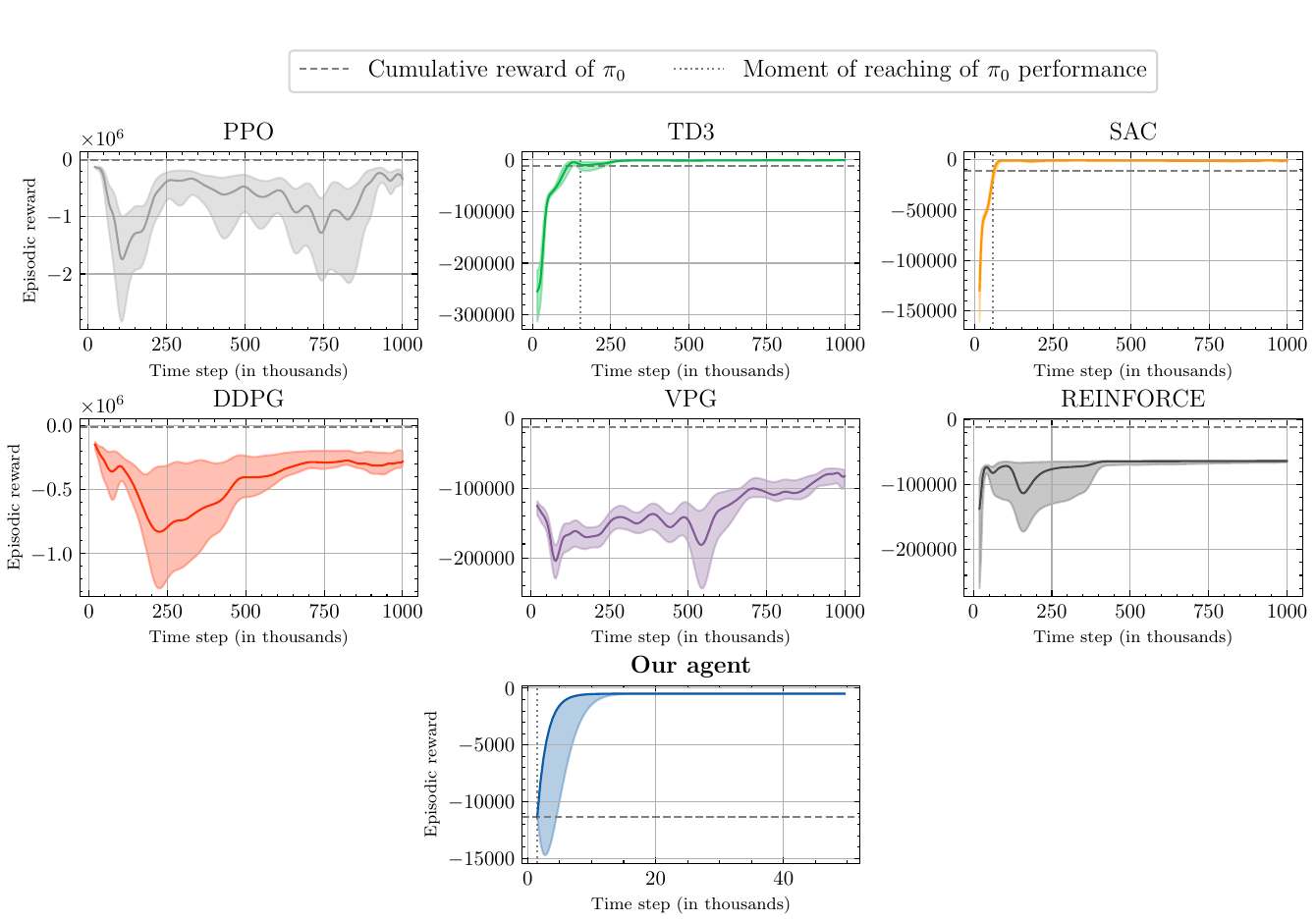}}
    \caption{Smoothed learning curves for inverted pendulum. Each plot is smoothed using a rolling median followed by Bezier interpolation.}
    \label{fig_rawlearncurves}
    \end{center}
\end{figure}

\begin{figure}[H]
    \begin{center}
    \centerline{\includegraphics[width=\textwidth]{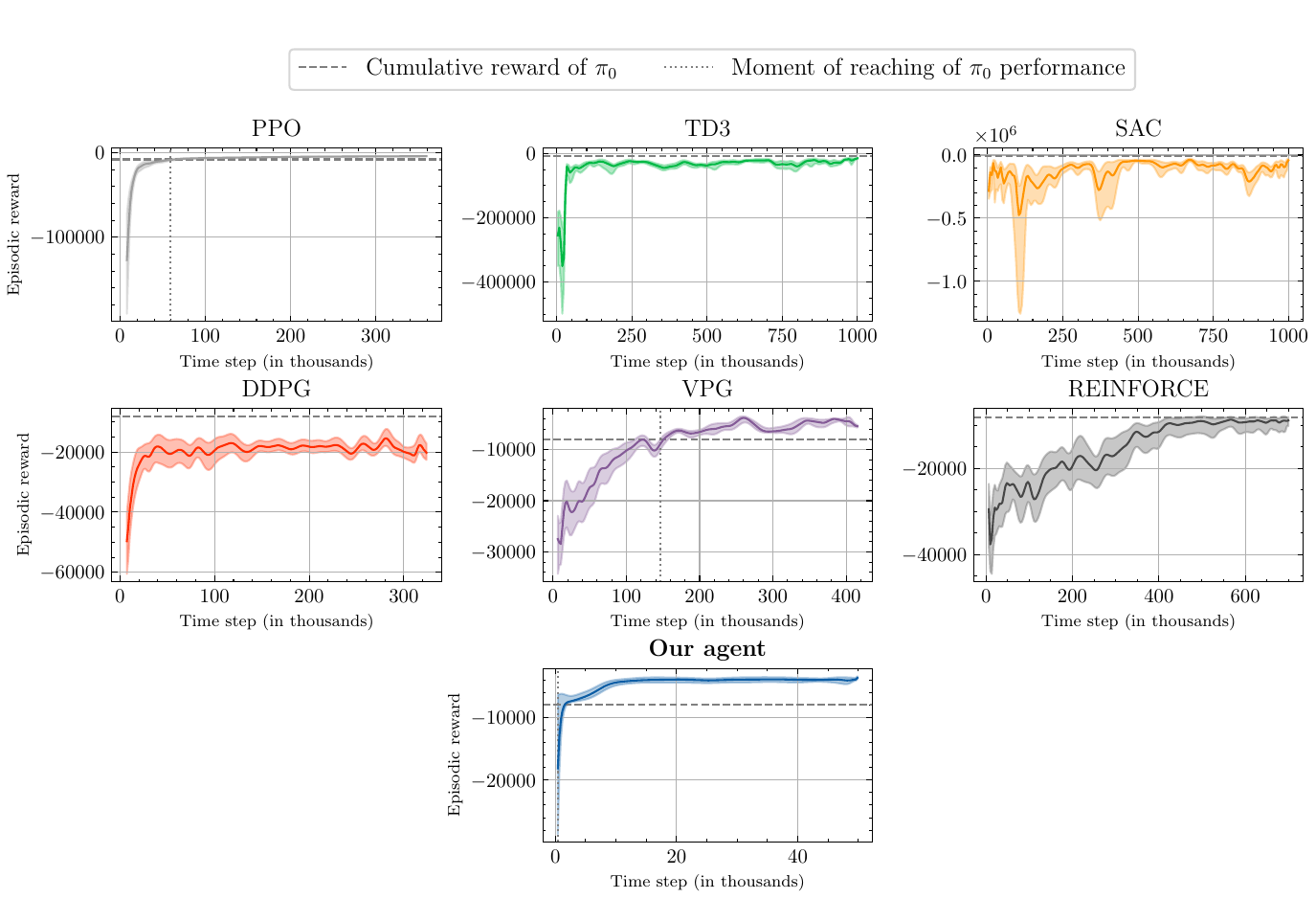}}
    \caption{Smoothed learning curves for lunar lander. Each plot is smoothed using a rolling median followed by Bezier interpolation.}
    \label{fig_rawlearncurves}
    \end{center}
\end{figure}

\begin{figure}[H]
    \begin{center}
    \centerline{\includegraphics[width=\textwidth]{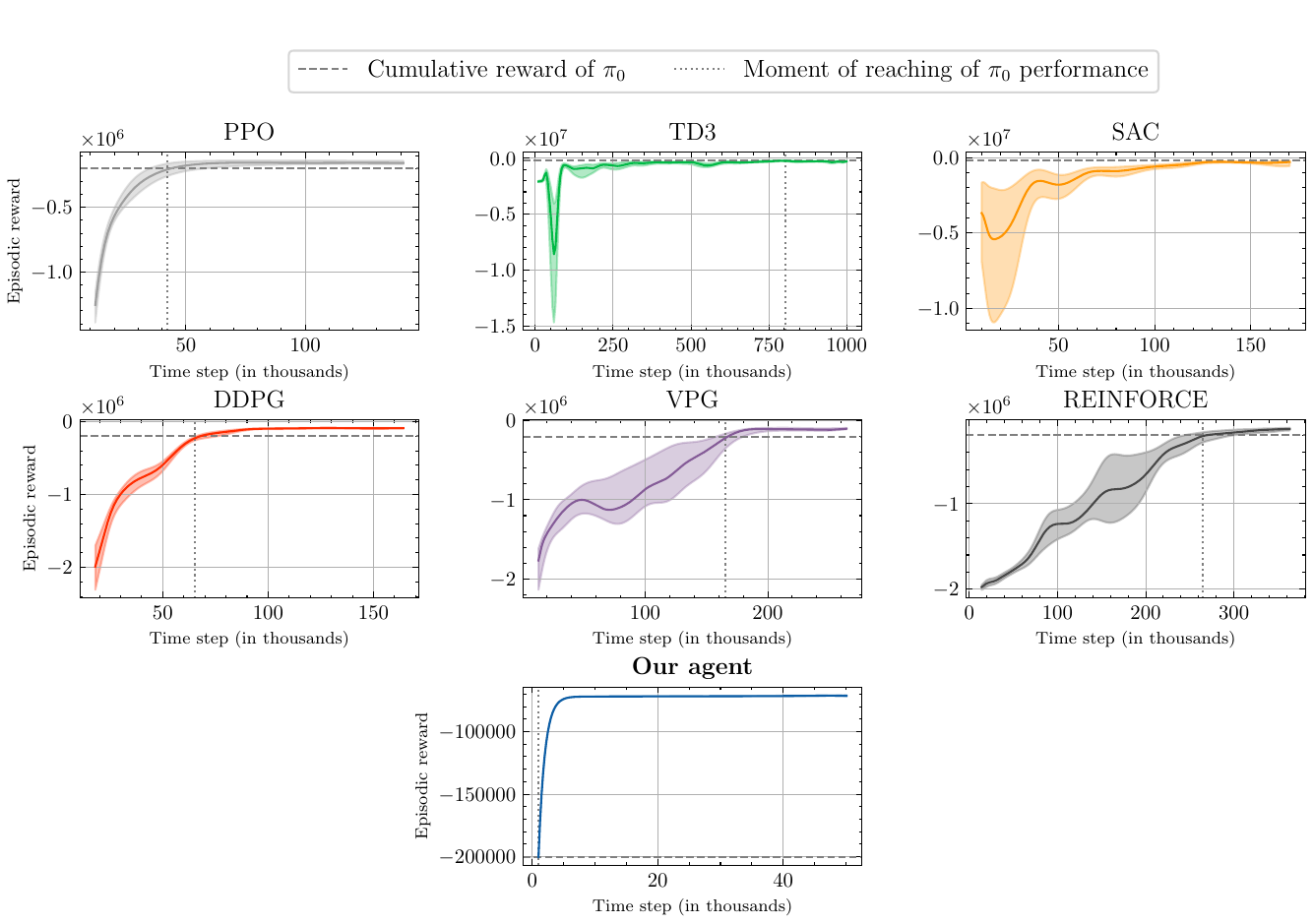}}
    \caption{Smoothed learning curves for omnibot. Each plot is smoothed using a rolling median followed by Bezier interpolation.}
    \label{fig_rawlearncurves}
    \end{center}
\end{figure}

\begin{figure}[H]
    \begin{center}
    \centerline{\includegraphics[width=\textwidth]{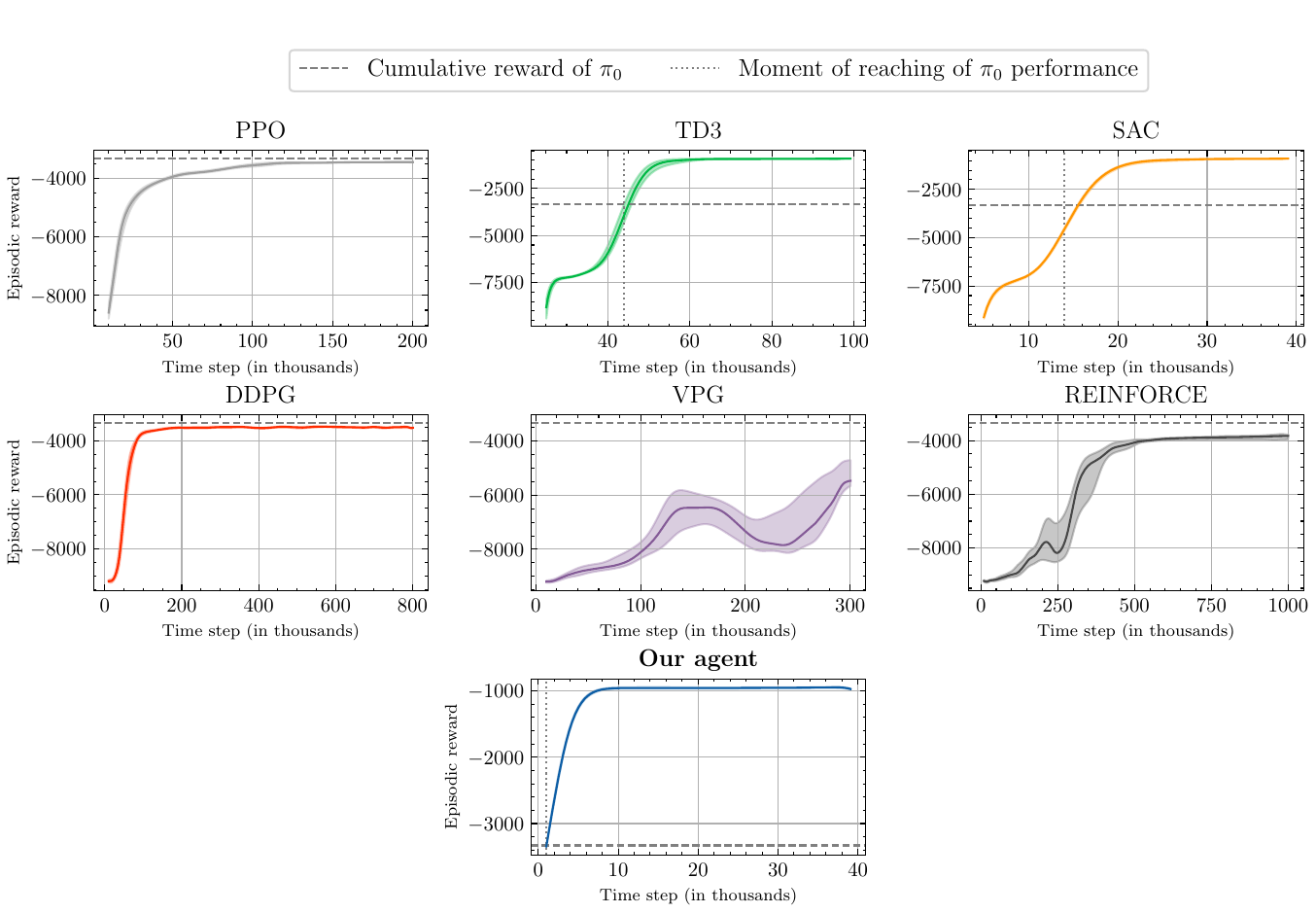}}
    \caption{Smoothed learning curves for pendulum. Each plot is smoothed using a rolling median followed by Bezier interpolation.}
    \label{fig_rawlearncurves}
    \end{center}
\end{figure}
\includegraphics[width=0.5\textwidth]{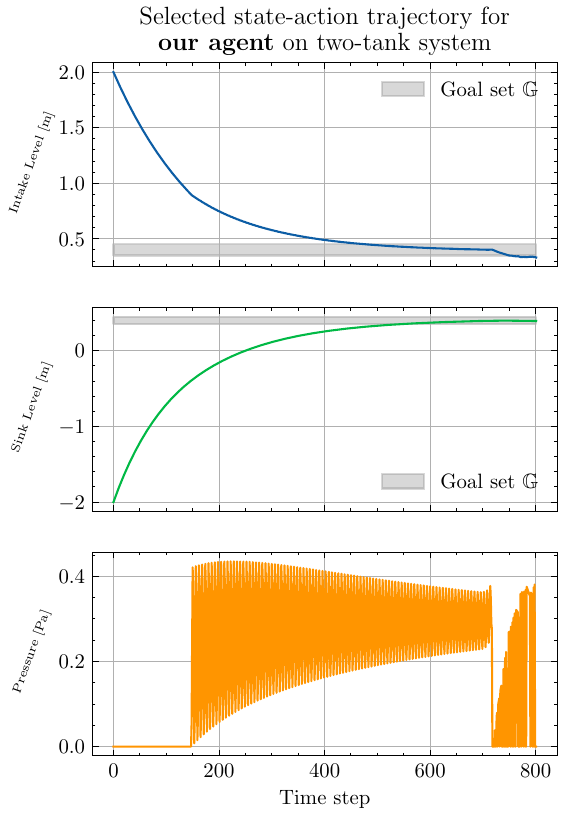}
\includegraphics[width=0.5\textwidth]{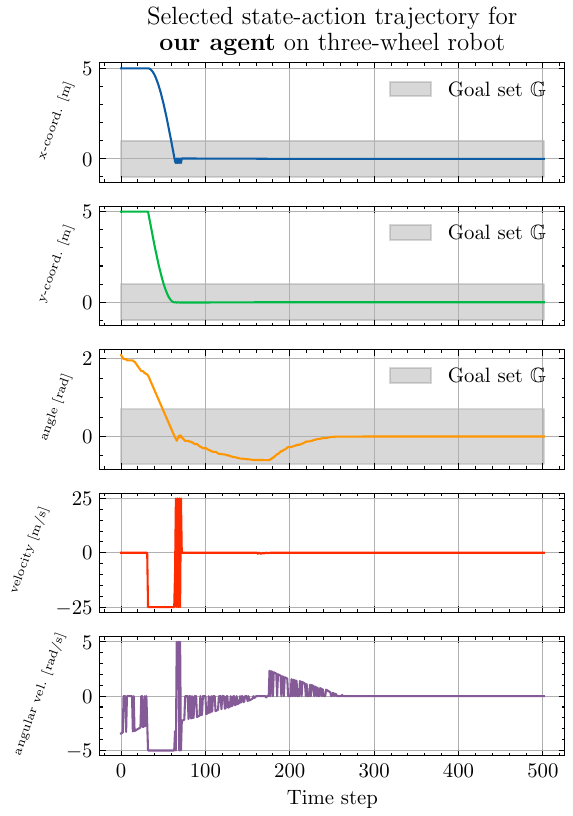}
\includegraphics[width=0.5\textwidth]{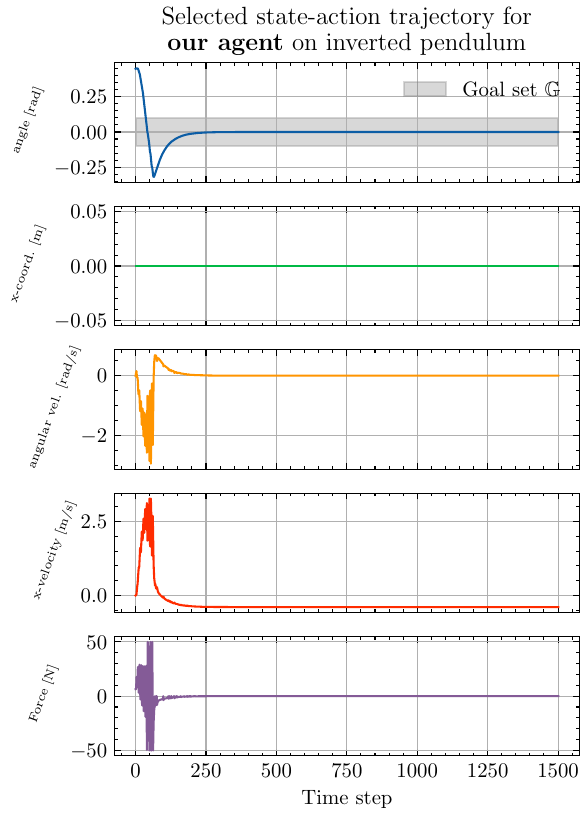}
\includegraphics[width=0.5\textwidth]{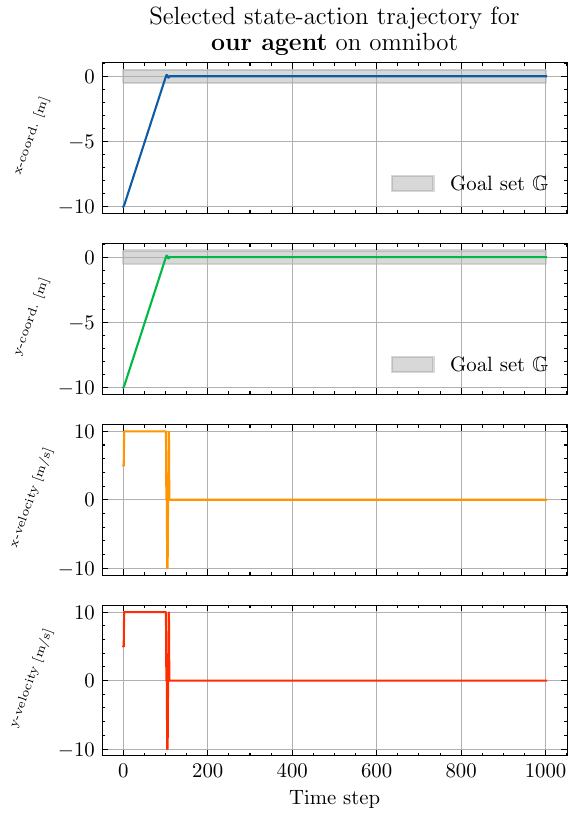}
\includegraphics[width=0.5\textwidth]{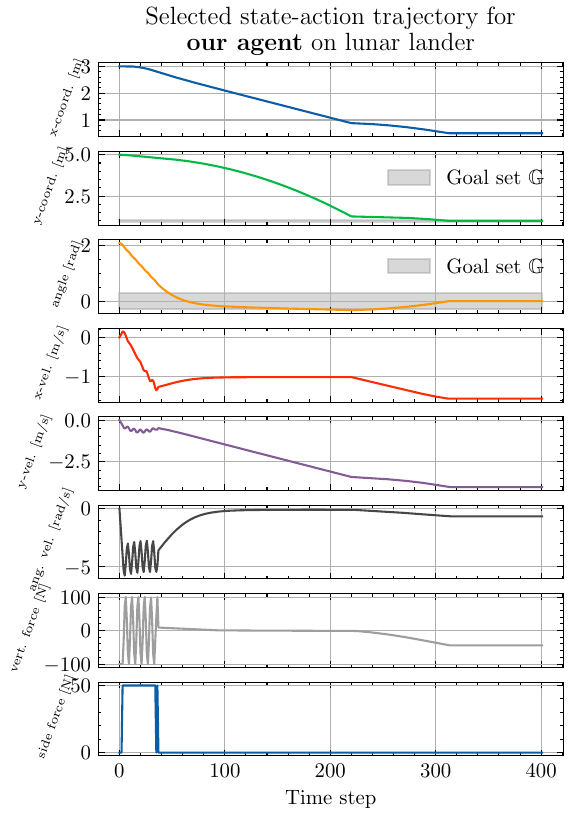}
\includegraphics[width=0.5\textwidth]{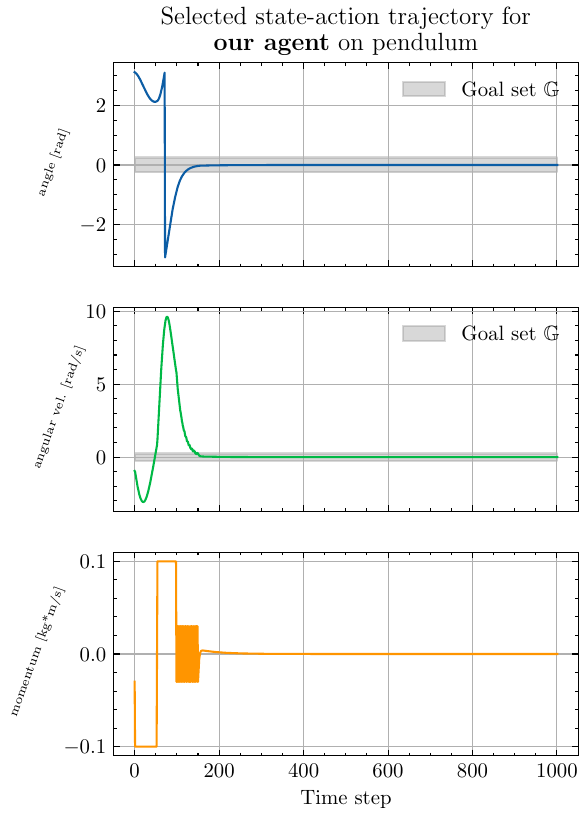}
\includegraphics[width=0.5\textwidth]{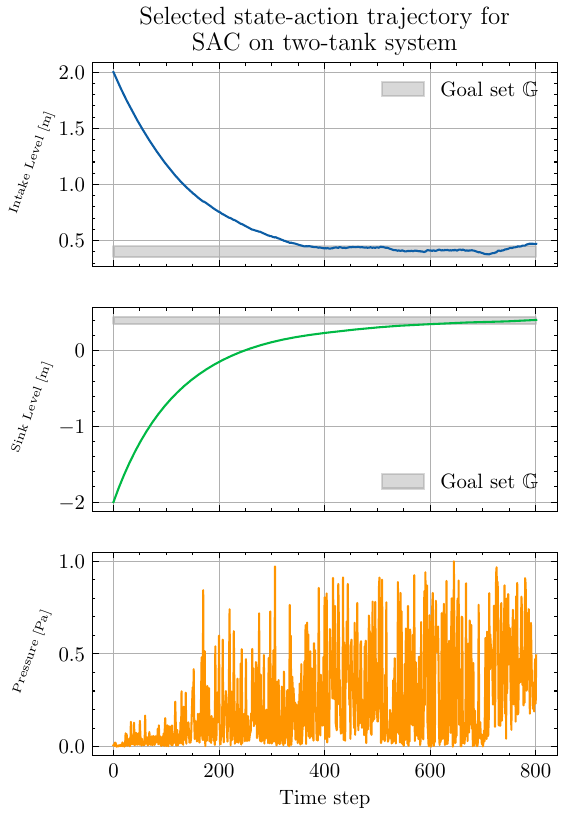}
\includegraphics[width=0.5\textwidth]{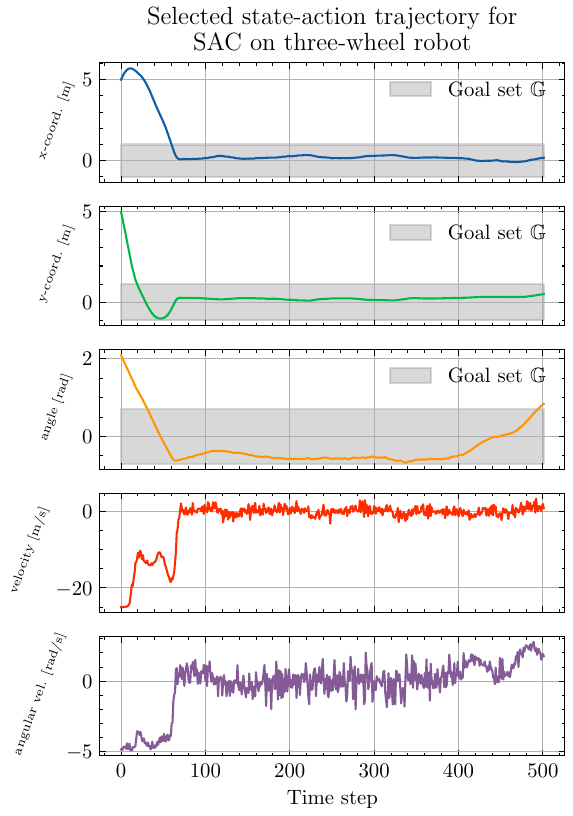}
\includegraphics[width=0.5\textwidth]{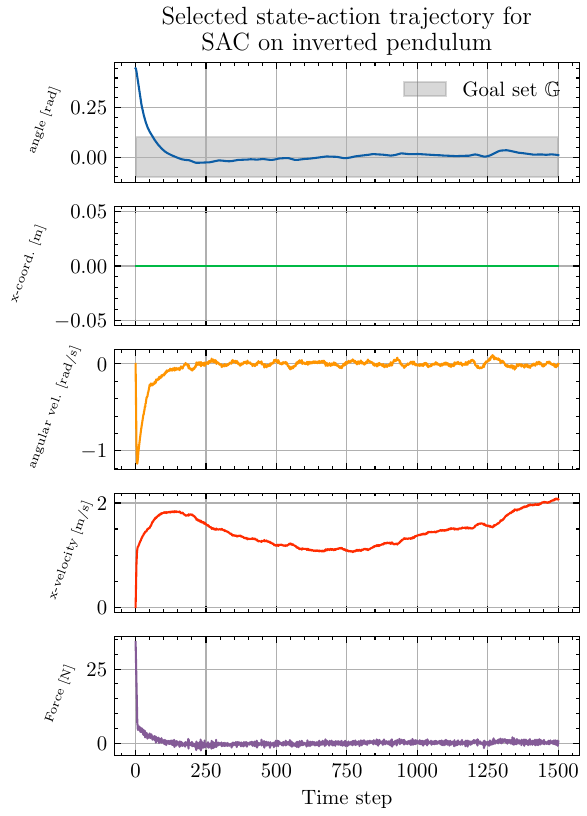}
\includegraphics[width=0.5\textwidth]{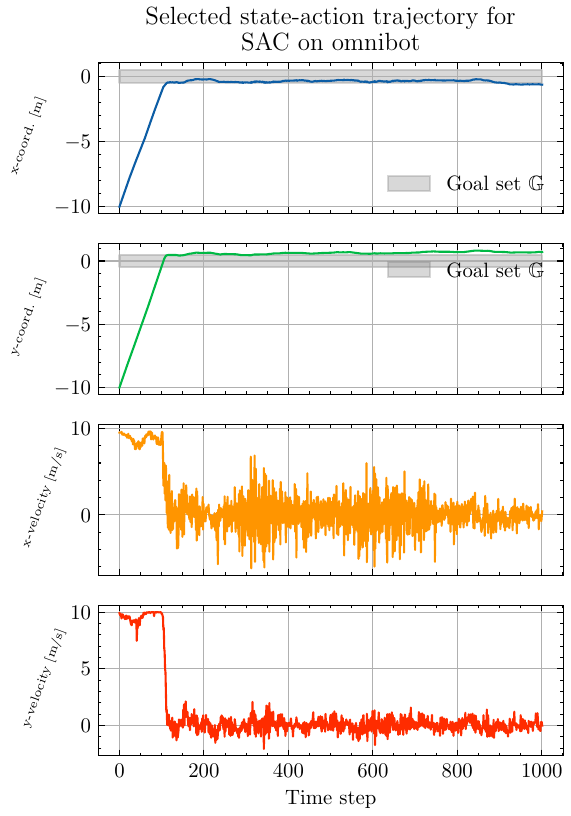}
\includegraphics[width=0.5\textwidth]{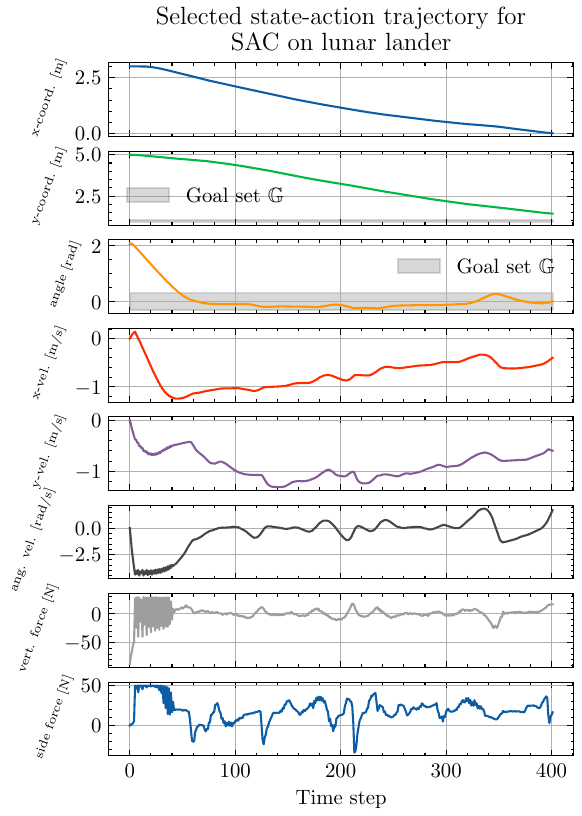}
\includegraphics[width=0.5\textwidth]{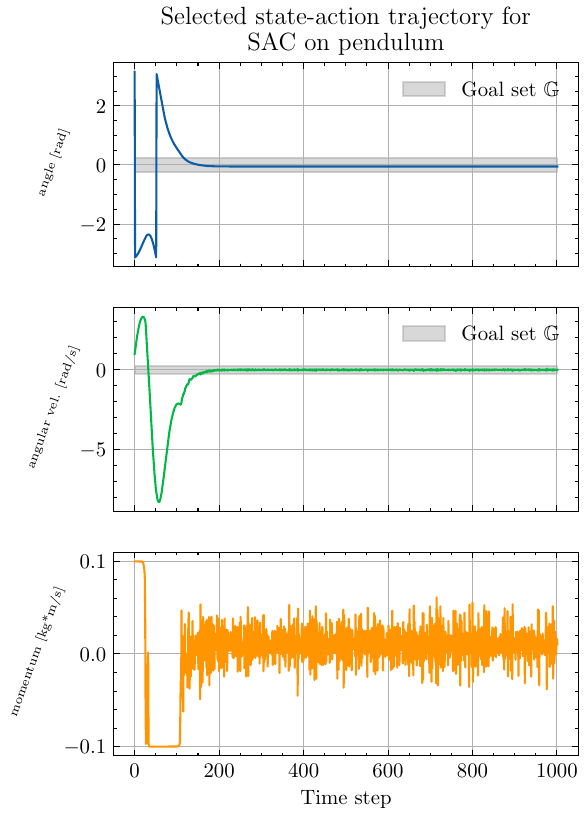}
\includegraphics[width=0.5\textwidth]{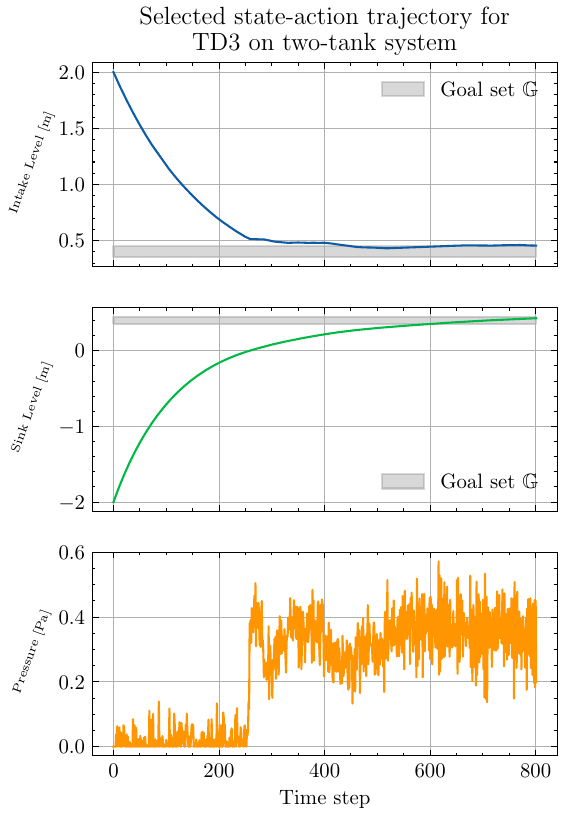}
\includegraphics[width=0.5\textwidth]{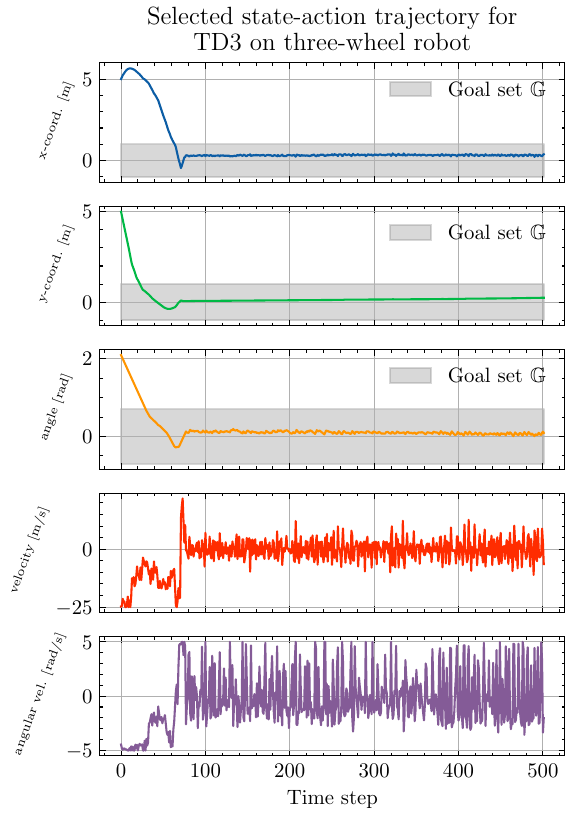}
\includegraphics[width=0.5\textwidth]{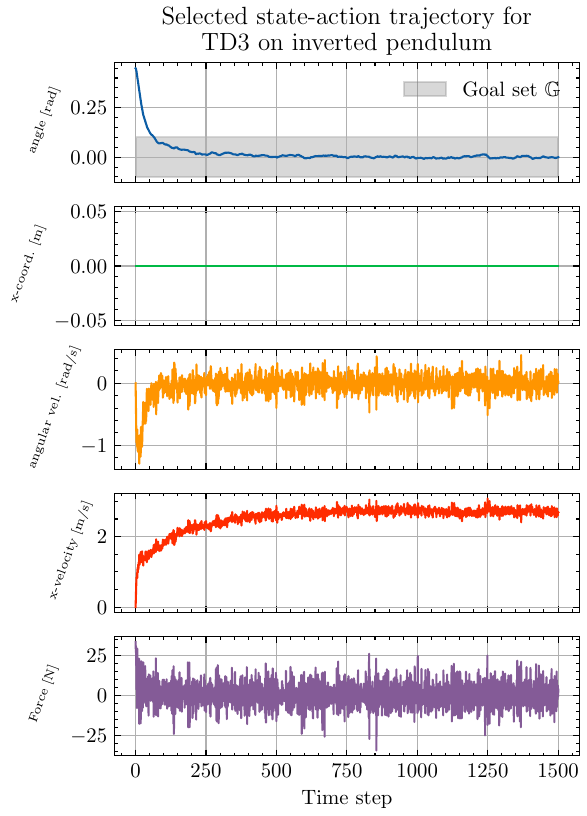}
\includegraphics[width=0.5\textwidth]{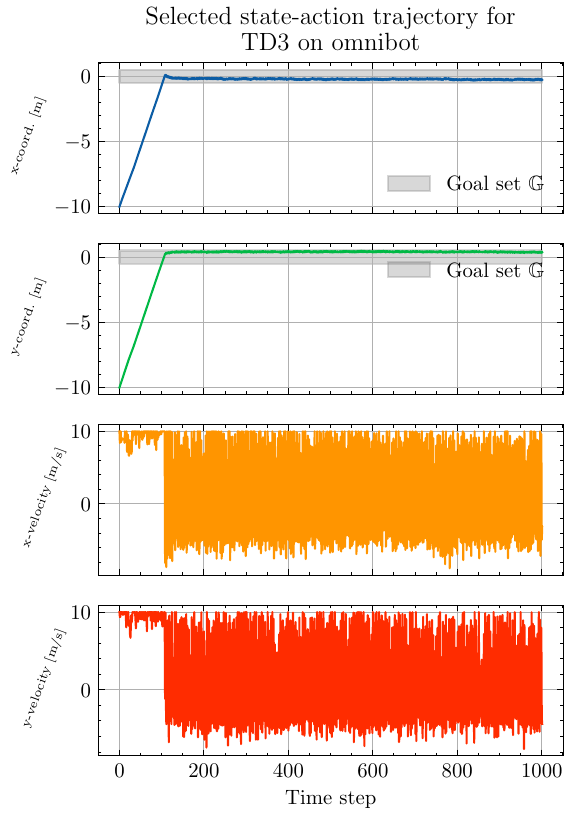}
\includegraphics[width=0.5\textwidth]{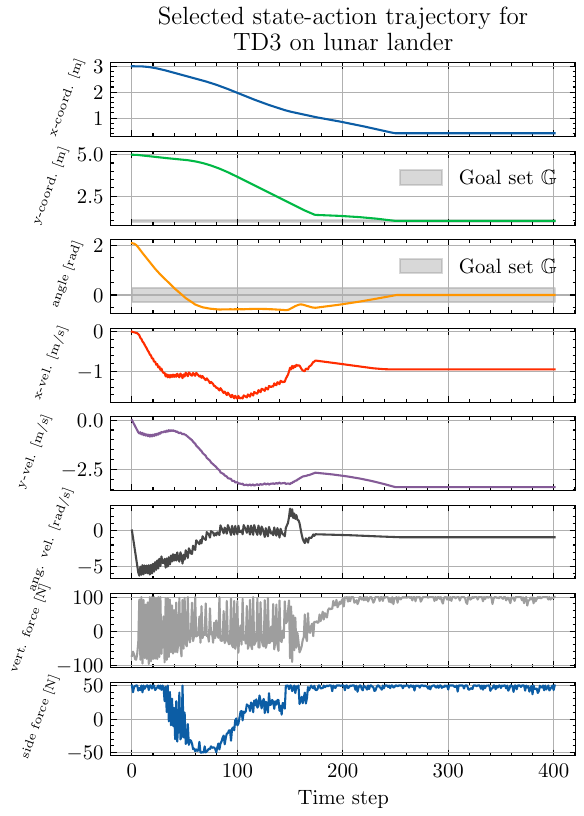}
\includegraphics[width=0.5\textwidth]{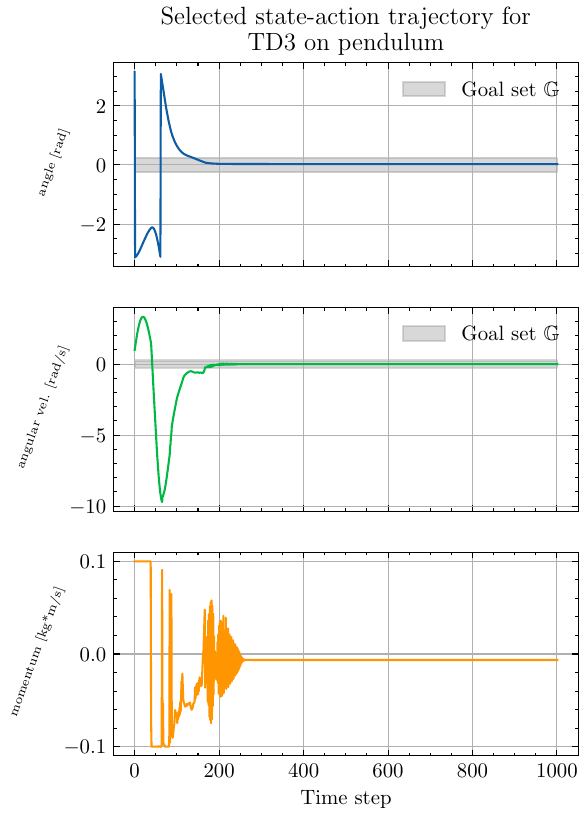}
\includegraphics[width=0.5\textwidth]{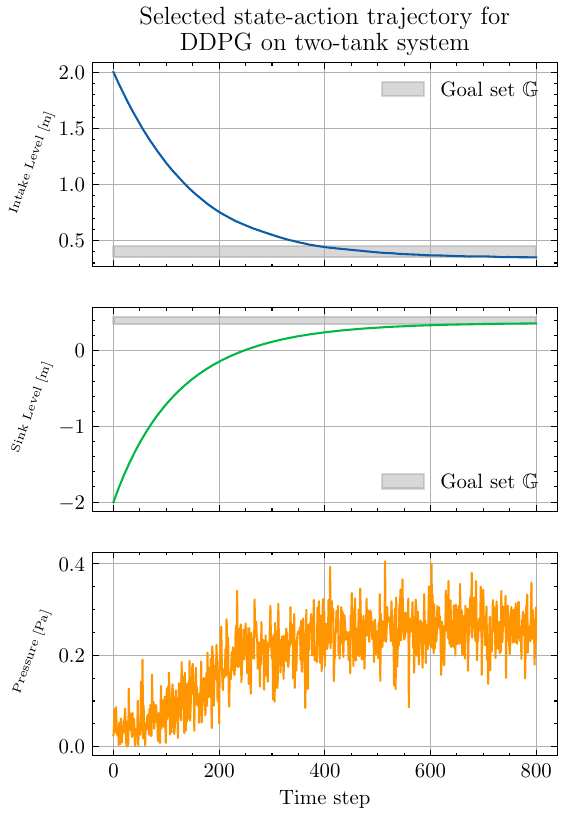}
\includegraphics[width=0.5\textwidth]{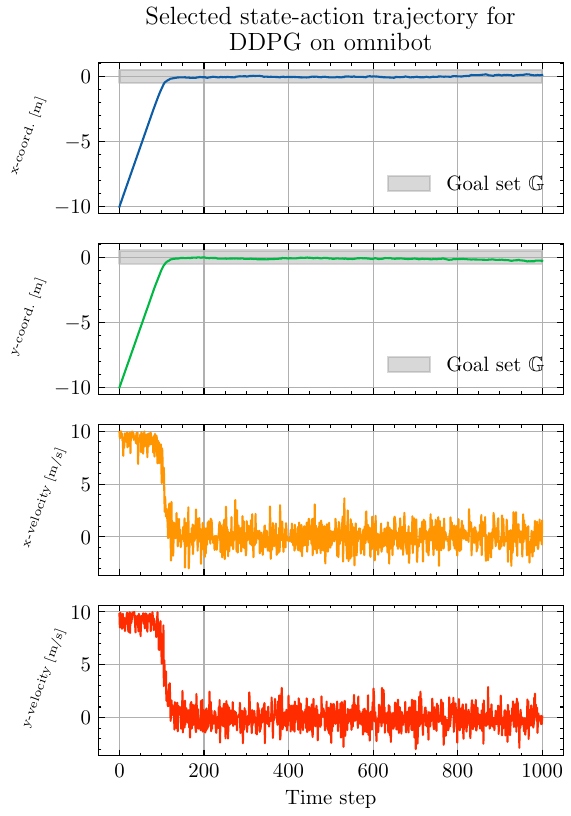}
\includegraphics[width=0.5\textwidth]{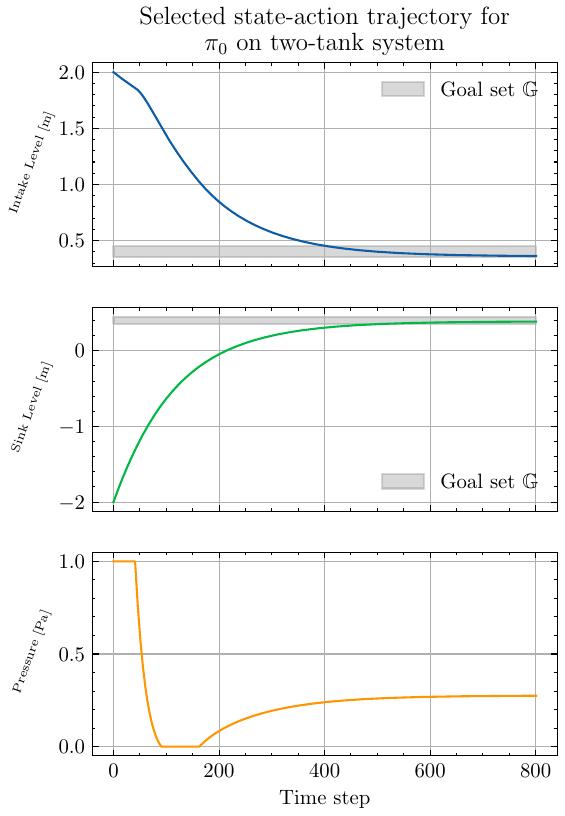}
\includegraphics[width=0.5\textwidth]{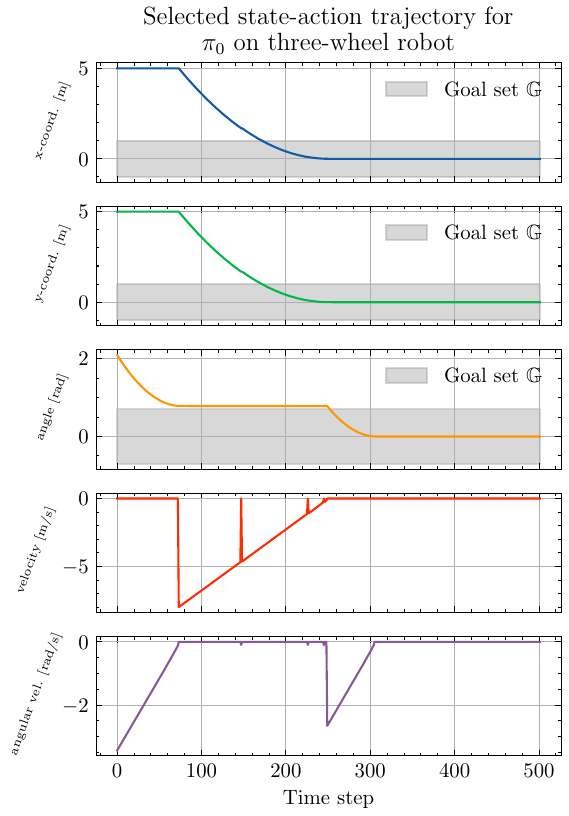}
\includegraphics[width=0.5\textwidth]{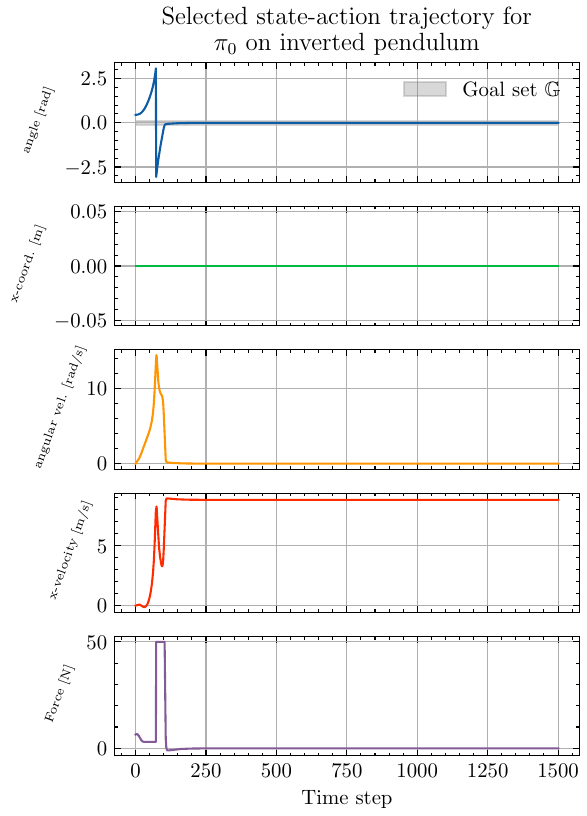}
\includegraphics[width=0.5\textwidth]{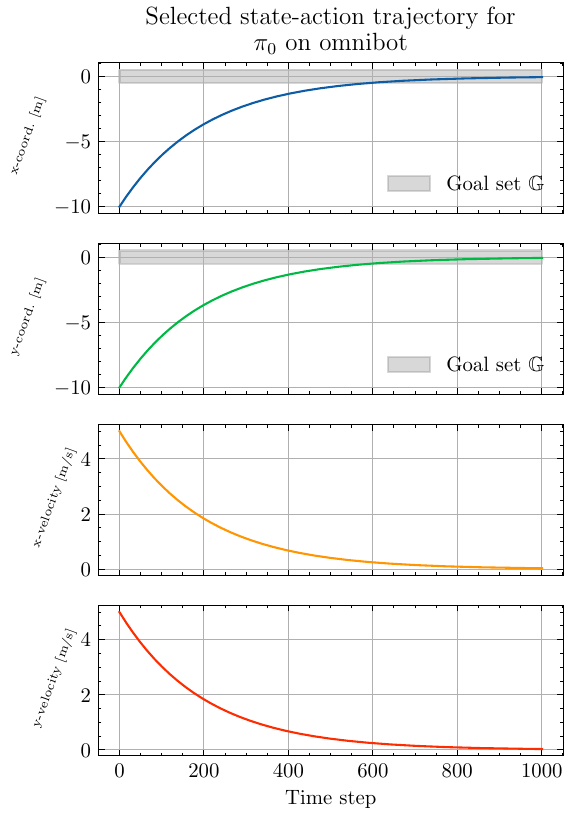}
\includegraphics[width=0.5\textwidth]{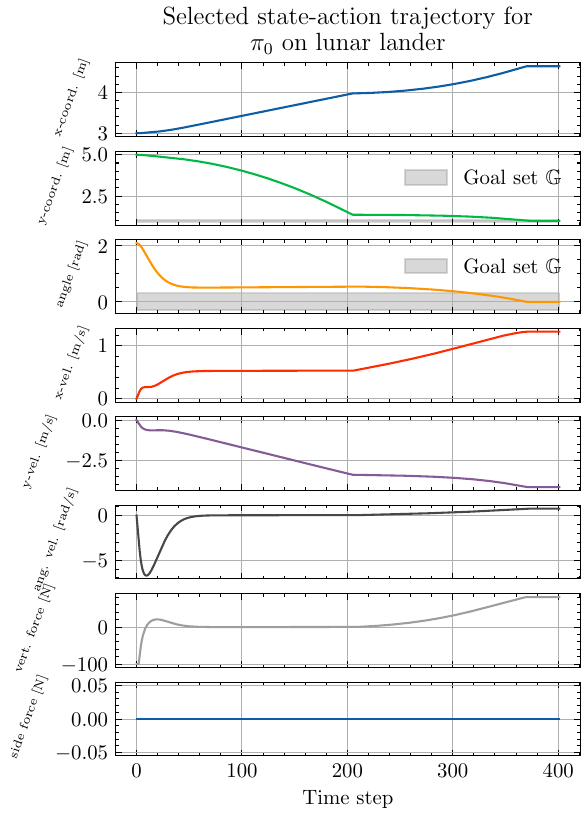}
\includegraphics[width=0.5\textwidth]{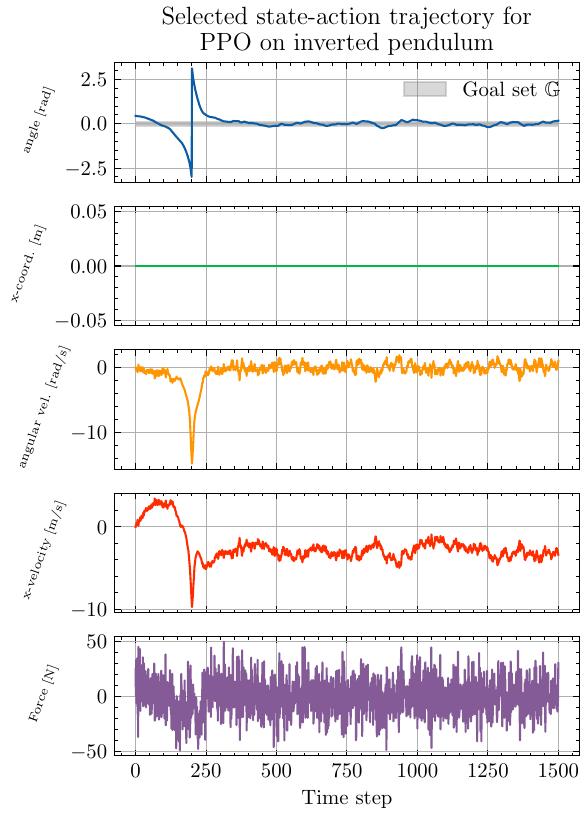}
\includegraphics[width=0.5\textwidth]{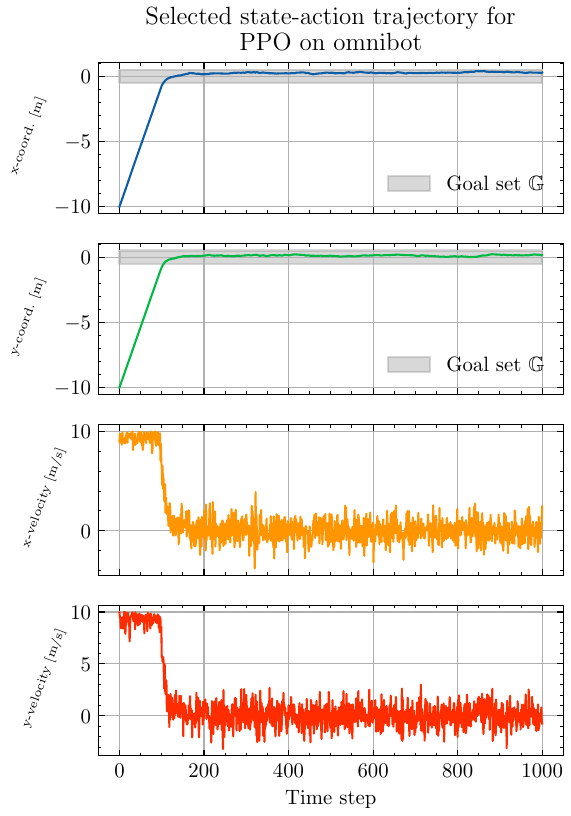}
\includegraphics[width=0.5\textwidth]{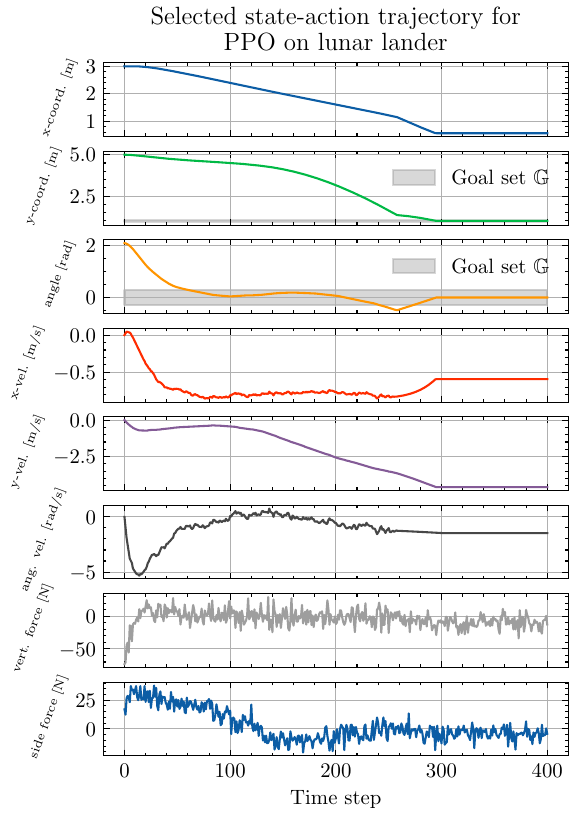}
\includegraphics[width=0.5\textwidth]{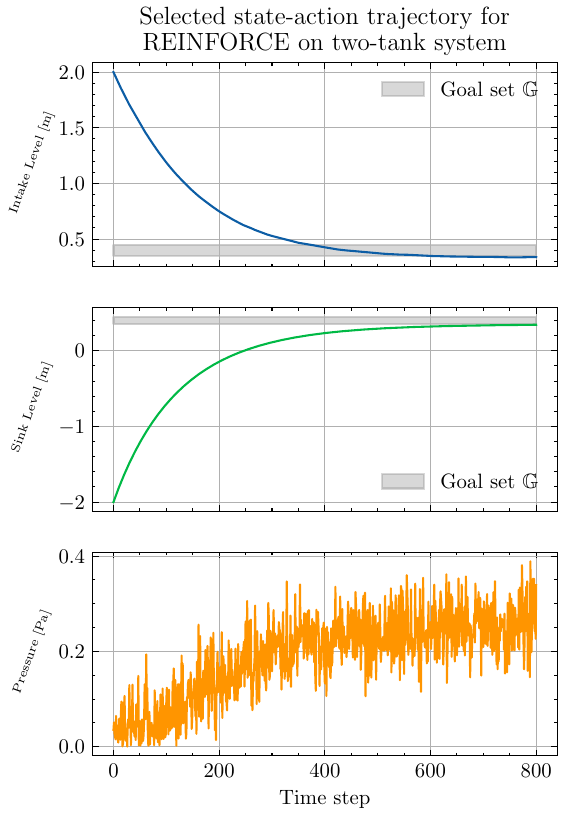}
\includegraphics[width=0.5\textwidth]{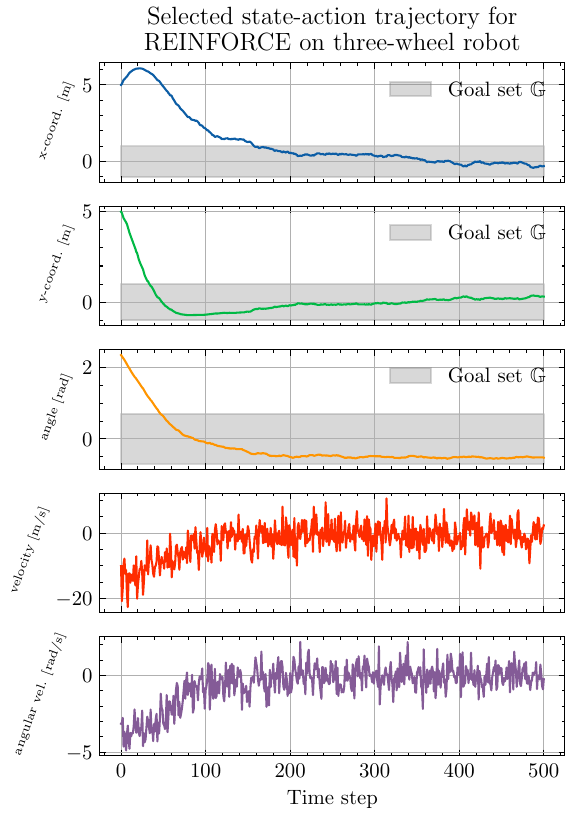}
\includegraphics[width=0.5\textwidth]{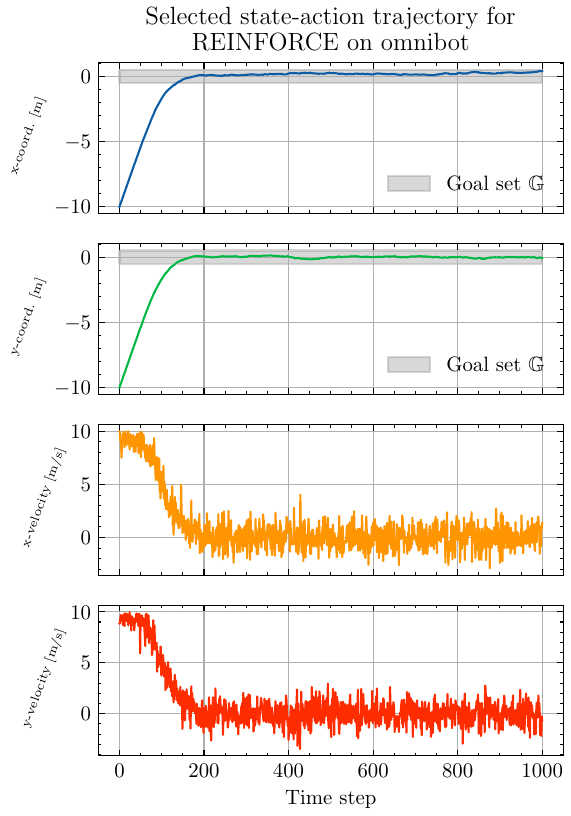}
\includegraphics[width=0.5\textwidth]{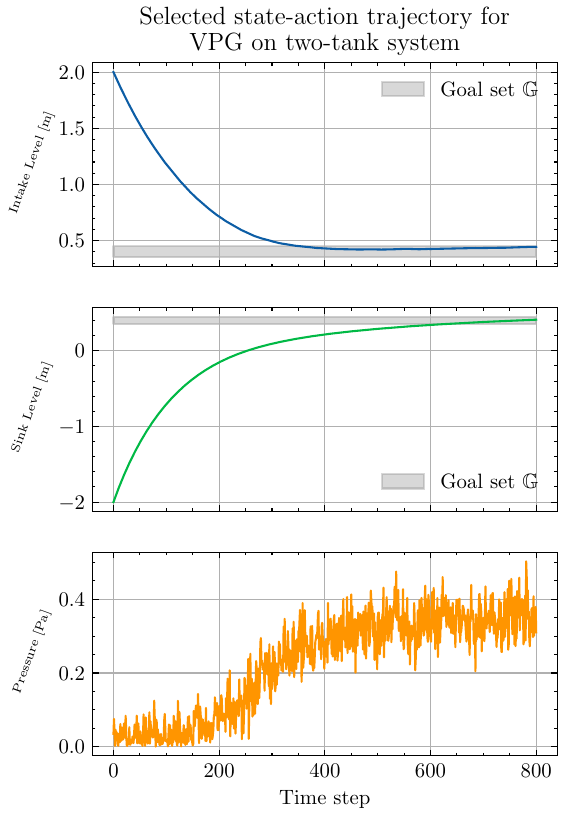}
\includegraphics[width=0.5\textwidth]{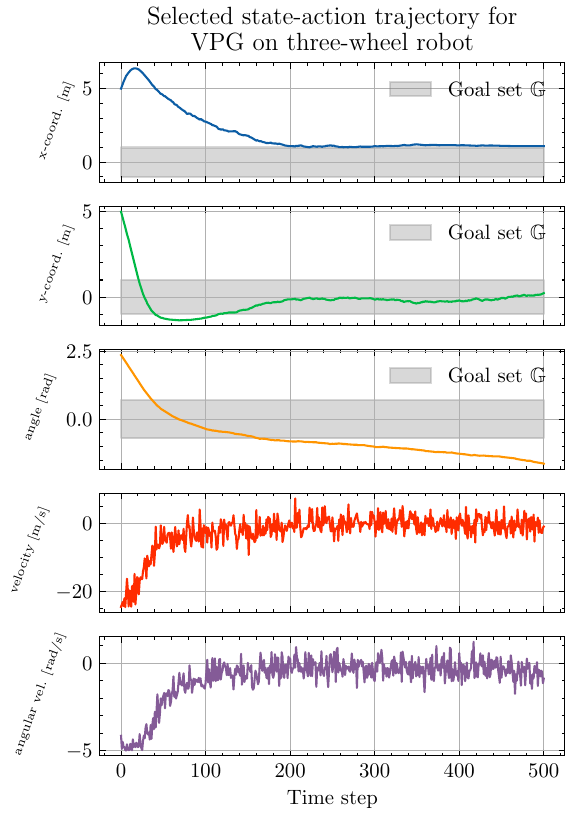}
\includegraphics[width=0.5\textwidth]{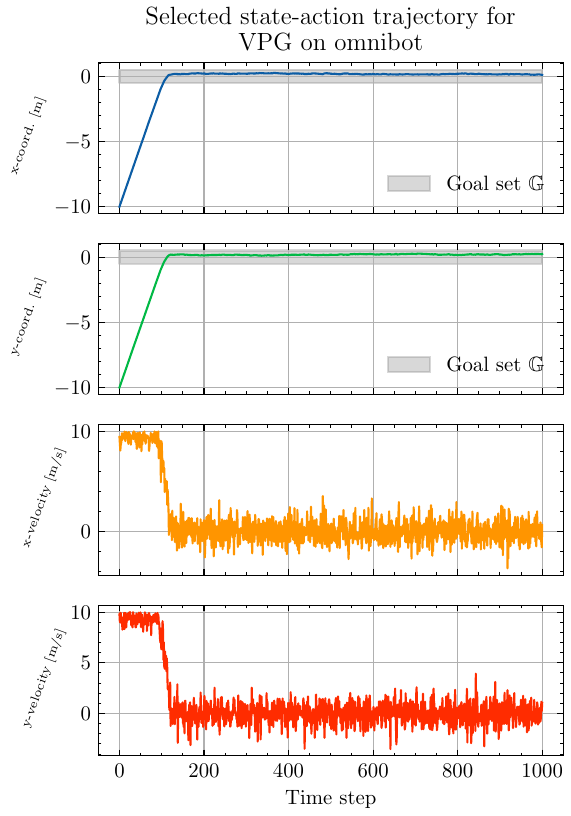}
\includegraphics[width=0.5\textwidth]{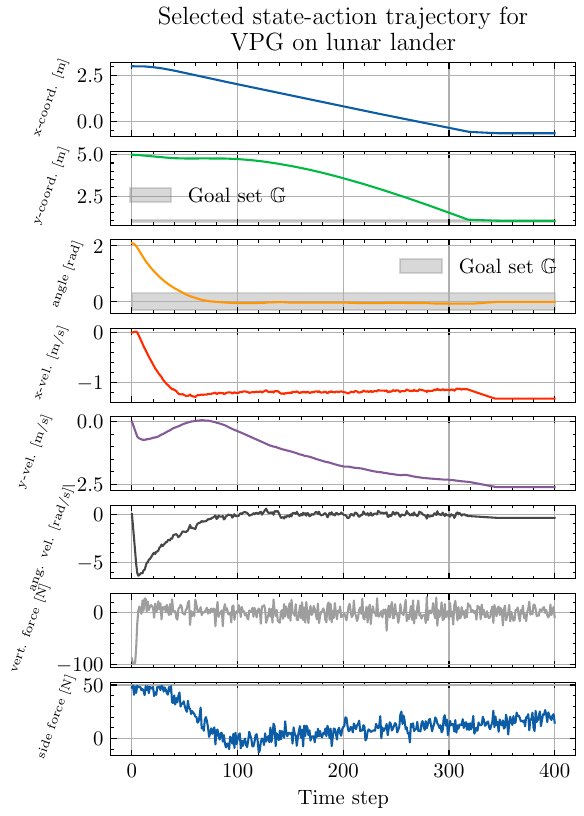}

\part{Code appendix}\label{part_codeappendix}
\section{Installation}
\label{sec_installation}
\subsection{Setup Python Virtual Environment}

Note that this project requires \textbf{Python version 3.10}.
\vspace{2\parskip}
Before running the experiments from the source, it is recommended to setup a virtual environment to isolate the project dependencies. Here's how to do it:

\begin{enumerate}
\item Install virtualenv if you haven't got it installed:
\footnotesize
\begin{verbatim}
pip install virtualenv
\end{verbatim}
\normalsize
\item Create a virtual environment:

Navigate to the root of the repo and run:
\footnotesize
\begin{verbatim}
virtualenv venv
\end{verbatim}
\normalsize
\item Activate the virtual environment:
\footnotesize
\begin{verbatim}
source venv/bin/activate
\end{verbatim}
\normalsize
\end{enumerate}

\subsection{Quick Start}

To run the experiments, follow the steps below after activating the virtual environment:

\begin{enumerate}
\item Install the required packages:

Update your system and install necessary dependencies:
\footnotesize
\begin{verbatim}
sudo apt update
sudo apt install -y libgeos-dev libqt5x11extras5 default-jre
\end{verbatim}
\normalsize 
Install the Python package in editable mode:
\footnotesize
\begin{verbatim}
pip install -e .
\end{verbatim}
\normalsize 
\item Execute the experiment script:

Run the reproduce.py script with the required parameters:
\footnotesize
\begin{verbatim}
python reproduce.py --agent={AGENT_NAME} --env={ENVIRONMENT_NAME}
\end{verbatim}
\normalsize 
Replace \{AGENT\_NAME\} and \{ENVIRONMENT\_NAME\} with the appropriate values from \Cref{sub_abbreviations}.
For example:
\footnotesize
\begin{verbatim}
python reproduce.py --agent=calf --env=pendulum
\end{verbatim}
\normalsize
The script will run the specified agent-environment configuration by executing a corresponding bash file located in the \texttt{bash/} directory. We use MLflow for experiment management. The command performs the following steps:

\begin{itemize}
\item Executes the specified run over 10 seeds by executing the corresponding bash file in the ./bash directory. Please note that this process can take a long time (several hours).
\item Extracts data from the MLflow experiment.
\item Generates plots representing the learning curve and the state-action trajectory with the greatest reward over all runs and stores the results into \texttt{goalagent/srccode\_data/plots/} directory.
\end{itemize}

If you want to rebuild the plots with already run experiment then you can use \texttt{--plots-only} flag.
\footnotesize
\begin{verbatim}
python reproduce.py --agent=calf --env=pendulum --plots-only
\end{verbatim}
\normalsize

\item To view the current progress of training, start the MLflow server:
\footnotesize
\begin{verbatim}
cd goalagent/srccode_data
mlflow ui
\end{verbatim}
\normalsize

\end{enumerate}

\section{Abbreviations}\label{sub_abbreviations}

\subsection{Environments}

\begin{itemize}
\item \texttt{--env=inverted\_pendulum} for the inverted pendulum 
\item \texttt{--env=pendulum} for the pendulum 
\item \texttt{--env=3wrobot\_kin} for the three-wheel robot
\item \texttt{--env=2tank} for the two-tank system
\item \texttt{--env=lunar\_lander} for the lunar lander
\item \texttt{--env=omnibot} for the omnibot (kinematic point)
\end{itemize}

\subsection{Agents}

\begin{itemize}
\item \texttt{--agent=calf} for our agent
\item \texttt{--agent=nominal} for $\policy_0$ 
\item \texttt{--agent=ppo} for Proximal Policy Optimization (PPO)
\item \texttt{--agent=sdpg} for Vanilla Policy Gradient (VPG)
\item \texttt{--agent=ddpg} for Deep Deterministic Policy Gradient (DDPG)
\item \texttt{--agent=reinforce} for REINFORCE
\item \texttt{--agent=sac} for Soft Actor Critic (SAC)
\item \texttt{--agent=td3} for Twin-Delayed DDPG (TD3)
\end{itemize}

\section{Code structure}
\label{sec_codestruct}

\scriptsize
\begin{verbatim} 
	goal-agent-cleanrl 
	|-- bash  -  bash files launching a specific pair agent-environment 
	|-- exman -  utils for aggregation of experiments and drawing plots. 
	|-- reproduce.py   -  main script for reproducing experiments 
	|-- presets  -  storage of configuration .yaml files for all experiments 
	|-- goalagent   -  separate run scripts for experiments
	|   
	|--calf   -  different implementations of our agent
	|--init.py
	|--agent_calf.py
	|--agent_calfq.py 
	|--utilities.py 
	|--env  -  environment specific functions and classes 
	|--utils  -  saving source code, observations, metrics evaluations
	|--calf.py   -  run script for value function variant of our agent 
	|--calfq.py  -  run script for action value function variant of our agent
	|--run_stable.py - entrypoint for PPO (our variant), VPG, DDGP, REINFORCE
	|--sac.py    - SAC run script 
	|--td3.py    - TD3 run script 
	|-- srccode  -  source code for PPO (our variant), VPG, DDGP, REINFORCE 
\end{verbatim}
\normalsize
\paragraph{./bash}

Within this subdirectory, you will find all the Bash scripts intended for running a specific pair of agent-environment configurations. You may also override seeds or any hyperparameters here to obtain different sets of runs. To override the seeds, look for the line that iterates over seeds such as 
\footnotesize
\begin{verbatim}
	+seed=0,1,2,3,...
\end{verbatim}
\normalsize	
or 
\footnotesize
\begin{verbatim}
	seed=0,1,2,3,...
\end{verbatim}
\normalsize	
and modify it as you wish.

\subsection{Comments on internal logic}

\textbf{Config files}: We utilize our own fork of Hydra to create reproducible pipelines, fully configured through a set of .yaml files that form a structured tree. To gain a deeper understanding of how our run scripts operate, we strongly recommend familiarizing yourself with Hydra and its core principles. All our run configurations are located in \texttt{presets/} folder. 
\vspace{2\parskip}
\textbf{Main script}: Every time you launch \texttt{reproduce.py} it will search and launch a respective bash script responsible for specific agent-environment pair, save learning curves and state-action trajectories. 
\vspace{2\parskip}
\textbf{Our agent}: Core logic of our agent is implemented in \texttt{./goalagent/calf} folder. 

\section{Hardware and software requirements}

\subsection{System Configuration}

All experiments were conducted on the following machine
\begin{itemize}
\item Operating system: Ubuntu 24.04 LTS
\item GPU: 2x GeForce RTX 3090
\item CPU: AMD Ryzen Threadripper 3990X 64-Core Processor
\item RAM: 128 GB
\end{itemize}

\subsection{Minimum System Requirements}

\begin{itemize}
\item RAM: 32GB or higher
\item CPU Cores: At least 16 logical cores
\item Storage space: 250 GB for all experiment data
\item GPU: Nvidia GPU with minimum of 10 GB of memory
\end{itemize}

\subsection{Software Requirements}
Although the installation steps are detailed in \Cref{sec_installation}, we formally provide a summary of the dependencies here.
\subsubsection{Python Dependencies}
All necessary Python dependencies are listed in the \texttt{pyproject.toml} file.
\subsubsection{Non-Python Dependencies}
Non-Python Dependencies can be installed using the following command:
\footnotesize
\begin{verbatim}
sudo apt install -y libgeos-dev libqt5x11extras5 default-jre
\end{verbatim}
\normalsize



\pagebreak

\printbibliography


\end{document}